\definecolor{bestc}{HTML}{FFE6E6}  
\definecolor{secondc}{HTML}{E6F7FF} 
\definecolor{oursc}{HTML}{F2F2F2}   
\definecolor{thirdc}{HTML}{E6FFE6}
\newcommand{\third}[1]{\cellcolor{thirdc}#1}
\newcommand{\best}[1]{\cellcolor{bestc}\textbf{#1}}
\newcommand{\second}[1]{\cellcolor{secondc}#1}
\definecolor{cRed}{RGB}{205, 92, 92}    
\definecolor{cOrange}{RGB}{255, 165, 0} 
\definecolor{cBlue}{RGB}{65, 105, 225}  
\definecolor{cGreen}{RGB}{34, 139, 34}  
\newcommand{\cL}{\textcolor{cRed}{\textbf{L}}}
\newcommand{\cH}{\textcolor{cOrange}{\textbf{H}}}
\newcommand{\cR}{\textcolor{cBlue}{\textbf{R}}}
\newcommand{\cS}{\textcolor{cGreen}{\textbf{S}}}
\theoremstyle{plain}
\newtheorem{theorem}{Theorem}[section]
\newtheorem{proposition}[theorem]{Proposition}
\theoremstyle{definition}
\newtheorem{assumption}[theorem]{Assumption}
\theoremstyle{remark}
\newtheorem{remark}[theorem]{Remark}
\icmltitlerunning{Uncertainty-Aware Diffusion Bridge Model}
\begin{document}
\twocolumn[
  \icmltitle{Unifying Heterogeneous Degradations: Uncertainty-Aware Diffusion Bridge Model for All-in-One Image Restoration}

  \icmlsetsymbol{equal}{*}

  \begin{icmlauthorlist}
    \icmlauthor{Luwei Tu}{sysu}
    \icmlauthor{Jiawei Wu}{sysu}
    \icmlauthor{Xing Luo}{pcl}
    \icmlauthor{Zhi Jin}{sysu,fire}
  \end{icmlauthorlist}

  \icmlaffiliation{sysu}{School of Intelligent Systems Engineering, Shenzhen Campus of Sun Yat-sen University, Shenzhen, Guangdong 518107, China}
  \icmlaffiliation{pcl}{Department of Frontier Research, Peng Cheng Laboratory, Shenzhen, Guangdong 518055, China}
  \icmlaffiliation{fire}{Guangdong Provincial Key Laboratory of Fire Science and Technology, Guangzhou 510006, China}

  \icmlcorrespondingauthor{Zhi Jin}{jinzh26@mail.sysu.edu.cn}

  \icmlkeywords{Machine Learning, ICML, Image Restoration, Diffusion Models}

  \vskip 0.3in
]



\printAffiliationsAndNotice{}  

\begin{abstract}

All-in-One Image Restoration (AiOIR) faces the fundamental challenge in reconciling conflicting optimization objectives across heterogeneous degradations. Existing methods are often constrained by coarse-grained control mechanisms or fixed mapping schedules, yielding suboptimal adaptation. To address this, we propose an Uncertainty-Aware Diffusion Bridge Model (UDBM), which innovatively reformulates AiOIR as a stochastic transport problem steered by pixel-wise uncertainty. By introducing a relaxed diffusion bridge formulation which replaces the strict terminal constraint with a relaxed constraint, we model the uncertainty of degradations while theoretically resolving the drift singularity inherent in standard diffusion bridges. Furthermore, we devise a dual modulation strategy: the noise schedule aligns diverse degradations into a shared high-entropy latent space, while the path schedule adaptively regulates the transport trajectory motivated by the viscous dynamics of entropy regularization. By effectively rectifying the transport geometry and dynamics, UDBM achieves state-of-the-art performance across diverse restoration tasks within a single inference step. Our code is available at \url{https://github.com/Jabruson/UDBM}.

\end{abstract}

\section{Introduction}

AiOIR aims to restore high-quality images from diverse and potentially unknown degradations within a unified model.  A fundamental challenge arises from the conflicting optimization objectives inherent to heterogeneous degradations. For example, deblurring requires amplifying high-frequency details, whereas deraining involves suppressing structured streaks. To reconcile this, existing methods generally fall into two paradigms. The first paradigm focuses on modeling degradation heterogeneity through control modulation. Methods such as prompt-learning~\cite{potlapalli2023promptir,jiang2024autodir,rajagopalan2025gendeg} and Mixture-of-Experts (MoEs)~\cite{zhang2024efficient,zamfir2025complexity} selectively activate task-specific weights based on control signals. However, these approaches overlook cross-task commonalities and discretize the continuous degradation space, limiting their representational capacity. Specifically, prompt-based methods often compress spatially varying degradations into compact embeddings, struggling to capture fine-grained corruption patterns. Similarly, MoE architectures rely on discrete routing mechanisms, which are insufficient for modeling continuously varying degradations. Furthermore, these methods are sensitive to imprecise control signals, where erroneous predictions induce incorrect weight activation, resulting in erroneous restoration flows (Fig.~\ref{fig:motivation}(a)).
\label{sec:intro}

  \begin{figure}[t!]
    \centering
    \includegraphics[width=0.99\columnwidth]{./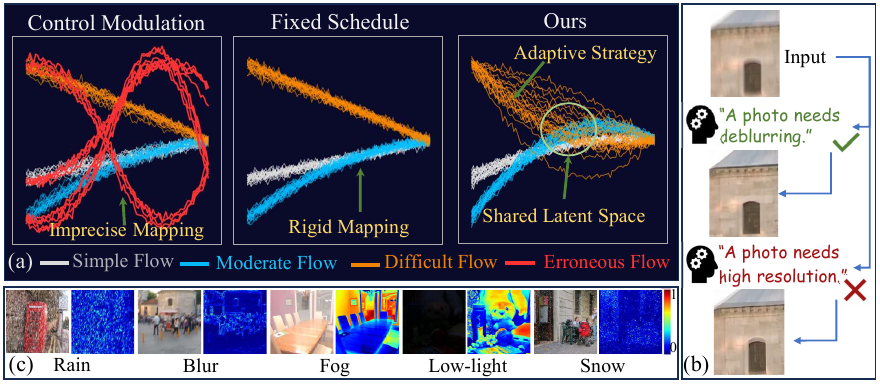}
    \caption{Motivation and Analysis. (a) Schematic comparison of different paradigms. Fixed schedule methods employ rigid mappings regardless of degradation severity, while control modulation methods suffer from erroneous flows caused by imprecise conditional. In contrast, our method simultaneously accounts for degradation heterogeneity and shared information. (b) Imposing incorrect prompts in AutoDIR~\cite{jiang2024autodir} induces artifacts. (c) Uncertainty Visualization. Uncertainty maps~\cite{zhang2025uncertainty} naturally capture the severity of diverse degradations.}
    \label{fig:motivation}
  \end{figure}
\vskip -0.2cm

Conversely, the second paradigm emphasizes learning commonality of degradations by reducing the distributional divergence of the inputs. These methods~\cite{zheng2024selective,li2025foundir} typically rely on fixed schedules to simplify distribution mapping (Fig.~\ref{fig:motivation}(a) of fixed schedule). However, heterogeneous degradations exhibit distinct statistical properties.  A fixed schedule inevitably compromises adaptability, which is suboptimal for complex corruptions.

To reconcile degradation heterogeneity with restoration commonality, we leverage pixel-wise uncertainty to act as a unified representation for the severity of heterogeneous degradations (Fig.~\ref{fig:motivation}(c)) to adaptively steer the restoration process. Specifically, high uncertainty regions require detail restoration, whereas low uncertainty regions need structural preservation. We propose an \textbf{Uncertainty-Aware Diffusion Bridge Model (UDBM)}, which unifies AiOIR as a stochastic transport problem steered by uncertainty. To model the transport dynamics, UDBM introduces a \emph{relaxed diffusion bridge} that replaces the strict terminal constraint with a relaxed constraint. This relaxation accommodates degradation stochasticity while ensuring Lipschitz continuity of the drift to resolve the drift singularity in standard diffusion bridges. Building upon this, we explicitly decouple the transport dynamics into a deterministic path term and a stochastic noise term. The \emph{noise schedule} constructs a shared high-entropy latent space to align heterogeneous degradation manifolds and facilitates commonality learning. The \emph{path schedule} dynamically regulates the transport trajectory motivated by the viscous dynamics of entropy regularization, allocating dense refinement to high-uncertainty regions while facilitating efficient transition for confident regions. The optimized transport dynamics enables robust AiOIR within a single inference step.

In summary, our main contributions are as follows:

\begin{itemize}
    \item We propose a relaxed diffusion bridge formulation to explicitly model the inherent uncertainty of heterogeneous degradations in AiOIR while theoretically resolves the drift singularity in the standard diffusion bridge.

    \item We devise a dual uncertainty-modulation strategy that reconciles the conflict between restoration heterogeneity and commonality. By interpreting the transport process through the lens of entropy-regularized optimal transport, we derive a path schedule that mimics viscous dynamics for adaptive refinement, and a noise schedule that aligns diverse degradation manifolds into a unified high-entropy latent space.

    \item We enable high-fidelity single-step inference for AiOIR through the joint rectification of transport geometry and dynamics. The proposed method achieves state-of-the-art performance across diverse restoration tasks, outperforming existing diffusion-based methods in both computational efficiency and generalization.
\end{itemize}



\section{Related Work}
\label{sec:related}

\subsection{Diffusion Bridge for Image Restoration}
Let $\{\mathbf{x}_t\}_{t\in[0,T]}$ be a continuous-time stochastic process, where typically $T=1$, governed by the Stochastic Differential Equation (SDE). Diffusion bridge is obtained by conditioning the diffusion process on boundary distributions at both endpoints for initial distribution $p_0$ and terminal distribution $p_1$. A classical construction of such a conditioned process is through \emph{Doob's $h$-transform}~\cite{doob1984classical,rogers2000diffusions}, which modifies the drift of the original SDE to satisfy the boundary marginals:
\begin{equation}
    d\mathbf{x}_t
    =
    \underbrace{
        \left[ \mathbf{f}(\mathbf{x}_t, t) + g(t)^2 \nabla_x{\log h(\mathbf{x}_t, t)} \right] dt
    }_{\text{Drift term}}
    +
    \underbrace{ g(t)\,d\mathbf{w}_t }_{\text{Diffusion term}},
    \label{eq:bridge_sde_labeled_full}
\end{equation}
where $\mathbf{f}$ is the drift function, $g(t)$ is diffusion coefficient, $\mathbf{w}_t$ is Brownian motion, and $h(\mathbf{x}_t, t)$ represents the harmonic function which steers the diffusion towards the target distribution. Building on this, pioneering works like $\text{I}^2$SB~\cite{liu20232}, DDBM~\cite{zhou2023denoising}, and IR-SDE~\cite{luo2023image} construct various diffusion bridges to model the transition probability between degraded and clean image pairs. Meanwhile, IRBridge~\cite{wang2025irbridge} integrates pre-trained generative priors to enhance the perceptual fidelity of the restored images. 

However, these methods primarily focus on specific tasks, neglecting the heterogeneity in AiOIR. 
Standard diffusion bridges typically employ rigid dynamics that struggle to reconcile diverse degradations. 
Moreover, their \emph{strict terminal constraints} overlook the inherent uncertainty of degradations, exacerbating drift singularities that destabilize the transport trajectory.

\subsection{All-in-One Image Restoration}
Early AiOIR methods explored task-specific encoders or contrastive learning to distinguish degradation types~\cite{li2022all,zhang2023ingredient}.
To address heterogeneity in degradation, recent approaches predominantly employ control mechanisms. 
Specifically, prompt-learning frameworks~\cite{potlapalli2023promptir,jiang2024autodir,ma2023prores,luo2023controlling,cui2025adair,cuibio,wu2025beyond}, {including those integrated into diffusion models}~\cite{luo2025visual,rajagopalan2025gendeg}, rely on semantic descriptors or learnable visual prompts to selectively activate task-specific weights, while MoE~\cite{zamfir2025complexity} utilize discrete routing to modulate the network. However, these coarse-grained mechanisms typically lack the spatial granularity required for real-world scenarios, where degradations are continuous and spatially non-uniform.

Alternatively, recent methods~\cite{zheng2024selective,li2025foundir} departs from explicit control mechanism  by reducing the distinguishability of heterogeneous inputs to simplify distribution mapping. However, such methods typically utilize a {fixed diffusion schedule}, which is insufficient for the heterogeneity of degradations. In contrast, proposed methods adaptively regulate transport dynamics, effectively reconciling the heterogeneity with commonality for AiOIR.





\section{Methodology}
\label{sec:method}
In this section, we first formulate AiOIR as a stochastic transport problem. Then we identify two theoretical problems. Finally, we instantiate our theories into the UDBM.
\subsection{Problem Formulation}
\label{ssec:problem_formulation}

Let $\mathbf{x}_0 \equiv \mathbf{x}_{hq} \sim p_{hq}$ denote a clean image sampled from the high-quality image distribution $p_{hq}$. In All-in-One (AiO) scenarios, $\mathbf{x}_0$ is mapped to a degraded observation $\mathbf{x}_{lq} \sim p_{lq}$ via a stochastic transition kernel. Different from task-specific image restoration, the mapping is inherently \emph{stochastic}: a single $\mathbf{x}_0$ can yield diverse degraded realizations $\mathbf{x}_{lq}$ due to heterogeneous and potentially mixed degradation types. To enable AiOIR, we introduce a diffusion bridge $\{\mathbf{x}_t\}_{t \in [0, 1]}$, which constructs a transport path between $p_{hq}$ and $p_{lq}$. Given the intrinsic stochasticity in AiO, we argue that the transport dynamics should be steered by the pixel-wise uncertainty $\mathbf{u}$, where $\mathbf{u}$ can be estimated by previous works~\cite{kendall2017uncertainties,upadhyay2022bayescap,zhang2025uncertainty} (detailed in Appendix~\ref{sec:supp_uncertainty}).

\subsection{Relaxed Diffusion Bridge}
Due to the stochastic degradation processes addressed by AiOIR, the observed $\mathbf{x}_{lq}$ is subject to uncertainty, which reflects the severity of degradation. However, standard diffusion bridges typically impose a Dirac constraint $\delta(\cdot)$ on the terminal state, i.e., $p(\mathbf{x}_1)=\delta(\mathbf{x}_1-\mathbf{x}_{lq})$. Such a strict terminal constraint ignores this stochasticity and mathematically induces a \emph{drift singularity} as $t \to 1$. To analyze this behavior rigorously, we first state the conditions for the underlying diffusion process.

\begin{assumption}[]
\label{ass:regularity}
We assume the drift function $\mathbf{f}(\mathbf{x}, t)$ in Eq.~\eqref{eq:bridge_sde_labeled_full} is Lipschitz continuous with respect to $\mathbf{x}$ and bounded for all $t \in [0, 1]$, and the diffusion coefficient $g(t)$ is strictly positive and bounded.
\end{assumption}

Under Assumption~\ref{ass:regularity}, the singularity arises from the terminal constraint (details refer to Appendix~\ref{sec:general_drift_derivation}):
\begin{proposition}[]
\label{prop:singularity}
For a strict terminal constraint $p(\mathbf{x}_1)=\delta(\mathbf{x}_1-\mathbf{x}_{lq})$ in standard diffusion bridge, the harmonic function $h(\mathbf{x}_t, t)$ in Eq.~\eqref{eq:bridge_sde_labeled_full} is instantiated as $h_{\text{strict}}(\mathbf{x}_t, t) \triangleq p(\mathbf{x}_1 = \mathbf{x}_{lq} \mid \mathbf{x}_t)$. Under this setting, the expected magnitude of the score function $\|\mathbf{x}_{lq} - \mathbf{x}_t\|$ scales as $\mathcal{O}(\sqrt{1-t})$, while the time denominator vanishes as $\mathcal{O}(1-t)$. This convergence mismatch causes the drift term to diverge as $t \to 1$:
\vskip -0.2in
\begin{equation}
\label{eq:strict_singularity}
\begin{aligned}
  \left\|
    g(t)^2 \nabla_{\mathbf{x}_t} \!\log h_{\text{strict}}(\mathbf{x}_t,t)
    \right\| \!\sim\! \left\| \frac{\mathbf{x}_{lq} - \mathbf{x}_t}{1-t}
    \right\|\!=\!
    O\!\left(\frac{1}{\sqrt{1-t}}\right).
\end{aligned}
\end{equation}
\end{proposition}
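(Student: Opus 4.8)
The plan is to expand $h_{\text{strict}}(\mathbf{x}_t,t)=p(\mathbf{x}_1=\mathbf{x}_{lq}\mid\mathbf{x}_t)$ as the forward transition density of the underlying unconditioned diffusion and exploit its short-time Gaussian structure. Under Assumption~\ref{ass:regularity}, with $\mathbf{f}$ Lipschitz and bounded and $g$ bounded and strictly positive, a standard small-time expansion (Euler--Maruyama, or Varadhan-type heat-kernel estimates) gives, for time-to-go $\tau \triangleq 1-t \to 0$,
\[
p(\mathbf{x}_1\mid\mathbf{x}_t)=\mathcal{N}\!\left(\mathbf{x}_1;\,\mathbf{x}_t+\mathbf{f}(\mathbf{x}_t,t)\,\tau,\ \sigma^2(t)\,I\right)\bigl(1+o(1)\bigr),\quad \sigma^2(t)=\int_t^1 g(s)^2\,ds = g(t)^2\,\tau + o(\tau).
\]
First I would substitute $\mathbf{x}_1=\mathbf{x}_{lq}$, take logarithms, and isolate the quadratic term $-\|\mathbf{x}_{lq}-\mathbf{x}_t-\mathbf{f}(\mathbf{x}_t,t)\tau\|^2/(2\sigma^2(t))$, whose gradient dominates the normalization term as $\tau\to0$.

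Second, I would differentiate with respect to $\mathbf{x}_t$. Writing the conditional mean as $\mathbf{m}(\mathbf{x}_t)=\mathbf{x}_t+\mathbf{f}(\mathbf{x}_t,t)\tau$, the chain rule yields $\nabla_{\mathbf{x}_t}\log h_{\text{strict}}=\sigma^{-2}(t)\,(\nabla_{\mathbf{x}_t}\mathbf{m})^{\top}(\mathbf{x}_{lq}-\mathbf{m})$. Because $\mathbf{f}$ is Lipschitz, its Jacobian is bounded, so $\nabla_{\mathbf{x}_t}\mathbf{m}=I+O(\tau)$ and the numerator is $\mathbf{x}_{lq}-\mathbf{x}_t+O(\tau)$. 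Multiplying by $g(t)^2$ and using $\sigma^2(t)=g(t)^2\tau\,(1+o(1))$ collapses the prefactor and yields the drift correction $\frac{\mathbf{x}_{lq}-\mathbf{x}_t}{1-t}(1+o(1))$, matching the central expression of the statement and establishing the ``time denominator vanishes as $\mathcal{O}(1-t)$'' claim.

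Third --- and this is the crux --- I would control the magnitude of the numerator $\|\mathbf{x}_{lq}-\mathbf{x}_t\|$ along a typical bridge trajectory. Since the $h$-transformed process is pinned at $\mathbf{x}_{lq}$ at $t=1$, near the endpoint it behaves like a generalized Brownian bridge: its conditional variance $\mathrm{Var}(\mathbf{x}_t\mid\mathbf{x}_1=\mathbf{x}_{lq})$ scales like $g(t)^2(1-t)$ to leading order (exactly as the $t(1-t)$ law of the classical bridge degenerates to $(1-t)$ as $t\to1$). Hence the expected magnitude obeys $\mathbb{E}\|\mathbf{x}_{lq}-\mathbf{x}_t\|=\mathcal{O}(\sqrt{1-t})$, the first scaling claim. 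Combining with the $(1-t)^{-1}$ denominator gives
\[
\left\| g(t)^2\,\nabla_{\mathbf{x}_t}\log h_{\text{strict}}(\mathbf{x}_t,t)\right\|\sim\left\|\frac{\mathbf{x}_{lq}-\mathbf{x}_t}{1-t}\right\|=\mathcal{O}\!\left(\frac{\sqrt{1-t}}{1-t}\right)=\mathcal{O}\!\left(\frac{1}{\sqrt{1-t}}\right),
\]
which diverges as $t\to1$ and completes the argument.

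I expect the main obstacle to lie in two coupled places: justifying the short-time Gaussian kernel with uniform control of the $o(\tau)$ remainder (so that neither the Lipschitz-drift Jacobian correction nor the higher cumulants of the transition density spoil the leading order), and establishing the $\mathcal{O}(\sqrt{1-t})$ bridge-variance estimate without circularity, since the bridge dynamics are precisely the object under study. The cleanest route for the latter is to derive the conditional variance of the pinned diffusion directly from the Gaussian transition kernel via Bayes' rule for $p(\mathbf{x}_t\mid\mathbf{x}_1)$ --- where the tight $\mathcal{O}(\tau)$ likelihood variance dominates the $\mathcal{O}(1)$ prior variance, yielding a posterior variance of order $g(t)^2(1-t)$ --- rather than re-deriving it from the singular drift. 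This keeps the two scaling estimates logically independent before they are combined.
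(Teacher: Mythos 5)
Your proposal is correct and follows essentially the same route as the paper's Appendix~\ref{sec:general_drift_derivation} derivation: a short-time Gaussian expansion of the transition kernel with $\bar{\sigma}_t^2=\int_t^1 g(s)^2\,ds\sim g(1)^2(1-t)$, a chain-rule differentiation using the near-identity Jacobian $\nabla_{\mathbf{x}_t}\bigl(\mathbf{x}_t+\mathbf{f}(\mathbf{x}_t,t)(1-t)\bigr)=\mathbf{I}+\mathcal{O}(1-t)$, and the combination of the $\mathcal{O}(\sqrt{1-t})$ diffusive numerator with the $\mathcal{O}(1-t)$ denominator to get the $\mathcal{O}\bigl(1/\sqrt{1-t}\bigr)$ divergence. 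The one point where you are more careful than the paper is the numerator estimate: the paper transfers the RMS Brownian displacement of the \emph{unconditioned} forward process (its Eq.~\eqref{eq:delta_diff}) to the pinned trajectory without comment, whereas you flag the circularity risk and propose deriving the bridge's conditional variance via Bayes' rule from the Gaussian kernel --- a refinement of, not a departure from, the paper's argument.
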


Approximating this unbounded drift forces the network to fit divergent targets, resulting in over-smoothed results and limited generalization. While prior works~\cite{liu20232,zhou2023denoising} employ heuristics to mitigate instability, they struggle to resolve the inherent singularity. Crucially, this singularity exacerbates optimization conflicts in AiOIR: the stochastic deviation $\|\mathbf{x}_{lq} - \mathbf{x}_t\|$ varies significantly across heterogeneous degradations. This inter-task discrepancy is further amplified by the singularity, making the AiO mapping difficult for a single network to learn.

To explicitly model the stochasticity in AiO while addressing the singularity, we propose a \emph{relaxed diffusion bridge} with a relaxed terminal constraint. Specifically, we formulate $\mathbf{x}_1$ as a Gaussian distribution centered at $\mathbf{x}_{lq}$:
\vskip -0.1in
\begin{equation}
    p(\mathbf{x}_1) = \mathcal{N}(\mathbf{x}_1; \mathbf{x}_{lq}, \boldsymbol{\sigma}^2_u),
    \label{eq:relaxed_conditioning}
\end{equation}
where $\boldsymbol{\sigma}_u>\mathbf{0}$ is set proportional to $\mathbf{u}$. Eq.~\eqref{eq:relaxed_conditioning} allows the diffusion bridge to terminate within a plausible neighborhood defined by uncertainty, naturally preserving the inherently stochastic in AiOIR. The formulation further addresses the singularity as illustrated in Theorem~\ref{thm:regularization}.

\begin{theorem}[]
\label{thm:regularization}
Under relaxed terminal constraint in Eq.~\eqref{eq:relaxed_conditioning}, the introduction of $\boldsymbol{\sigma}^2_u$ regularizes the singularity as $t \to 1$:
\vskip -0.2in
\begin{equation}
\label{eq:relaxed_boundedness}
\left\|
    g(t)^2 \nabla_{\mathbf{x}_t} \!\log h_{\text{relaxed}}(\mathbf{x}_t,t)
    \right\|
 \!\sim\!
    \left\|
    \frac{\mathbf{x}_{lq} - \mathbf{x}_t}{(1-t) + \sigma_{\min}^2}
    \right\|
   \! = \!\mathcal{O}(1),
\end{equation}
where $\sigma_{\min}^2 \triangleq \min(\boldsymbol{\sigma}^2_u)$ denotes the minimum scalar value in the variance $\boldsymbol{\sigma}^2_u$. The relaxed terminal constraint ensures Lipschitz continuity of the drift term at the boundary $t=1$, providing a stable transport dynamics (details refer to Appendix~\ref{sec:general_drift_derivation}).
\end{theorem}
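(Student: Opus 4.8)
The plan is to identify the relaxed harmonic function as a Gaussian-smoothed version of the strict one, and then to show that this smoothing replaces the vanishing denominator $(1-t)$ of Proposition~\ref{prop:singularity} by a strictly positive quantity $(1-t)+\sigma_{\min}^2$. Conditioning the terminal state on the distribution $p(\mathbf{x}_1)=\mathcal{N}(\mathbf{x}_1;\mathbf{x}_{lq},\boldsymbol{\sigma}_u^2)$ of Eq.~\eqref{eq:relaxed_conditioning} rather than on the point mass $\delta(\mathbf{x}_1-\mathbf{x}_{lq})$ turns the harmonic function in Eq.~\eqref{eq:bridge_sde_labeled_full} into the marginalized transition density
\[
h_{\text{relaxed}}(\mathbf{x}_t,t)=\int p(\mathbf{x}_1\mid\mathbf{x}_t)\,\mathcal{N}(\mathbf{x}_1;\mathbf{x}_{lq},\boldsymbol{\sigma}_u^2)\,d\mathbf{x}_1,
\]
i.e.\ the strict kernel $p(\mathbf{x}_1\mid\mathbf{x}_t)$ averaged against the uncertainty Gaussian. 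The first step is to write down this expression and note that, as $t\to1$, it is the only place where the relaxation enters the drift.

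The second step is to obtain a tractable form of the transition kernel near the terminal time. Under Assumption~\ref{ass:regularity}, the short-time behaviour of $p(\mathbf{x}_1\mid\mathbf{x}_t)$ admits a local Gaussian approximation $\mathcal{N}(\mathbf{x}_1;\boldsymbol{\mu}_t,\Sigma_t)$ with mean $\boldsymbol{\mu}_t=\mathbf{x}_t+\mathbf{f}(\mathbf{x}_t,t)(1-t)+o(1-t)$ and covariance $\Sigma_t=g(t)^2(1-t)\mathbf{I}+o(1-t)$; this is exact when the base process is Brownian (constant $g$, $\mathbf{f}\equiv\mathbf{0}$) and holds to leading order in general via an Euler--Maruyama / Varadhan small-time expansion. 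Convolving the two Gaussians then yields, to leading order,
\[
h_{\text{relaxed}}(\mathbf{x}_t,t)\;\propto\;\mathcal{N}\!\left(\mathbf{x}_{lq};\boldsymbol{\mu}_t,\ \Sigma_t+\boldsymbol{\sigma}_u^2\right),
\]
so that the covariance governing the score is shifted from $\Sigma_t$ (which collapses as $t\to1$) to $\Sigma_t+\boldsymbol{\sigma}_u^2$ (which stays bounded below by $\sigma_{\min}^2\mathbf{I}$).

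The third step differentiates the log-density. One gets $\nabla_{\mathbf{x}_t}\log h_{\text{relaxed}}(\mathbf{x}_t,t)=\bigl(\Sigma_t+\boldsymbol{\sigma}_u^2\bigr)^{-1}\bigl(\mathbf{x}_{lq}-\boldsymbol{\mu}_t\bigr)$ to leading order, and after multiplying by $g(t)^2$ and letting $t\to1$ (so $\Sigma_t\to\mathbf{0}$ and $\boldsymbol{\mu}_t\to\mathbf{x}_t$) the drift behaves entrywise like $(\mathbf{x}_{lq}-\mathbf{x}_t)/[(1-t)+\boldsymbol{\sigma}_u^2/g(t)^2]$. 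Bounding the diagonal covariance below by its smallest entry $\sigma_{\min}^2>0$ gives exactly the scaling $\|(\mathbf{x}_{lq}-\mathbf{x}_t)/((1-t)+\sigma_{\min}^2)\|=\mathcal{O}(1)$ of Eq.~\eqref{eq:relaxed_boundedness}, up to the constant $g(t)^2$ absorbed in the $\sim$ notation, in direct contrast to the $\mathcal{O}(1/\sqrt{1-t})$ blow-up of Proposition~\ref{prop:singularity}. For the Lipschitz claim I would observe that the guided drift $\mathbf{f}(\mathbf{x}_t,t)+g(t)^2(\Sigma_t+\boldsymbol{\sigma}_u^2)^{-1}(\mathbf{x}_{lq}-\boldsymbol{\mu}_t)$ is affine in $\mathbf{x}_t$ to leading order, with linear coefficient matrix of operator norm at most $g_{\max}^2/\sigma_{\min}^2$; together with the assumed Lipschitz continuity of $\mathbf{f}$ (and of $\boldsymbol{\mu}_t$ in $\mathbf{x}_t$), this yields a Lipschitz constant uniform over $[0,1]$, in particular finite at $t=1$.

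The main obstacle I expect is the rigorous justification of the short-time Gaussian kernel in the second step for a general drift $\mathbf{f}$: while the leading covariance $g(t)^2(1-t)$ is standard, one must verify that the higher-order corrections and the $\mathcal{O}(1-t)$ bias in $\boldsymbol{\mu}_t$ are genuinely $o(1-t)$ and do not contaminate the $\mathcal{O}(1)$ bound once divided by $\Sigma_t+\boldsymbol{\sigma}_u^2$. This is controlled precisely because $\boldsymbol{\sigma}_u^2$ keeps the inverse covariance uniformly bounded, so the corrections are multiplied by a bounded factor rather than an exploding one; formalizing this via Varadhan's small-time heat-kernel asymptotics (or by restricting to the tractable Brownian base process, where every step above is exact) is where the real care is needed.
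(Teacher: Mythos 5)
Your proposal is correct and follows essentially the same route as the paper's proof in Appendix~\ref{sec:general_drift_derivation}: a short-time Gaussian approximation of the transition kernel $p(\mathbf{x}_1 \mid \mathbf{x}_t)$ with mean $\mathbf{x}_t + \mathbf{f}(\mathbf{x}_t,t)(1-t)$ and variance $\bar{\sigma}_t^2 \sim g(1)^2(1-t)$, convolution with the relaxed Gaussian terminal to obtain $h_{\text{relaxed}}$ as a Gaussian with summed variance $\bar{\sigma}_t^2 + \boldsymbol{\sigma}_u^2$, and differentiation of the log-density so that $\boldsymbol{\sigma}_u^2$ keeps the denominator bounded away from zero as $t \to 1$. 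Your explicit affine-drift argument for the Lipschitz claim and your remark on controlling the higher-order kernel corrections through the uniformly bounded inverse covariance are somewhat more careful than the paper's treatment, but they refine rather than alter the same argument.
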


\label{sec:method}
  \begin{figure*}[t]
    \centering
    \includegraphics[width=0.95\textwidth]{./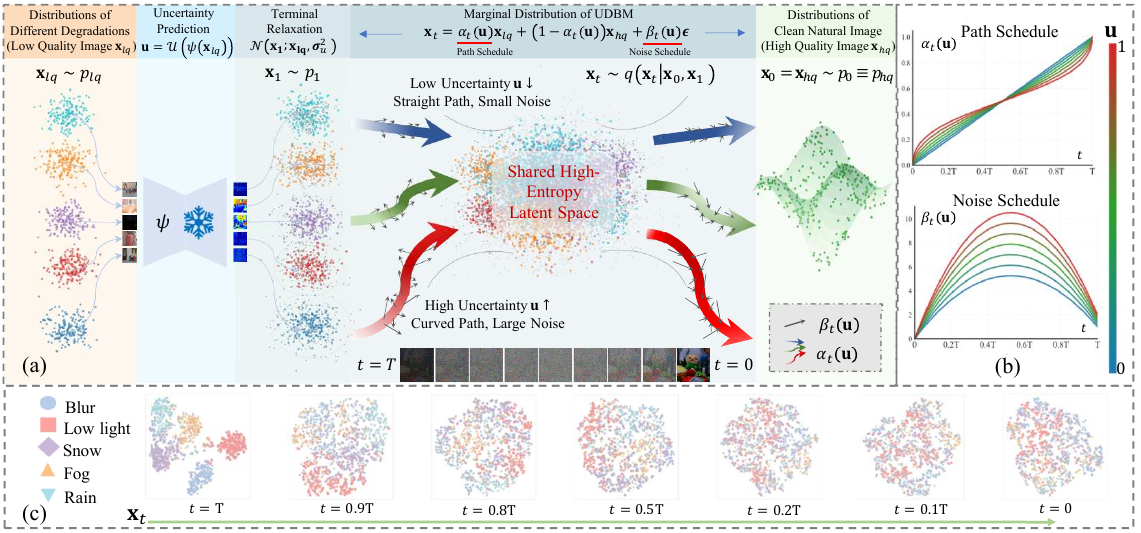}
    \caption{{Overview of the UDBM.} (a) The schematic diagram of marginal distribution of UDBM. {(b)} The dual uncertainty-guided path and noise schedules. {(c)} t-SNE visualization of $\mathbf{x}_t$ of different degradations, showing the gradually alignment of diverse degradations in the shared high-entropy latent space.}
    \label{fig:overall}
  \end{figure*}




\subsection{Uncertainty-Aware Entropic Regularization}
\label{ssec:eot_theory}

Since $p_{lq}$ in AiOIR represents a mixture of heterogeneous manifolds, the probability distributions of distinct degradations overlap significantly during transport to the shared clean manifold. This overlap induces a state of high uncertainty, forming an \emph{entropic barrier}. This barrier peaks at $t \approx 0.5$, where the state $\mathbf{x}_t$ exhibits a maximum confounding of degradation artifacts and clean semantics, which hinder the effective learning of transport dynamics.

To mitigate this barrier, we adopt \textit{Entropy-Regularized Optimal Transport} (EOT). We model the transport dynamics as a Schrödinger Bridge problem constrained by an uncertainty-adaptive viscosity coefficient $\varepsilon(\mathbf{u})$:

\vskip -0.2in

\begin{equation}
\label{eq:sb_min_formal}
\begin{split}
    \min_{\rho, \mathbf{v}} & \int_0^1 \int_{\mathcal{X}} \frac{1}{2} \rho_t(\mathbf{x}) \|\mathbf{v}_t(\mathbf{x})\|^2 \, d\mathbf{x} \, dt, \\
    \text{s.t.} \quad & \partial_t \rho_t + \nabla \cdot (\rho_t \mathbf{v}_t) = \varepsilon(\mathbf{u}) \Delta \rho_t,
\end{split}
\end{equation}
where $\rho_t$ is the probability density, $\mathbf{v}_t$ the velocity field, and $\nabla$, $\Delta$ denote the gradient and Laplacian operators. The optimality conditions for this problem are given by the Proposition~\ref{prop:hjb_dynamics} (details provided in Appendix~\ref{sec:appendix_eot}).

\begin{proposition}[]
\label{prop:hjb_dynamics}
The optimal velocity field $\mathbf{v}_t^*$ for Eq.~\eqref{eq:sb_min_formal} is determined by the gradient of potential $\phi_t$, i.e., $\mathbf{v}_t^* = -\nabla \phi_t$. The evolution of $\phi_t$ is governed by the backward {Viscous Hamilton--Jacobi--Bellman (HJB)} equation\cite{leonard2013survey}:
\vskip -0.1in
\begin{equation}
    \label{eq:viscous_hjb_formal}
    \partial_t \phi_t + {\varepsilon(\mathbf{u}) \Delta \phi_t} = \frac{1}{2} \|\nabla \phi_t\|^2.
\end{equation}
\end{proposition}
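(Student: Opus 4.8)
The plan is to treat Eq.~\eqref{eq:sb_min_formal} as a PDE-constrained variational problem and extract its first-order optimality conditions by the method of Lagrange multipliers. First I would introduce a scalar multiplier field $\phi_t(\mathbf{x})$ (the co-state, which will become the potential) to enforce the Fokker--Planck constraint pointwise in $(\mathbf{x},t)$, and assemble the augmented functional
\begin{equation*}
\mathcal{L}[\rho,\mathbf{v},\phi]
= \int_0^1\!\!\int_{\mathcal{X}}
\Big[\, \tfrac{1}{2}\,\rho_t\|\mathbf{v}_t\|^2
- \phi_t\big(\partial_t\rho_t + \nabla\!\cdot(\rho_t\mathbf{v}_t) - \varepsilon(\mathbf{u})\,\Delta\rho_t\big)\Big]\, d\mathbf{x}\, dt ,
\end{equation*}
where the marginals $\rho_0$ and $\rho_1$ are held fixed by the transport boundary data and $\varepsilon(\mathbf{u})$ is treated as a (locally) uniform viscosity so that $\varepsilon\Delta\rho_t = \nabla\!\cdot(\varepsilon\nabla\rho_t)$.

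Next I would integrate by parts to transfer every derivative off the pair $(\rho_t,\mathbf{v}_t)$ and onto the smooth test field $\phi_t$: the time derivative via $\int_0^1 \phi_t\partial_t\rho_t\,dt = [\phi_t\rho_t]_0^1 - \int_0^1 \rho_t\partial_t\phi_t\,dt$, the transport term via $\int \phi_t\,\nabla\!\cdot(\rho_t\mathbf{v}_t)\,d\mathbf{x} = -\int \rho_t\,\mathbf{v}_t\!\cdot\!\nabla\phi_t\,d\mathbf{x}$, and the diffusion term by integrating by parts twice, $\int \phi_t\,\varepsilon\Delta\rho_t\,d\mathbf{x} = \int \varepsilon\,\rho_t\,\Delta\phi_t\,d\mathbf{x}$. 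Assuming $\rho_t$ and its gradient decay at spatial infinity, all spatial boundary integrals vanish, while the temporal endpoint term $[\phi_t\rho_t]_0^1$ contributes nothing to the variation because admissible variations fix the marginals ($\delta\rho_0=\delta\rho_1=0$). This recasts the Lagrangian into the reduced form $\int_0^1\!\int_{\mathcal{X}} \rho_t\big[\tfrac{1}{2}\|\mathbf{v}_t\|^2 + \partial_t\phi_t + \mathbf{v}_t\!\cdot\!\nabla\phi_t + \varepsilon(\mathbf{u})\,\Delta\phi_t\big]\,d\mathbf{x}\,dt$, in which the viscous coupling $\varepsilon\,\Delta\phi_t$ is inherited directly from the diffusion constraint.

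I would then read off the two Euler--Lagrange equations. Because $\mathbf{v}_t$ now appears unconstrained and pointwise, setting $\delta\mathcal{L}/\delta\mathbf{v}_t = \rho_t(\mathbf{v}_t + \nabla\phi_t) = 0$ yields $\mathbf{v}_t^\ast = -\nabla\phi_t$ wherever $\rho_t>0$, establishing the first claim that the optimal velocity is a gradient field. Substituting this back and taking $\delta\mathcal{L}/\delta\rho_t=0$ gives the stationarity condition $\tfrac{1}{2}\|\nabla\phi_t\|^2 - \|\nabla\phi_t\|^2 + \partial_t\phi_t + \varepsilon(\mathbf{u})\,\Delta\phi_t = 0$; collecting the quadratic terms produces precisely the backward viscous HJB equation $\partial_t\phi_t + \varepsilon(\mathbf{u})\,\Delta\phi_t = \tfrac{1}{2}\|\nabla\phi_t\|^2$ of Eq.~\eqref{eq:viscous_hjb_formal}, with the terminal condition inherited from $\rho_1$ making it a backward problem.

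The main obstacle I anticipate is boundary and admissibility rigor rather than the core algebra: one must justify the decay that annihilates the spatial boundary terms, verify that admissible variations hold $\rho_0,\rho_1$ fixed, and note that mass conservation of the Fokker--Planck flow makes the normalization automatic while nonnegativity is inactive on the interior support where $\rho_t>0$. A secondary subtlety is that these Euler--Lagrange relations are only \emph{necessary} conditions; to claim genuine optimality I would invoke the Benamou--Brenier change of variables $\mathbf{m}_t = \rho_t\mathbf{v}_t$, under which the objective $\int \|\mathbf{m}_t\|^2/(2\rho_t)$ is jointly convex in $(\rho,\mathbf{m})$ and the constraint becomes linear, so the critical point is the global minimizer. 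Finally I would track the sign convention on $\phi_t$ with care, since the same choice fixes both the sign in $\mathbf{v}_t^\ast=-\nabla\phi_t$ and the sign of the quadratic term in the HJB; the $-\phi_t(\cdot)$ convention above lands them consistently, and a Hopf--Cole substitution $\psi_t = e^{-\phi_t/(2\varepsilon)}$ linearizing Eq.~\eqref{eq:viscous_hjb_formal} into a backward heat equation serves as a useful cross-check against the Schrödinger system.
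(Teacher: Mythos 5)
Your proposal is correct and follows essentially the same route as the paper's own derivation in Appendix~\ref{sec:appendix_eot}: the identical Lagrangian (your $-\phi_t(\partial_t\rho_t+\nabla\cdot(\rho_t\mathbf{v}_t)-\varepsilon\Delta\rho_t)$ is the paper's $+\phi_t(\varepsilon\Delta\rho_t-\partial_t\rho_t-\nabla\cdot(\rho_t\mathbf{v}_t))$), the same three integrations by parts giving the reduced Lagrangian $\int\rho_t[\tfrac{1}{2}\|\mathbf{v}_t\|^2+\partial_t\phi_t+\mathbf{v}_t\cdot\nabla\phi_t+\varepsilon\Delta\phi_t]$, and the same Euler--Lagrange steps yielding $\mathbf{v}_t^*=-\nabla\phi_t$ and the viscous HJB. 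Your additional remarks (explicit treatment of the temporal endpoint term under fixed marginals, global optimality via joint convexity of the Benamou--Brenier variables $(\rho,\mathbf{m})$, and the Hopf--Cole linearization as a cross-check) go beyond what the paper states but are correct strengthenings rather than a different method.
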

\vskip -0.1in

Proposition~\ref{prop:hjb_dynamics} reveals a kinetic mismatch in fixed linear schedules, as they enforce constant or fixed velocity transport even through entropy barriers (analyzed in Remark~\ref{rmk:kinetic_suppression}). Such rigid dynamics fundamentally limit the adaptability required to reconcile the heterogeneity inherent in AiOIR.

\begin{remark}[]
\label{rmk:kinetic_suppression}
The Laplacian term $\varepsilon(\mathbf{u}) \Delta \phi_t$ in Eq.~\eqref{eq:viscous_hjb_formal} acts as a geometric smoothing operator, flattening the potential and suppressing drift $\|\mathbf{v}_t^*\|$ under high uncertainty regions. This effect dominates in the entropic barrier, where diffusion prevails. The transport dynamics require deceleration in the intermediate state for exploration, which necessitates compensatory acceleration near the boundaries to meet terminal constraints.
\end{remark}

\subsection{The UDBM Framework}
\label{ssec:framework}
In this subsection, we instantiate the theoretical insights above into the UDBM framework. As illustrated in Fig.~\ref{fig:overall}, the framework operates by first predicting the uncertainty $\mathbf{u}$ from $\mathbf{x}_{lq}$ using a frozen network $\psi(\cdot)$ and constructing a relaxed terminal constraint $p(\mathbf{x}_1) = \mathcal{N}(\mathbf{x}_1; \mathbf{x}_{lq}, \boldsymbol{\sigma}^2_u)$. Subsequently, $\mathbf{u}$ adaptively steers the transport dynamics from $\mathbf{x}_1$ to $\mathbf{x}_0$ that the noise schedule constructs a shared latent space to align heterogeneous distributions, while the path schedule regulates the transport trajectory.

\textbf{Marginal Distribution-based Formulation.} While the diffusion bridges are theoretically grounded in Doob’s $h$-transform, direct SDE simulation is computationally expensive. Moreover, the relaxed diffusion bridge formulation complicates the explicit derivation of the drift. To address these challenges, we adopt a \emph{marginal distribution} perspective. Leveraging the property that linear SDEs yield Gaussian marginals, the conditional distribution of $\mathbf{x}_t$ can be characterized as
$q(\mathbf{x}_t \mid \mathbf{x}_{lq}, \mathbf{x}_{hq}) = \mathcal{N}(\boldsymbol{\mu}_t, \sigma_t^2\mathbf{I})$. Thus, we can formulate the transition as:
\vskip -0.3in
\begin{equation}
    \mathbf{x}_t
    =
    \underbrace{\alpha_t \mathbf{x}_{lq} + \gamma_t \mathbf{x}_{hq}}_{\text{Path Term}}
    +
    \underbrace{\beta_t \boldsymbol{\epsilon}}_{\text{Noise Term}},
    \quad
    \boldsymbol{\epsilon} \sim \mathcal{N}(\mathbf{0}, \mathbf{I}).
    \label{eq:unified_form1}
\end{equation}
\vskip -0.1in
The \emph{path schedule} ($\alpha_t, \gamma_t$) governs the deterministic path term, while the \emph{noise schedule} ($\beta_t$) steers the stochastic noise term. This formulation further unifies prior specific approaches (proven in Appendix~\ref{sec:unified_proof}), providing a universal and flexible design for AiOIR.

Following previous works~\cite{li2023bbdm,liu20232}, by analytically deriving the reverse transition of Eq.~\eqref{eq:unified_form1}, we obtain a deterministic update rule analogous to DDIM (comprehensive derivations, including the stochastic DDPM counterpart, are provided in Appendix~\ref{sec:derivation_detailed_sample}):
\vskip -0.2in
\begin{equation}
    \label{eq:reverse_sampling}
    \mathbf{x}_{s} = 
    \underbrace{\alpha_{s} \mathbf{x}_{lq} + \gamma_{s} \hat{\mathbf{x}}_{0|t}}_{\text{Target Mean}} 
    + 
    \underbrace{\frac{\beta_{s}}{\beta_t} \left( \mathbf{x}_t - \alpha_t \mathbf{x}_{lq} - \gamma_t \hat{\mathbf{x}}_{0|t} \right)}_{\text{Deterministic Direction}},
\end{equation}
where $s < t$ represents the previous time step in the discretization schedule and $\hat{\mathbf{x}}_{0|t}$ denotes the clean data predicted at step $t$. To enable pixel-wise modulation of transport dynamics, UDBM extends Eq.~\eqref{eq:unified_form1} by parameterizing the schedules as spatially element-wise operations steered by uncertainty: $\alpha_t \to \alpha_t(\mathbf{u})$, $\gamma_t \to \gamma_t(\mathbf{u})$, and $\beta_t \to \beta_t(\mathbf{u})$.

\textbf{Uncertainty-Aware Noise Scheduling.}
To instantiate the noise coefficient $\beta_t$ in Eq.~\eqref{eq:unified_form1}, we decompose $\beta_t(\mathbf{u})$ into a superposition of a shared bridge component and a terminal relaxation component:
\begin{equation}
    \beta_t(\mathbf{u}) = \underbrace{\eta_{\mathrm{bridge}}(\mathbf{u})  t(1-t)}_{\text{Shared Bridge Term}} + \underbrace{\eta_{\mathrm{relax}}(\mathbf{u})  t^2}_{\text{Terminal Relaxation Term}},
    \label{eq:noise_schedule}
\end{equation}
where $\eta_{\mathrm{bridge}}(\mathbf{u}) = \lambda_{b}(\mathbf{I} + \mathbf{u})$ and $\eta_{\mathrm{relax}}(\mathbf{u}) = \mathbf{I} + \mathbf{u}$. The design is grounded in two theoretical motivations: 

\textit{1) Shared Manifold Alignment:} The Shared Bridge Term adopts the characteristic convex $t(1-t)$ profile inspired by Brownian Bridge~\cite{li2023bbdm} while amplifies its magnitude via $\eta_{\mathrm{bridge}}(\mathbf{u})$. The high-intensity noise suppresses the statistical discrepancies of heterogeneous degradations by lowering the signal-to-noise ratio (SNR) of degradation-specific features, thereby forcing divergent degradation manifolds to gradually align within a \emph{shared high-entropy latent space} as illustrated in Fig.~\ref{fig:overall}(c). Crucially, $\eta_{\mathrm{bridge}}(\mathbf{u})$ enables spatially adaptive exploration that encourages aggressive stochastic search in high-uncertainty regions to resolve complex degradations, while maintaining lower variance in confident areas to preserve structural fidelity.

\textit{2) Uncertainty Accumulation:} The Terminal Relaxation Term is designed to achieve relaxed terminal constraint in Theorem~\ref{thm:regularization}, where $t^2$ models the progressive accumulation of uncertainty. It enforces the diffusion coefficient to monotonically increase from the deterministic clean state to the probabilistic degraded state with $\beta_1(\mathbf{u}) \equiv \boldsymbol{\sigma}_u= \mathbf{I} + \mathbf{u}$ in Eq.~\eqref{eq:relaxed_conditioning}.

\textbf{Uncertainty-Adaptive Path Schedule.}
 To incorporate the viscous dynamics inherent in EOT while bypassing the numerical solution of the intractable coupled system of Partial Differential Equations (PDEs), we propose a parametric approximation. By constructing a path schedule $\alpha_t(\mathbf{u})$ that explicitly satisfies the kinetic constraints derived in Proposition~\ref{prop:hjb_dynamics}, we define the mean trajectory $\boldsymbol{\mu}_t$ to follow a geometrically linear path (geodesic) governed by an uncertainty-adaptive time reparameterization:

\vskip -0.2in
\begin{subequations}
    \label{eq:path_sigmoid_full}
    \begin{align}
        \boldsymbol{\mu}_t &= (\mathbf{I}-\alpha_t(\mathbf{u})) \mathbf{x}_{hq} + \alpha_t(\mathbf{u}) \mathbf{x}_{lq}, \label{eq:path_sigmoid_full1}\\
        \alpha_t(\mathbf{u}) &= \frac{t^{\pi(\mathbf{u})}}{t^{\pi(\mathbf{u})} + (1-t)^{\pi(\mathbf{u})}}, \label{eq:path_sigmoid_full2}\\
        \pi(\mathbf{u}) &= (1 - \mathbf{u})\pi_{\mathrm{OT}} + \mathbf{u} \pi_{\mathrm{EOT}}, \label{eq:path_sigmoid_full3}
    \end{align}
\end{subequations}
where $\pi_{\mathrm{OT}}\!=\!1.0$ (Optimal Transport) and $\pi_{\mathrm{EOT}}\!=\!0.5$. This parameterization naturally satisfies the constraints prescribed by Remark~\ref{rmk:kinetic_suppression} (details in Appendix~\ref{sec:kinetic_analysis}).

\textbf{Feasibility of Single-Step Inference.} 
UDBM enables high-fidelity single-step inference by jointly regularizing terminal dynamics and rectifying transport geometry. As established in Theorem~\ref{thm:regularization}, the Relaxed Diffusion Bridge ensures bounded mapping coefficients as $t \to 1$, inducing a Lipschitz continuous drift at the terminal. Simultaneously, the linear mean trajectory in Eq.~\eqref{eq:path_sigmoid_full1} constrains the flow to a straight geodesic. Theoretically, this guarantees an optimal transport direction and provides a consistent regression target, enabling effective single-step discretization even with variable speed schedules. See Appendix~\ref{sec:proof_single_step} for details.

\begin{table*}[t]
\centering
\caption{Quantitative comparison in AiOIR. The best, second-best, and third-best results are highlighted in {\textcolor{red}{red}}, {\textcolor{blue}{blue}}, and \textcolor{green}{green} respectively. \textbf{\#P} denotes to parameters. FLOPs (\textbf{\#F}) are computed on $256 \times 256$ input size, while Runtime (\textbf{\#R}) is measured on $512 \times 512$ size.\label{tab:aio5}}

\resizebox{\textwidth}{!}{%
\begin{tabular}{lccccccccccccrrr}
\toprule
\multirow{2}{*}{\textbf{Method}} & \multicolumn{2}{c}{\textbf{Rain (5 sets)}} & \multicolumn{2}{c}{\textbf{Low-light}} & \multicolumn{2}{c}{\textbf{Snow (2 sets)}} & \multicolumn{2}{c}{\textbf{Haze}} & \multicolumn{2}{c}{\textbf{Blur}} & \multicolumn{2}{c}{\textbf{Average}} & \multicolumn{3}{c}{\textbf{Complexity}} \\
\cmidrule(lr){2-3} \cmidrule(lr){4-5} \cmidrule(lr){6-7} \cmidrule(lr){8-9} \cmidrule(lr){10-11} \cmidrule(lr){12-13} \cmidrule(lr){14-16}
 & \textbf{P} $\uparrow$ & \textbf{S}$\uparrow$ & \textbf{P}$\uparrow$ & \textbf{S}$\uparrow$ & \textbf{P}$\uparrow$ & \textbf{S}$\uparrow$ & \textbf{P}$\uparrow$ & \textbf{S}$\uparrow$ & \textbf{P}$\uparrow$ & \textbf{S}$\uparrow$ & \textbf{P}$\uparrow$ & \textbf{S}$\uparrow$ & \textbf{\#P (M)}$\downarrow$ & \textbf{\#F (G)}$\downarrow$  & \textbf{\#R (ms)}$\downarrow$ \\

\midrule
\multicolumn{16}{c}{\textbf{\emph{Task-Specific Settings}}} \\
\midrule
SwinIR$_{\text{ICCV'21}}$ & \third{30.78} & \third{.923} & 17.81 & .723 & - & - & 21.50 & .891 & 24.52 & .773 & - & - & \best{0.90} & 752.13 & 1273.55 \\ 
MIRNet-v2$_{\text{TPAMI'22}}$ & \second{33.89} & \second{.924} & \best{24.74} & \second{.851} & - & - & \third{24.03} & \third{.927} & 26.30 & .799 & - & - & \second{5.90} & \best{140.92} & \best{113.47} \\
Restormer$_{\text{CVPR'22}}$ & \best{33.96} & \best{.935} & \third{20.41} & \third{.806} & - & - & \best{30.87} & \best{.969} & \best{32.92} & \best{.961} & - & - & \third{26.10} & \second{141.00} & \third{229.13} \\
IR-SDE$_{\text{ICML'23}}$ & - & - & - & - & \second{20.45} & \second{.787} & - & - & \second{30.70} & \second{.901} & - & - & 34.20 & - & 16781.94 \\
RDDM$_{\text{CVPR'24}}$ & 30.74 & .903 & \second{23.22} & \best{.899} & \best{32.55} & \best{.927} & \second{30.78} & \second{.953} & \third{29.53} & \third{.876} & \best{29.36} & \best{.912} & 36.26 & \third{164.68} & \second{115.99}\\
\midrule[1pt]
\multicolumn{16}{c}{\textbf{\emph{All-in-one Settings}}} \\
\midrule
Restormer$_{\text{CVPR'22}}$ & 27.10 & .843 & 17.63 & .542 & 28.61 & .876 & 22.79 & .706 & 26.36 & .814 & 24.50 & .756 & 26.12 & 141.00 & 229.13 \\
AirNet$_{\text{CVPR'22}}$ & 24.87 & .773 & 14.83 & .767 & 27.63 & .860 & 25.47 & .923 & 26.92 & .811 & 23.94 & .827 & \second{5.77} & 301.27 & 330.44 \\
Prompt-IR$_{\text{NeurIPS'23}}$ & 29.56 & .888 & 22.89 & .847 & 31.98 & .924 & 32.02 & .952 & 27.21 & .817 & 28.732 & .886 & 32.97 & 158.14 & 242.98 \\
DA-CLIP$_{\text{ICLR'24}}$ & 28.96 & .853 & 24.17 & .882 & 30.80 & .888 & 31.39 & .983 & 25.39 & .805 & 28.14 & .882 & 174.10 & 118.50 & 73.70 \\
DiffUIR$_{\text{CVPR'24}}$ & 31.03 & .904 & 25.12 & \third{.907} & 32.65 & .927 & 32.94 & .956 & 29.17 & .864 & 30.18 & .912 & 36.26 & 98.81 & 348.88 \\
AdaIR$_{\text{ICLR'25}}$ & 31.42 & .910 &  23.48& .900 & 32.89 & .937 & 28.06 & .980 &  29.92& .882 & 29.15 & .922 & 28.73 & 147.18 & 248.97 \\
BioIR$_{\text{NeurIPS'25}}$ & 31.20 & .907 & 23.88 & .900 & 32.80 & .937 & 29.34 & .981 & 29.39 & .870 & 29.32 & .919 & 15.8 & 125.61 & 202.18 \\
MOCE-IR$_{\text{CVPR'25}}$ & 30.46 & .903 & \second{25.87} & \third{.907} & 31.89 & .926 & 30.98 & .987 & 28.57 & .852 & 29.55 & .915 & 25.35 & 87.55 & 198.78 \\
HOGformer$_{\text{AAAI'26}}$ & \second{31.63} & \second{.914} & 25.57 & \best{.917} & \best{34.08} & \second{.941} & \third{36.60} & \third{.994} & \second{29.95} & \second{.884} & \third{31.57} & \second{.930} & 16.64 & 91.77 & 366.37 \\

\textbf{UDBM-S}$_{\text{ours}}$ & 30.83 & .901 & 25.40 & .900 & 32.48 & .930 & 36.23 & \third{.994} & 29.11 & .861 & 30.81 & .917 & \best{5.71} & \best{5.14} & \best{56.32} \\

\textbf{UDBM-M}$_{\text{ours}}$ & \third{31.51} & \third{.911} & \third{25.59} & .906 & \third{33.17} & \third{.936} & \second{37.97} & \second{.995} & \third{29.74} & \third{.876} & \second{31.60} & \third{.925} & \third{14.14} & \second{11.98} & \second{58.21} \\

\textbf{UDBM-L}$_{\text{ours}}$ & \best{32.06} & \best{.917} & \best{26.55} & \second{.915} & \second{34.00} & \best{.943} & \best{39.88} & \best{.996} & \best{30.58} & \best{.895} & \best{32.61} & \best{.933} & 56.02 & \third{46.85} & \third{58.49} \\

\bottomrule
\end{tabular}%
}
\end{table*}

\section{Experiments}

We evaluate UDBM under three settings: (a) AiO, training on mixed degradations across five tasks~\cite{zheng2024selective}; (b) Task-specific, training models for task-specific degradations~\cite{li2022all}; and (c) Generalization, applying the AiO model to real-world and unseen composite benchmarks. Evaluation employs PSNR (dB) (\textbf{P})/SSIM (\textbf{S}) for fidelity, and MANIQA~\cite{yang2022maniqa}, LIQE~\cite{zhang2023liqe}, and MUSIQ~\cite{ke2021musiq} for perceptual quality of generalization benchmarks. Due to page limits, more experiment results are shown in Appendix~\ref{sec:exp_result}.


\textbf{Implementation Details.} We implement UDBM in PyTorch and train on two NVIDIA RTX 4090 GPUs. Optimization is performed using Adam ($\text{lr}=8\times10^{-5}$) minimizing the $L_1$ Loss. We set the crop size to $256\times256$. Training configurations are as follows: Setting (a) runs for 600K iterations with a batch size of 40, while Setting (b) runs for 300K iterations with a batch size of 8. Regarding the architecture, we adopt a modified UNet~\cite{chen2022simple,luo2023refusion} as the denoising backbone and a lightweight version for the estimator $\psi(\cdot)$.
Details of the uncertainty prediction are provided in Appendix~\ref{sec:supp_uncertainty}. Notably, we utilized only \textbf{one step} for inference in all tasks. For more implementation details, please refer to Appendix~\ref{sec:train_detail}.

\subsection{Setting (a): All-in-One}

\textbf{Datasets. }We utilize standard benchmarks across five image restoration tasks~\cite{zheng2024selective}: 
the merged datasets~\cite{jiang2020multi,wang2022restoreformer} for deraining, 
LOL~\cite{wei2018deep} for low-light image enhancement, 
Snow100K~\cite{liu2018desnownet} for desnowing, 
RESIDE~\cite{li2018benchmarking} for dehazing, 
and GoPro~\cite{nah2017deep} for deblurring (detailed in Appendix~\ref{sec:detail_dataset}). For comprehensive evaluation, we provide three versions of UDBM (Smal (S), Medium (M), Large (L)), corresponding to different parameters  (details are provided in Appendix~\ref{app:network}).

  \begin{figure*}[t]
    \centering
    \includegraphics[width=1.0\textwidth]{./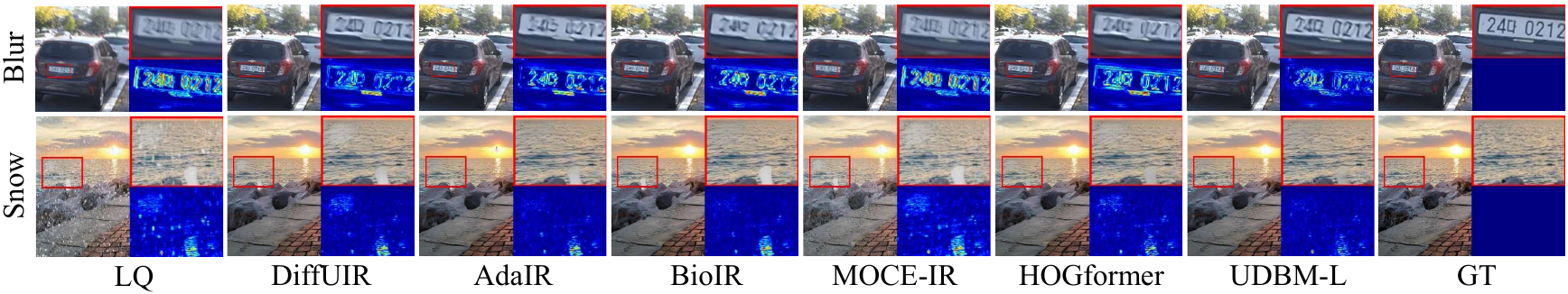}
    \caption{Visual comparison of AiOIR. Our UDBM exhibits better artifacts removal and details restoration. Bottom-right: error maps.}
    \label{fig:aio5}
  \end{figure*}

\textbf{Competing methods. }We compare the proposed UDBM against a set of state-of-the-art (SOTA) methods, including task-specific baselines (SwinIR~\cite{liang2021swinir}, MIRNet-v2~\cite{zamir2022learning}, Restormer~\cite{zamir2022restormer}, IR-SDE~\cite{luo2023image}, RDDM~\cite{liu2024residual}) and AiO frameworks (AirNet\cite{li2022all}, Prompt-IR~\cite{potlapalli2023promptir}, DA-CLIP~\cite{luo2023controlling}, DiffUIR~\cite{zheng2024selective},~AdaIR\cite{cui2025adair}, BioIR~\cite{cuibio}, MOCE-IR~\cite{zamfir2025complexity}, HOGformer~\cite{wu2025beyond}).

\textbf{Results and Analysis. } As shown in Tab.~\ref{tab:aio5}, our UDBM-L achieves the best overall performance. Remarkably, UDBM-L requires approximately $2\times$ fewer FLOPs than HOGformer and achieves a $6\times$ inference speedup over both HOGformer and DiffUIR. The efficiency is attributed to the single-step inference and the lightweight backbone of UDBM. Meanwhile, the flexible transport dynamics and aligning heterogeneous manifolds mitigate multi-task conflicts in UDBM. As shown in Fig.~\ref{fig:aio5}, our UDBM-L outperforms others in restoring fine details (Appendix~\ref{sec:supp_more_vis} for more results). Even our medium variant, UDBM-M, outperforms the second-best HOGformer (31.60 dB vs. 31.57 dB)  while using fewer parameters and significantly lower computational cost (11.98G Flops vs. 91.77G Flops). Furthermore, the lightweight variant UDBM-S demonstrates a trade-off between performance and efficiency, delivering competitive results with minimal overhead (5.14 GFLOPs, 5.71M parameters).

\begin{table}[t]
\centering
\caption{Quantitative comparison on BSD68 dataset.}
\label{tab:denoising_bsd68}

\resizebox{1.0\columnwidth}{!}{%

\begin{tabular}{l|ccc|c}
\toprule
\multirow{2}{*}{\textbf{Method}} & \textbf{$\sigma = 15$} & \textbf{$\sigma = 25$} & \textbf{$\sigma = 50$} & \textbf{Average} \\
 & \small{\textbf{P}$\uparrow$ / \textbf{S}$\uparrow$} & \small{\textbf{P}$\uparrow$ / \textbf{S}$\uparrow$} & \small{\textbf{P}$\uparrow$ / \textbf{S}$\uparrow$} & \small{\textbf{P}$\uparrow$ / \textbf{S}$\uparrow$} \\
\midrule
CBM3D   & 33.50 / .922 & 30.69 / .868 & 27.36 / .763 & 30.52 / .851 \\
DnCNN   & 33.89 / .930 & 31.23 / .883 & 27.92 / .789 & 31.01 / .867 \\
IRCNN   & 33.87 / .929 & 31.18 / .882 & 27.88 / .790 & 30.98 / .867 \\
FFDNet  & 33.87 / .929 & 31.21 / .882 & 27.96 / .789 & 31.01 / .867 \\
BRDNet  & 34.10 / .929 & 31.43 / .885 & 28.16 / .794 & 31.23 / .869 \\
AirNet  & 34.14 / .936 & 31.48 / .893 & 28.23 / .806 & 31.28 / .878 \\
BioIR   & 34.15 / .936 & 31.57 / .894 & 28.29 / .808 & 31.34 / .879 \\

\midrule
 \textbf{UDBM-L} & \textbf{34.23 / .937} & \textbf{31.60 / .895} & \textbf{28.38 / .810} & \textbf{31.40 / .881} \\
\bottomrule
\end{tabular}
}
\end{table}

\subsection{Setting (b):  Task-Specific}
\textbf{Denoising.}
For image denoising, we evaluate UDBM with task-specific methods (CBM3D~\cite{dabov2007color}, DnCNN~\cite{zhang2017beyond}, IRCNN~\cite{zhang2017learning}, FFDNet~\cite{zhang2018ffdnet}, BRDNet~\cite{tian2020image}), and AiO methods (AirNet and BioIR). All methods are trained on the combined BSD600~\cite{arbelaez2010contour} and WED~\cite{ma2016waterloo} datasets, while evaluated on BSD68~\cite{martin2001database} under Gaussian noise levels $\sigma \in\{15,25,50\}$. As reported in Tab.~\ref{tab:denoising_bsd68}, our methods outperform the recent AiO method BioIR by 0.06dB in PSNR  while using only 37.3\% of its computational cost.

\begin{table}[t]
\centering
\caption{Quantitative comparison on GoPro dataset.}
\label{tab:gopro}

\setlength{\tabcolsep}{3pt} 

\resizebox{0.97\columnwidth}{!}{%
\begin{tabular}{l|ccccc|c}
\toprule
\textbf{Methods}  &SAM-Deblur & Restormer & UFormer &BioIR& NAFNet & \textbf{UDBM-L} \\
\midrule
\multicolumn{1}{c|}{\textbf{P}$\uparrow$} &32.83 & 32.92 & 32.97 &33.28& 33.69 & \textbf{33.87} \\
\multicolumn{1}{c|}{\textbf{S}$\uparrow$} &0.960& 0.961 & 0.967&0.966 & {0.967} & \textbf{0.968} \\
\bottomrule
\end{tabular}
}
\end{table}

\begin{table*}[t]
    \centering
    \caption{Quantitative comparison on Real Rain, Real Dark, Real Snow, and Real Blur datasets.}
    \label{tab:comparison_results}
    \resizebox{0.9\linewidth}{!}{
    \begin{tabular}{lccccccccccc}
        \toprule
        \multirow{2}{*}{\textbf{Method}} & \multicolumn{3}{c}{\textbf{Real Rain}} & \multicolumn{3}{c}{\textbf{Real Dark}} & \multicolumn{3}{c}{\textbf{Real Snow}} & \multicolumn{2}{c}{\textbf{Real Blur}} \\
        \cmidrule(lr){2-4} \cmidrule(lr){5-7} \cmidrule(lr){8-10} \cmidrule(lr){11-12}
         & MANIQA$\uparrow$ & LIQE$\uparrow$ & MUSIQ$\uparrow$ & MANIQA$\uparrow$ & LIQE$\uparrow$ & MUSIQ$\uparrow$ & MANIQA$\uparrow$ & LIQE$\uparrow$ & MUSIQ$\uparrow$ & \textbf{P}$\uparrow$ & \textbf{S}$\uparrow$ \\
        \midrule
        
        Prompt-IR & 0.356 & 3.133 & 60.01 & 0.351 & 3.012 & 58.31 & 0.392 & 3.146 & 60.83 & 22.48 & 0.770 \\
        DiffUIR & 0.366 & 3.210 & 60.55 & 0.386 & 3.129 & 63.29 & 0.392 & 3.190 & 61.11 & 30.63 & 0.890 \\

        AdaIR & 0.372 &3.230 & 60.10 & 0.348 & 3.136 & 61.74 & 0.393 & 3.163 & 61.27 & 30.17 & 0.843 \\

        BioIR & 0.370 & 3.246 & 60.30 & 0.384 & 3.046 & 62.14 & 0.391 & 3.201 & 61.02 & 30.32 & 0.864 \\

        MOCE-IR & 0.371 & 3.279 & 60.61 & 0.375 & 3.215 & 62.12 & 0.392 & 3.204 & 61.27 & 29.59 & 0.856 \\
        HOGformer & 0.374 & 3.284 & 60.37 & 0.367 & 3.296 & 59.91 & 0.396 & 3.219 & 61.28 & 29.42 & 0.825 \\
        \textbf{UDBM-L} & \textbf{0.380} & \textbf{3.355} & \textbf{60.86} & \textbf{0.405} & \textbf{3.347} & \textbf{63.67} & \textbf{0.397} & \textbf{3.254} & \textbf{61.43} & \textbf{31.14} & \textbf{0.903} \\
        \bottomrule
    \end{tabular}
    }
\end{table*}

\begin{table*}[t]
\centering
\caption{Quantitative comparison on the CDD11 dataset. We report LIQE metrics (higher is better) for evaluation.}
\label{tab:five_metric_comparison}
\resizebox{0.8\textwidth}{!}{%

\begin{tabular}{lcccccccccccc}
\toprule
\multirow{2}{*}{\textbf{Method}} & \multicolumn{4}{c}{\textbf{Single Degradation}} & \multicolumn{5}{c}{\textbf{Double Degradations}} & \multicolumn{2}{c}{\textbf{Triple Degradations}} & \multirow{2}{*}{\textbf{Average}} \\
\cmidrule(lr){2-5} \cmidrule(lr){6-10} \cmidrule(lr){11-12}
 & \cL & \cH & \cR & \cS & \cL+\cH & \cL+\cR & \cL+\cS & \cH+\cR & \cH+\cS & \cL+\cH+\cR & \cL+\cH+\cS & \\ 
\midrule

DiffUIR   & 1.935 & 4.256 & 4.297 & 4.347 & 1.583 & 1.817 & 1.675 & 3.389 & 3.247 & 1.630 & 1.486 & 2.697 \\

AdaIR & 1.559 & 4.178 & 4.230 & 4.317 & 1.542 & 1.752 & 1.582 & 3.242 & 3.351 & 1.608 & 1.459 & 2.620 \\

BioIR & 1.718 & 4.258 & 4.345 & 4.405 & 1.561 & 1.783 & 1.612 & 3.378 & 3.328 & 1.617 & 1.471 & 2.680 \\

MOCE-IR   & 1.745 & 4.314 & 4.247 & 4.358 & 1.579 & 1.760 & 1.678 & 3.397 & 3.283 & 1.602 & 1.479 & 2.677 \\

HOGformer & 1.478 & 4.396 & 4.274 & 4.417 & 1.551 & 1.700 & 1.591 & 3.409 & 3.307 & 1.569 & 1.472 & 2.651 \\

\textbf{UDBM-L} & \textbf{2.083} & \textbf{4.454} & \textbf{4.445} & \textbf{4.532} & \textbf{1.587} & \textbf{2.161} & \textbf{1.929} & \textbf{3.801} & \textbf{3.589} & \textbf{1.783} & \textbf{1.514} & \textbf{2.898} \\

\bottomrule
\end{tabular}%
}
\end{table*}

\textbf{Deblurring.} As reported in Table~\ref{tab:gopro}, we compare UDBM-L on the GoPro dataset against task-specific methods (Restormer~\cite{zamir2022restormer}, UFormer~\cite{wang2022uformer}, NAFNet~\cite{chen2022simple}, SAM-Deblur~\cite{li2024sam}) and the AiO method (BioIR). Our method achieves the best results across all metrics.

\begin{figure}[t!]
    \centering
    \includegraphics[width=\columnwidth]{./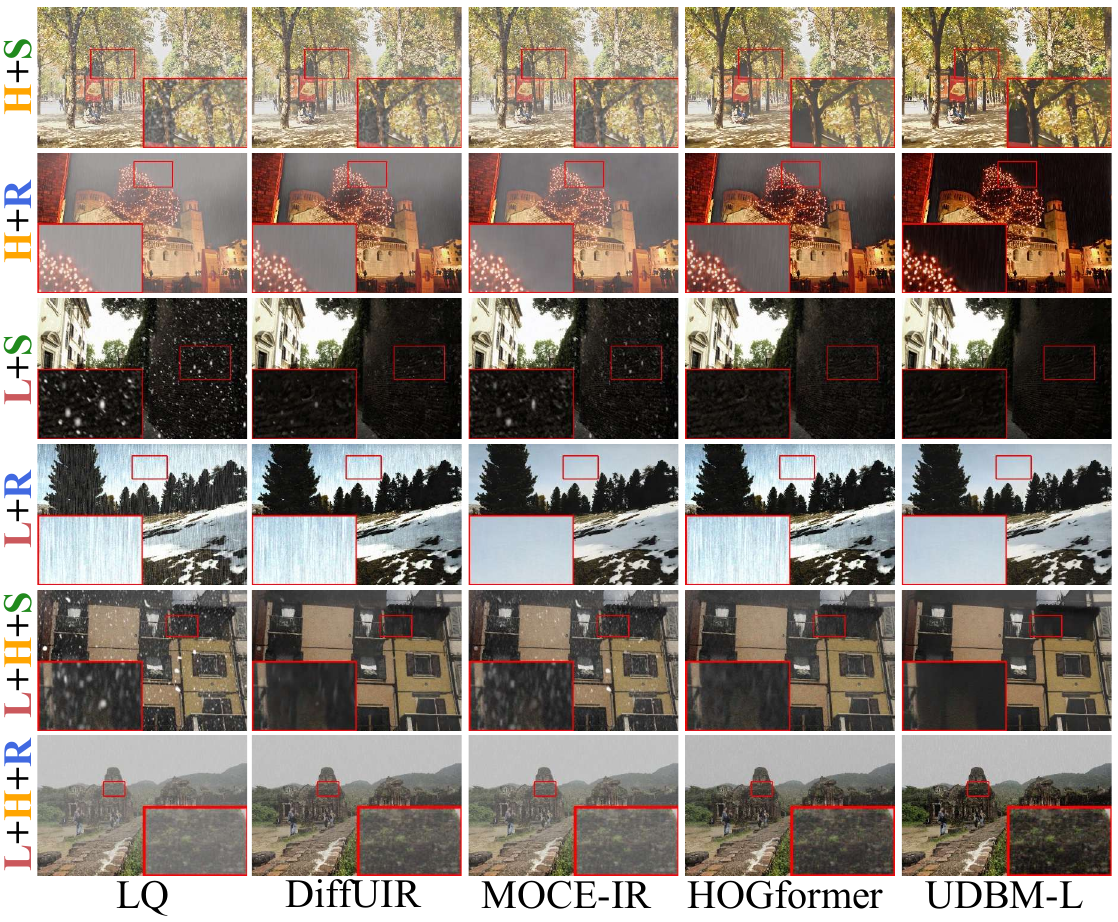}
    \caption{Visual comparison of unseen scenarios on the CDD11 dataset. The proposed UDBM shows better degradation removal compared to other methods. }
    \label{fig:CDD11}
  \end{figure}

%
\subsection{Setting (c):  Generalization}
In this section, we evaluate models with pre-trained weights from Setting (a) to assess generalization.

\textbf{Real-world scenarios.} To verify the generalization capability on known tasks in real-world scenarios, we conduct evaluations on multiple benchmarks: Practical~\cite{yang2017deep} for deraining; MEF~\cite{ma2015perceptual}, NPE~\cite{wang2013naturalness}, and DICM~\cite{lee2013contrast} for low-light enhancement; real-world snow images in Snow-100K for desnowing; as well as HIDE~\cite{shen2019human} and RealBlur~\cite{rim2020real} for deblurring. Quantitative results in Tab.~\ref{tab:comparison_results} demonstrate that UDBM-L outperforms competing methods across all degradation types.

\textbf{Unseen scenarios. }To assess the zero-shot robustness of UDBM against complex unseen scenarios, we conduct experiments on the CDD11 benchmark~\cite{guo2024onerestore}, which encompasses low-light (\cL), haze (\cH), rain (\cR), snow (\cS), and their combinations. As evidenced in Tab.~\ref{tab:five_metric_comparison}, UDBM-L exhibits superior perceptual metrics. As visually demonstrated in Fig.~\ref{fig:CDD11}, our method demonstrates effective generalization to challenging unseen double and triple degradation scenarios, while other methods suffer from severe degradation residues and noticeable artifacts.

\begin{table*}[!t]
\centering
\caption{Ablation study in AiOIR. }
\label{tab:ablation_all}

\resizebox{0.95\textwidth}{!}{%
\begin{tabular}{c|l|ccccc|c}
\toprule
\multirow{2}{*}{\textbf{Component}} & \multirow{2}{*}{\textbf{Method}} & \textbf{Rain} & \textbf{Low-light} & \textbf{Snow} & \textbf{Haze} & \textbf{Blur} & \textbf{Average} \\
 & & \small{\textbf{P}$\uparrow$ / \textbf{S}$\uparrow$} & \small{\textbf{P}$\uparrow$ / \textbf{S}$\uparrow$} & \small{\textbf{P}$\uparrow$ / \textbf{S}$\uparrow$} & \small{\textbf{P}$\uparrow$ / \textbf{S}$\uparrow$} & \small{\textbf{P}$\uparrow$ / \textbf{S}$\uparrow$} & \small{\textbf{P}$\uparrow$ / \textbf{S}$\uparrow$} \\
\midrule

\multirow{2}{*}{\textbf{Path Schedule} (Eq.~\eqref{eq:path_sigmoid_full1})} 
 & w/ $\pi(\mathbf{u}) = \mathbf{I}$ & 31.89 / .916 & {26.43} / {.914} & 33.78 / .942 & 39.12 / .995 & 30.31 / .893 & 32.31 / .932 \\
 & w/ $\pi(\mathbf{u}) = 0.5\mathbf{I}$ & 31.79 / .915 & 25.89 / .913 & 33.82 / .941 & 38.29 / .995 & 30.29 / .890 & 32.02 / .931 \\
\midrule

\textbf{Shared Bridge Term} (Eq.~\eqref{eq:noise_schedule})
 & w/ $\eta_{\mathrm{bridge}}(\mathbf{u}) = 1.5\lambda_b\mathbf{I}$ & 31.62 / .912 & 25.78 / .910 & 33.29 / .938 & 37.19 / .994 & 30.21 / .892 & 31.62 / .929 \\
\midrule

\multirow{2}{*}{\textbf{Terminal Relaxation Term} (Eq.~\eqref{eq:noise_schedule})} 
 & w/ $\eta_{\mathrm{relax}}(\mathbf{u}) = \mathbf{0}$ & 31.07 / .903 & 24.45 / .901 & 32.87 / .929 & 34.98 / .987 & 30.23 / .892 & 30.72 / .922 \\
 & w/ $\eta_{\mathrm{relax}}(\mathbf{u}) = 1.5\mathbf{I}$ & 31.67 / .912 & 25.93 / .913 & 33.62 / .940 & 38.95 / \textbf{.996} & 30.45 / .893 & 32.12 / .931 \\
\midrule

\multirow{2}{*}{\textbf{Uncertainty Estimator ($\mathcal{U}(\cdot)$
)}} 
 & w/ BayesCap & 31.74 / .912 & 25.86 / .913 & 33.79 / .941 & \textbf{40.38} / \textbf{.996} & 30.53 / \textbf{.895} & 32.46 / .931 \\
 & w/ heteroscedastic regression & 31.83 / .913 & 25.42 / .911 & 33.83 / .941 & 38.76 / .994 & 30.28 / .894 & 32.04 / .931 \\
\midrule

\multicolumn{2}{c|}{\textbf{UDBM-L}} & \textbf{32.06} / \textbf{.917} & \textbf{26.55} / \textbf{.915} & \textbf{34.00} / \textbf{.943} & 39.88 / \textbf{.996} & \textbf{30.58} / \textbf{.895} & \textbf{32.61} / \textbf{.933} \\
\bottomrule
\end{tabular}%
}
\end{table*}

\subsection{Ablation Study}\label{sec:ablation}We validate the proposed components on the AiO benchmark reported in Tab.~\ref{tab:ablation_all} (due to space limitations, additional ablation results are provided in Appendix~\ref{sec:exp_result}).

\textbf{Uncertainty-aware Path Schedule.} The OT schedule ($\pi{(\mathbf{u})} =\mathbf{I}$ in Eq.~\eqref{eq:path_sigmoid_full1}) ultimately yields suboptimal results due to insufficient refinement steps in high-uncertainty areas. Conversely, $\pi{(\mathbf{u})} =0.5\mathbf{I}$ tends to over-process low-uncertainty regions, where excessive refinement disrupts inherent structural fidelity. Fig.~\ref{fig:sb_b}(a) illustrates the impact of $\pi_{\mathrm{EOT}}$. As $\pi_{\mathrm{EOT}}$ increases, the trajectory defined by Eq.~\eqref{eq:path_sigmoid_full1} converges towards the OT geodesic. Conversely, excessively small values induce abrupt velocity transitions that hinder effective learning.

\textbf{Shared Bridge Term.} Replacing the adaptive coefficient in Eq.~\eqref{eq:noise_schedule}  with a constant ($\eta_{\mathrm{bridge}}(\mathbf{u}) = 1.5\lambda_b\mathbf{I}$) leads to a performance drop, confirming that fixed terminal variance struggles to accommodate heterogeneous degradations, resulting in either insufficient exploration or excessive perturbation. Sensitivity analysis of $\lambda_b$ further illustrated in Fig.\ref{fig:sb_b}(b) reveals that extremely low $\lambda_b$ values fail to align degradation into the shared high-entropy latent space, while the high $\lambda_b$ values may corrupt the structural information from $\mathbf{x}_{lq}$, resulting in a performance drop. 

To verify manifold alignment, we visualize the evolution of intermediate features via t-SNE and Silhouette Coefficient (SC)~\cite{rousseeuw1987silhouettes} in Fig.~\ref{fig:tsne_vis}. As $t \to 0$, the converging clusters and decreasing SC confirm that our Shared Bridge Term effectively projects heterogeneous degradations into a unified, shared high-entropy latent space.

\begin{figure}[t]
    \centering
    \includegraphics[width=\columnwidth]{./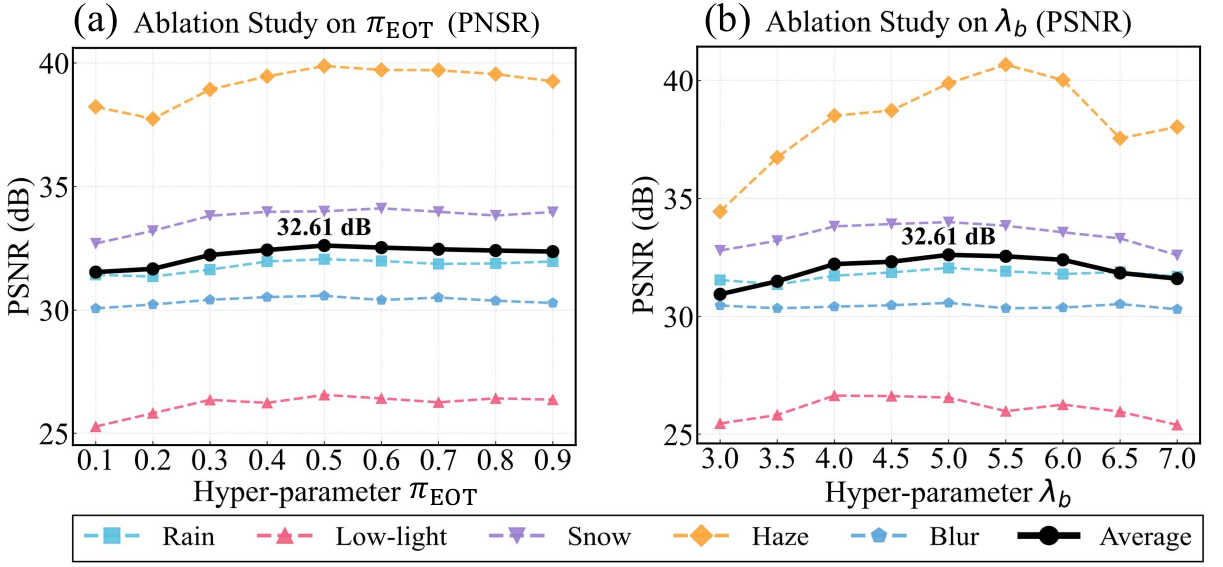}
    \caption{Ablation studies on hyper-parameter $\pi_\mathrm{SB}$ (a) and $\lambda_b$ (b). }
    \label{fig:sb_b}
  \end{figure}

\begin{figure}[t]
    \centering
    \includegraphics[width=\columnwidth]{./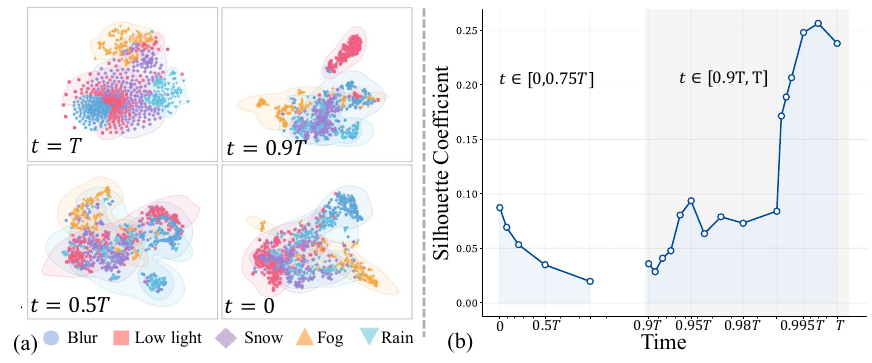}
    \caption{Validation of manifold alignment. (a) t-SNE visualization illustrates that degradation-specific features gradually merge into a shared distribution as $t$ decreases. (b) The SC curve of t-SNE quantitatively confirms the alignment.}
    \label{fig:tsne_vis}
  \end{figure}

\textbf{Terminal Relaxation Term.} Removing the Terminal Relaxation Term $\eta_{\mathrm{relax}}$ in Eq.~\eqref{eq:noise_schedule} significantly degrades performance, validating that drift singularity at the deterministic endpoint destabilizes transport dynamics. A fixed relaxation ($\eta_{\mathrm{relax}}(\mathbf{u}) = 1.5\mathbf{I}$), while avoiding singularity, remains suboptimal as it overlooks the intrinsic uncertainty of the inputs.

\textbf{Uncertainty Estimation.} To validate extensibility, we further evaluate alternative uncertainty estimators including BayesCap~\cite{upadhyay2022bayescap} and heteroscedastic regression~\cite{kendall2017uncertainties}. While BayesCap is effective, it incurs a significant 43\% runtime overhead. Heteroscedastic regression exhibits high numerical sensitivity in AiOIR. In contrast, the residual-based estimator~\cite{zhang2025uncertainty} utilized in UDBM achieves the best performance. Details of these methods are in the Appendix~\ref{sec:supp_uncertainty}.

\section{Conclusion}
\label{sec:conclusion}
We presented UDBM, a framework that rethinks AiOIR as a stochastic transport problem. By introducing a relaxed diffusion bridge, we theoretically resolve the drift singularity inherent in deterministic conditioning, providing a robust mechanism for degradation uncertainty. Through dual uncertainty-guided scheduling, UDBM dynamically reconciles diverse distributions into a shared latent space while regulating trajectories based on restoration difficulty. Extensive experiments demonstrate that UDBM achieves state-of-the-art performance with remarkable single-step efficiency, offering a scalable solution for AiOIR.

\section*{Impact Statement}

UDBM achieves high-fidelity image restoration by enabling robust single-step inference in heterogeneous degradations. By significantly reducing computational costs and eliminating the need for task-specific models, UDBM facilitates scalable deployment in resource-constrained environments. Its versatility and efficiency enable broad societal impact in fields relying on clear visual perception, such as autonomous driving safety, disaster response, and mobile imaging.

\nocite{langley00}

\bibliography{example_paper}

@article{potlapalli2023promptir,
  title={Promptir: Prompting for all-in-one image restoration},
  author={Potlapalli, Vaishnav and Zamir, Syed Waqas and Khan, Salman H and Shahbaz Khan, Fahad},
  journal={Advances in Neural Information Processing Systems},
  volume={36},
  pages={71275--71293},
  year={2023}
}

@article{rousseeuw1987silhouettes,
  title={Silhouettes: a graphical aid to the interpretation and validation of cluster analysis},
  author={Rousseeuw, Peter J},
  journal={Journal of computational and applied mathematics},
  volume={20},
  pages={53--65},
  year={1987},
  publisher={Elsevier}
}

@inproceedings{ke2021musiq,
  title={Musiq: Multi-scale image quality transformer},
  author={Ke, Junjie and Wang, Qifei and Wang, Yilin and Milanfar, Peyman and Yang, Feng},
  booktitle={Proceedings of the IEEE/CVF international conference on computer vision},
  pages={5148--5157},
  year={2021}
}

@inproceedings{zhang2023liqe,  
  title={Blind Image Quality Assessment via Vision-Language Correspondence: A Multitask Learning Perspective},  
  author={Zhang, Weixia and Zhai, Guangtao and Wei, Ying and Yang, Xiaokang and Ma, Kede},  
  booktitle={IEEE Conference on Computer Vision and Pattern Recognition},  
  pages={14071--14081},
  year={2023}
}

@article{wu2025beyond,
  title={Beyond Degradation Conditions: All-in-One Image Restoration via HOG Transformers},
  author={Wu, Jiawei and Yang, Zhifei and Wang, Zhe and Jin, Zhi},
  journal={arXiv preprint arXiv:2504.09377},
  year={2025}
}

@inproceedings{yang2022maniqa,
  title={Maniqa: Multi-dimension attention network for no-reference image quality assessment},
  author={Yang, Sidi and Wu, Tianhe and Shi, Shuwei and Lao, Shanshan and Gong, Yuan and Cao, Mingdeng and Wang, Jiahao and Yang, Yujiu},
  booktitle={Proceedings of the IEEE/CVF conference on computer vision and pattern recognition},
  pages={1191--1200},
  year={2022}
}

@inproceedings{luo2023refusion,
  title={Refusion: Enabling large-size realistic image restoration with latent-space diffusion models},
  author={Luo, Ziwei and Gustafsson, Fredrik K and Zhao, Zheng and Sj{\"o}lund, Jens and Sch{\"o}n, Thomas B},
  booktitle={Proceedings of the IEEE/CVF conference on computer vision and pattern recognition},
  pages={1680--1691},
  year={2023}
}

@inproceedings{shen2019human,
  title={Human-aware motion deblurring},
  author={Shen, Ziyi and Wang, Wenguan and Lu, Xiankai and Shen, Jianbing and Ling, Haibin and Xu, Tingfa and Shao, Ling},
  booktitle={Proceedings of the IEEE/CVF international conference on computer vision},
  pages={5572--5581},
  year={2019}
}

@article{wang2013naturalness,
  title={Naturalness preserved enhancement algorithm for non-uniform illumination images},
  author={Wang, Shuhang and Zheng, Jin and Hu, Hai-Miao and Li, Bo},
  journal={IEEE transactions on image processing},
  volume={22},
  number={9},
  pages={3538--3548},
  year={2013},
  publisher={IEEE}
}

@inproceedings{wang2022restoreformer,
  title={Restoreformer: High-quality blind face restoration from undegraded key-value pairs},
  author={Wang, Zhouxia and Zhang, Jiawei and Chen, Runjian and Wang, Wenping and Luo, Ping},
  booktitle={Proceedings of the IEEE/CVF conference on computer vision and pattern recognition},
  pages={17512--17521},
  year={2022}
}

@inproceedings{rim2020real,
  title={Real-world blur dataset for learning and benchmarking deblurring algorithms},
  author={Rim, Jaesung and Lee, Haeyun and Won, Jucheol and Cho, Sunghyun},
  booktitle={European conference on computer vision},
  pages={184--201},
  year={2020},
  organization={Springer}
}

@article{lee2013contrast,
  title={Contrast enhancement based on layered difference representation of 2D histograms},
  author={Lee, Chulwoo and Lee, Chul and Kim, Chang-Su},
  journal={IEEE transactions on image processing},
  volume={22},
  number={12},
  pages={5372--5384},
  year={2013},
  publisher={IEEE}
}

@inproceedings{guo2024onerestore,
  title={Onerestore: A universal restoration framework for composite degradation},
  author={Guo, Yu and Gao, Yuan and Lu, Yuxu and Zhu, Huilin and Liu, Ryan Wen and He, Shengfeng},
  booktitle={European conference on computer vision},
  pages={255--272},
  year={2024},
  organization={Springer}
}

@article{ma2015perceptual,
  title={Perceptual quality assessment for multi-exposure image fusion},
  author={Ma, Kede and Zeng, Kai and Wang, Zhou},
  journal={IEEE Transactions on Image Processing},
  volume={24},
  number={11},
  pages={3345--3356},
  year={2015},
  publisher={IEEE}
}

@inproceedings{yang2017deep,
  title={Deep joint rain detection and removal from a single image},
  author={Yang, Wenhan and Tan, Robby T and Feng, Jiashi and Liu, Jiaying and Guo, Zongming and Yan, Shuicheng},
  booktitle={Proceedings of the IEEE conference on computer vision and pattern recognition},
  pages={1357--1366},
  year={2017}
}

@inproceedings{nah2017deep,
  title={Deep multi-scale convolutional neural network for dynamic scene deblurring},
  author={Nah, Seungjun and Hyun Kim, Tae and Mu Lee, Kyoung},
  booktitle={Proceedings of the IEEE conference on computer vision and pattern recognition},
  pages={3883--3891},
  year={2017}
}

@article{li2018benchmarking,
  title={Benchmarking single-image dehazing and beyond},
  author={Li, Boyi and Ren, Wenqi and Fu, Dengpan and Tao, Dacheng and Feng, Dan and Zeng, Wenjun and Wang, Zhangyang},
  journal={IEEE transactions on image processing},
  volume={28},
  number={1},
  pages={492--505},
  year={2018},
  publisher={IEEE}
}

@article{liu2018desnownet,
  title={Desnownet: Context-aware deep network for snow removal},
  author={Liu, Yun-Fu and Jaw, Da-Wei and Huang, Shih-Chia and Hwang, Jenq-Neng},
  journal={IEEE Transactions on Image Processing},
  volume={27},
  number={6},
  pages={3064--3073},
  year={2018},
  publisher={IEEE}
}

@article{wei2018deep,
  title={Deep retinex decomposition for low-light enhancement},
  author={Wei, Chen and Wang, Wenjing and Yang, Wenhan and Liu, Jiaying},
  journal={arXiv preprint arXiv:1808.04560},
  year={2018}
}

@inproceedings{jiang2020multi,
  title={Multi-scale progressive fusion network for single image deraining},
  author={Jiang, Kui and Wang, Zhongyuan and Yi, Peng and Chen, Chen and Huang, Baojin and Luo, Yimin and Ma, Jiayi and Jiang, Junjun},
  booktitle={Proceedings of the IEEE/CVF conference on computer vision and pattern recognition},
  pages={8346--8355},
  year={2020}
}

@inproceedings{chen2022simple,
  title={Simple baselines for image restoration},
  author={Chen, Liangyu and Chu, Xiaojie and Zhang, Xiangyu and Sun, Jian},
  booktitle={European conference on computer vision},
  pages={17--33},
  year={2022},
  organization={Springer}
}

@inproceedings{upadhyay2022bayescap,
  title={Bayescap: Bayesian identity cap for calibrated uncertainty in frozen neural networks},
  author={Upadhyay, Uddeshya and Karthik, Shyamgopal and Chen, Yanbei and Mancini, Massimiliano and Akata, Zeynep},
  booktitle={European Conference on Computer Vision},
  pages={299--317},
  year={2022},
  organization={Springer}
}

@inproceedings{zamfir2025complexity,
  title={Complexity experts are task-discriminative learners for any image restoration},
  author={Zamfir, Eduard and Wu, Zongwei and Mehta, Nancy and Tan, Yuedong and Paudel, Danda Pani and Zhang, Yulun and Timofte, Radu},
  booktitle={Proceedings of the Computer Vision and Pattern Recognition Conference},
  pages={12753--12763},
  year={2025}
}

@inproceedings{zhang2024efficient,
  title={Efficient deweahter mixture-of-experts with uncertainty-aware feature-wise linear modulation},
  author={Zhang, Rongyu and Luo, Yulin and Liu, Jiaming and Yang, Huanrui and Dong, Zhen and Gudovskiy, Denis and Okuno, Tomoyuki and Nakata, Yohei and Keutzer, Kurt and Du, Yuan and others},
  booktitle={Proceedings of the AAAI Conference on Artificial Intelligence},
  volume={38},
  number={15},
  pages={16812--16820},
  year={2024}
}

@article{ma2023prores,
  title={Prores: Exploring degradation-aware visual prompt for universal image restoration},
  author={Ma, Jiaqi and Cheng, Tianheng and Wang, Guoli and Zhang, Qian and Wang, Xinggang and Zhang, Lefei},
  journal={arXiv preprint arXiv:2306.13653},
  year={2023}
}

@inproceedings{jiang2024autodir,
  title={Autodir: Automatic all-in-one image restoration with latent diffusion},
  author={Jiang, Yitong and Zhang, Zhaoyang and Xue, Tianfan and Gu, Jinwei},
  booktitle={European Conference on Computer Vision},
  pages={340--359},
  year={2024},
  organization={Springer}
}

@article{liu20232,
  title={I$^2$SB: Image-to-Image Schr{\"o}dinger Bridge},
  author={Liu, Guan-Horng and Vahdat, Arash and Huang, De-An and Theodorou, Evangelos A and Nie, Weili and Anandkumar, Anima},
  journal={arXiv preprint arXiv:2302.05872},
  year={2023}
}

@article{zhou2023denoising,
  title={Denoising diffusion bridge models},
  author={Zhou, Linqi and Lou, Aaron and Khanna, Samar and Ermon, Stefano},
  journal={arXiv preprint arXiv:2309.16948},
  year={2023}
}

@inproceedings{liu2024residual,
  title={Residual denoising diffusion models},
  author={Liu, Jiawei and Wang, Qiang and Fan, Huijie and Wang, Yinong and Tang, Yandong and Qu, Liangqiong},
  booktitle={Proceedings of the IEEE/CVF Conference on Computer Vision and Pattern Recognition},
  pages={2773--2783},
  year={2024}
}

@article{yue2024efficient,
  title={Efficient diffusion model for image restoration by residual shifting},
  author={Yue, Zongsheng and Wang, Jianyi and Loy, Chen Change},
  journal={IEEE Transactions on Pattern Analysis and Machine Intelligence},
  year={2024},
  publisher={IEEE}
}

@article{ma2016waterloo,
  title={Waterloo exploration database: New challenges for image quality assessment models},
  author={Ma, Kede and Duanmu, Zhengfang and Wu, Qingbo and Wang, Zhou and Yong, Hongwei and Li, Hongliang and Zhang, Lei},
  journal={IEEE Transactions on Image Processing},
  volume={26},
  number={2},
  pages={1004--1016},
  year={2016},
  publisher={IEEE}
}

@inproceedings{martin2001database,
  title={A database of human segmented natural images and its application to evaluating segmentation algorithms and measuring ecological statistics},
  author={Martin, David and Fowlkes, Charless and Tal, Doron and Malik, Jitendra},
  booktitle={Proceedings eighth IEEE international conference on computer vision. ICCV 2001},
  volume={2},
  pages={416--423},
  year={2001},
  organization={IEEE}
}

@article{arbelaez2010contour,
  title={Contour detection and hierarchical image segmentation},
  author={Arbelaez, Pablo and Maire, Michael and Fowlkes, Charless and Malik, Jitendra},
  journal={IEEE transactions on pattern analysis and machine intelligence},
  volume={33},
  number={5},
  pages={898--916},
  year={2010},
  publisher={IEEE}
}

@article{tian2020image,
  title={Image denoising using deep CNN with batch renormalization},
  author={Tian, Chunwei and Xu, Yong and Zuo, Wangmeng},
  journal={Neural Networks},
  volume={121},
  pages={461--473},
  year={2020},
  publisher={Elsevier}
}

@article{zhang2018ffdnet,
  title={FFDNet: Toward a fast and flexible solution for CNN-based image denoising},
  author={Zhang, Kai and Zuo, Wangmeng and Zhang, Lei},
  journal={IEEE Transactions on Image Processing},
  volume={27},
  number={9},
  pages={4608--4622},
  year={2018},
  publisher={IEEE}
}

@inproceedings{zhang2017learning,
  title={Learning deep CNN denoiser prior for image restoration},
  author={Zhang, Kai and Zuo, Wangmeng and Gu, Shuhang and Zhang, Lei},
  booktitle={Proceedings of the IEEE conference on computer vision and pattern recognition},
  pages={3929--3938},
  year={2017}
}

@article{zhang2017beyond,
  title={Beyond a gaussian denoiser: Residual learning of deep cnn for image denoising},
  author={Zhang, Kai and Zuo, Wangmeng and Chen, Yunjin and Meng, Deyu and Zhang, Lei},
  journal={IEEE transactions on image processing},
  volume={26},
  number={7},
  pages={3142--3155},
  year={2017},
  publisher={IEEE}
}

@inproceedings{dabov2007color,
  title={Color image denoising via sparse 3D collaborative filtering with grouping constraint in luminance-chrominance space},
  author={Dabov, Kostadin and Foi, Alessandro and Katkovnik, Vladimir and Egiazarian, Karen},
  booktitle={2007 IEEE international conference on image processing},
  volume={1},
  pages={I--313},
  year={2007},
  organization={IEEE}
}

@inproceedings{li2025foundir,
  title={Foundir: Unleashing million-scale training data to advance foundation models for image restoration},
  author={Li, Hao and Chen, Xiang and Dong, Jiangxin and Tang, Jinhui and Pan, Jinshan},
  booktitle={Proceedings of the IEEE/CVF international conference on computer vision},
  pages={12626--12636},
  year={2025}
}

@inproceedings{rajagopalan2025gendeg,
  title={Gendeg: Diffusion-based degradation synthesis for generalizable all-in-one image restoration},
  author={Rajagopalan, Sudarshan and Nair, Nithin Gopalakrishnan and Paranjape, Jay N and Patel, Vishal M},
  booktitle={Proceedings of the Computer Vision and Pattern Recognition Conference},
  pages={28144--28154},
  year={2025}
}

@inproceedings{luo2025visual,
  title={Visual-Instructed Degradation Diffusion for All-in-One Image Restoration},
  author={Luo, Wenyang and Qin, Haina and Chen, Zewen and Wang, Libin and Zheng, Dandan and Li, Yuming and Liu, Yufan and Li, Bing and Hu, Weiming},
  booktitle={Proceedings of the Computer Vision and Pattern Recognition Conference},
  pages={12764--12777},
  year={2025}
}

@inproceedings{zheng2024selective,
  title={Selective hourglass mapping for universal image restoration based on diffusion model},
  author={Zheng, Dian and Wu, Xiao-Ming and Yang, Shuzhou and Zhang, Jian and Hu, Jian-Fang and Zheng, Wei-Shi},
  booktitle={Proceedings of the IEEE/CVF conference on computer vision and pattern recognition},
  pages={25445--25455},
  year={2024}
}

@inproceedings{liang2021swinir,
  title={Swinir: Image restoration using swin transformer},
  author={Liang, Jingyun and Cao, Jiezhang and Sun, Guolei and Zhang, Kai and Van Gool, Luc and Timofte, Radu},
  booktitle={Proceedings of the IEEE/CVF international conference on computer vision},
  pages={1833--1844},
  year={2021}
}

@inproceedings{zamir2022restormer,
  title={Restormer: Efficient transformer for high-resolution image restoration},
  author={Zamir, Syed Waqas and Arora, Aditya and Khan, Salman and Hayat, Munawar and Khan, Fahad Shahbaz and Yang, Ming-Hsuan},
  booktitle={Proceedings of the IEEE/CVF conference on computer vision and pattern recognition},
  pages={5728--5739},
  year={2022}
}

@article{wang2025irbridge,
  title={IRBridge: Solving Image Restoration Bridge with Pre-trained Generative Diffusion Models},
  author={Wang, Hanting and Jin, Tao and Lin, Wang and Wang, Shulei and Huang, Hai and Ji, Shengpeng and Zhao, Zhou},
  journal={arXiv preprint arXiv:2505.24406},
  year={2025}
}

@inproceedings{zhang2025uncertainty,
  title={Uncertainty-guided Perturbation for Image Super-Resolution Diffusion Model},
  author={Zhang, Leheng and You, Weiyi and Shi, Kexuan and Gu, Shuhang},
  booktitle={Proceedings of the Computer Vision and Pattern Recognition Conference},
  pages={17980--17989},
  year={2025}
}

@article{luo2023image,
  title={Image restoration with mean-reverting stochastic differential equations},
  author={Luo, Ziwei and Gustafsson, Fredrik K and Zhao, Zheng and Sj{\"o}lund, Jens and Sch{\"o}n, Thomas B},
  journal={arXiv preprint arXiv:2301.11699},
  year={2023}
}

@inproceedings{li2023bbdm,
  title={Bbdm: Image-to-image translation with brownian bridge diffusion models},
  author={Li, Bo and Xue, Kaitao and Liu, Bin and Lai, Yu-Kun},
  booktitle={Proceedings of the IEEE/CVF conference on computer vision and pattern Recognition},
  pages={1952--1961},
  year={2023}
}

@article{kendall2017uncertainties,
  title={What uncertainties do we need in bayesian deep learning for computer vision?},
  author={Kendall, Alex and Gal, Yarin},
  journal={Advances in neural information processing systems},
  volume={30},
  year={2017}
}

@article{leonard2013survey,
  title={A survey of the Schr{\"o}dinger problem and some of its connections with optimal transport},
  author={L{\'e}onard, Christian},
  journal={Discrete and Continuous Dynamical Systems},
  volume={34},
  number={4},
  pages={1533--1574},
  year={2013},
  publisher={Discrete and Continuous Dynamical Systems}
}

@article{zamir2022learning,
  title={Learning enriched features for fast image restoration and enhancement},
  author={Zamir, Syed Waqas and Arora, Aditya and Khan, Salman and Hayat, Munawar and Khan, Fahad Shahbaz and Yang, Ming-Hsuan and Shao, Ling},
  journal={IEEE transactions on pattern analysis and machine intelligence},
  volume={45},
  number={2},
  pages={1934--1948},
  year={2022},
  publisher={IEEE}
}

@article{liu2022flow,
  title={Flow straight and fast: Learning to generate and transfer data with rectified flow},
  author={Liu, Xingchao and Gong, Chengyue and Liu, Qiang},
  journal={arXiv preprint arXiv:2209.03003},
  year={2022}
}

@inproceedings{cuibio,
  title={Bio-Inspired Image Restoration},
  author={Cui, Yuning and Ren, Wenqi and Knoll, Alois},
  booktitle={The Thirty-ninth Annual Conference on Neural Information Processing Systems}
}

@inproceedings{cui2025adair,
  title={Adair: Adaptive all-in-one image restoration via frequency mining and modulation},
  author={Cui, Yuning and Zamir, Syed Waqas and Khan, Salman and Knoll, Alois and Shah, Mubarak and Khan, Fahad Shahbaz},
  booktitle={13th international conference on learning representations, ICLR 2025},
  pages={57335--57356},
  year={2025},
  organization={International Conference on Learning Representations, ICLR}
}

@article{luo2023controlling,
  title={Controlling Vision-Language Models for Universal Image Restoration},
  author={Luo, Ziwei and Gustafsson, Fredrik K and Zhao, Zheng and Sj{\"o}lund, Jens and Sch{\"o}n, Thomas B},
  journal={arXiv preprint arXiv:2310.01018},
  year={2023}
}

@article{benamou2000computational,
  title={A computational fluid mechanics solution to the Monge-Kantorovich mass transfer problem},
  author={Benamou, Jean-David and Brenier, Yann},
  journal={Numerische Mathematik},
  volume={84},
  number={3},
  pages={375--393},
  year={2000},
  publisher={Springer-Verlag Berlin/Heidelberg}
}

@book{rogers2000diffusions,
  title={Diffusions, Markov processes, and martingales},
  author={Rogers, L Chris G and Williams, David},
  volume={2},
  year={2000},
  publisher={Cambridge university press}
}

@inproceedings{li2024sam,
  title={Sam-deblur: Let segment anything boost image deblurring},
  author={Li, Siwei and Liu, Mingxuan and Zhang, Yating and Chen, Shu and Li, Haoxiang and Dou, Zifei and Chen, Hong},
  booktitle={ICASSP 2024-2024 IEEE International Conference on Acoustics, Speech and Signal Processing (ICASSP)},
  pages={2445--2449},
  year={2024},
  organization={IEEE}
}

@inproceedings{wang2022uformer,
  title={Uformer: A general u-shaped transformer for image restoration},
  author={Wang, Zhendong and Cun, Xiaodong and Bao, Jianmin and Zhou, Wengang and Liu, Jianzhuang and Li, Houqiang},
  booktitle={Proceedings of the IEEE/CVF conference on computer vision and pattern recognition},
  pages={17683--17693},
  year={2022}
}

@book{doob1984classical,
  title={Classical potential theory and its probabilistic counterpart},
  author={Doob, Joseph L and Doob, JI},
  volume={262},
  year={1984},
  publisher={Springer}
}

@inproceedings{li2022all,
  title={All-in-one image restoration for unknown corruption},
  author={Li, Boyun and Liu, Xiao and Hu, Peng and Wu, Zhongqin and Lv, Jiancheng and Peng, Xi},
  booktitle={Proceedings of the IEEE/CVF conference on computer vision and pattern recognition},
  pages={17452--17462},
  year={2022}
}

@inproceedings{langley00,
 author    = {P. Langley},
 title     = {Crafting Papers on Machine Learning},
 year      = {2000},
 pages     = {1207--1216},
 editor    = {Pat Langley},
 booktitle     = {Proceedings of the 17th International Conference
              on Machine Learning (ICML 2000)},
 address   = {Stanford, CA},
 publisher = {Morgan Kaufmann}
}

@inproceedings{zhang2023ingredient,
  title={Ingredient-oriented multi-degradation learning for image restoration},
  author={Zhang, Jinghao and Huang, Jie and Yao, Mingde and Yang, Zizheng and Yu, Hu and Zhou, Man and Zhao, Feng},
  booktitle={Proceedings of the IEEE/CVF conference on computer vision and pattern recognition},
  pages={5825--5835},
  year={2023}
}
\bibliographystyle{icml2026}

\newpage
\appendix
\onecolumn

To facilitate reproducibility, our code and data will be made publicly available upon acceptance.

\section*{Mathematical Notation Note}
Throughout the appendices, the uncertainty-aware coefficients $\alpha_t(\mathbf{u})$, $\gamma_t(\mathbf{u})$, and $\beta_t(\mathbf{u})$, as well as the terminal variance $\boldsymbol{\sigma}_u^2$, are spatially varying tensors matching the image dimensions ($H \times W \times C$). Consequently, all arithmetic operations involving these terms (e.g., multiplication, division, addition, and squaring) are performed \textbf{element-wise} (Hadamard operations), unless specified otherwise. For simplicity, we omit the explicit dependency $(\mathbf{u})$ in derivations where the context is clear.

\section{Drift Singularity in General Diffusion Bridges}
\label{sec:general_drift_derivation}

In this section, we provide the analysis showing that the drift singularity induced by deterministic terminal conditioning is a fundamental property of diffusion bridges constructed from general It\^o diffusions. We further demonstrate that introducing a soft terminal constraint regularizes this singular behavior effectively.


\subsection{General Diffusion Prior and Doob's $h$-Transform}

Let $\{\mathbf{x}_t\}_{t\in[0,1]}$ be a $d$-dimensional It\^o diffusion in $\mathbb{R}^d$ governed by the stochastic differential equation (SDE):
\begin{equation}
    d\mathbf{x}_t
    =
    \mathbf{f}(\mathbf{x}_t,t)\,dt
    +
    g(t)\,d\mathbf{w}_t,
    \label{eq:general_sde}
\end{equation}
where $\mathbf{f}: \mathbb{R}^d \times [0,1] \to \mathbb{R}^d$ is assumed to be locally Lipschitz in $\mathbf{x}$ and continuous in $t$. This assumption implies that the prior diffusion process is non-singular and well-behaved locally, ensuring that any singularity arising in the bridge process is strictly a consequence of the terminal conditioning. For notational simplicity and without loss of generality, we assume an isotropic diffusion coefficient where $g(t): [0,1] \to \mathbb{R}_{>0}$ is a bounded, continuous scalar function, and $\mathbf{w}_t$ denotes a standard $d$-dimensional Brownian motion.

Let $p(\mathbf{x}_1|\mathbf{x}_t)$ denote the transition density of the process, conditioning the diffusion on a target terminal distribution $p_{\text{target}}(\mathbf{x}_1)$ via Doob's $h$-transform yields the bridge SDE:

\begin{equation}
    d\mathbf{x}_t
    =
    \Big[
    \mathbf{f}(\mathbf{x}_t,t)
    +
    g(t)^2 \nabla_{\mathbf{x}_t} \log h(\mathbf{x}_t,t)
    \Big] dt
    +
    g(t)\,d\mathbf{w}_t,
    \label{eq:bridge_sde}
\end{equation}
where the harmonic function $h$ represents the likelihood of reaching the target distribution:
\begin{equation}
    h(\mathbf{x}_t,t)
    =
    \int p(\mathbf{x}_1|\mathbf{x}_t)\,p_{\text{target}}(\mathbf{x}_1)\,d\mathbf{x}_1.
\end{equation}

\subsection{Short-Time Asymptotics of the Transition Density}
 Let $\Delta t =1-t$ be the time-to-go. We analyze the relative magnitude of the deterministic and stochastic components in the SDE discretization when $t \to 1$:
\begin{equation}
    \Delta \mathbf{x}_t \approx \mathbf{f}(\mathbf{x}_t, t)\Delta t + g(t)\Delta \mathbf{w}_t.
\end{equation}
The magnitude of the deterministic drift displacement scales linearly with time:
\begin{equation}
    \label{eq:delta_drifft}
    \|\Delta \mathbf{x}_{\text{drift}}\| \approx \|\mathbf{f}(\mathbf{x}_t, t)\| \Delta t = \mathcal{O}(\Delta t).
\end{equation}
In contrast, the stochastic diffusion displacement is governed by the properties of Brownian motion, where $\Delta \mathbf{w}_t \sim \mathcal{N}(\mathbf{0}, \Delta t \mathbf{I})$, $\sim$ denotes to asymptotic behavior. Its typical magnitude (measured by the root-mean-square) scales with the square root of time:
\begin{equation}
\label{eq:delta_diff}
    \text{RMS}(\Delta \mathbf{x}_{\text{diff}}) = \sqrt{\mathbb{E}[\|g(t)\Delta \mathbf{w}_t\|^2]} = g(t)\sqrt{d \cdot \Delta t} = \mathcal{O}(\sqrt{\Delta t}).
\end{equation}
Comparing the two terms in the limit $\Delta t \to 0$:
\begin{equation}
    \label{eq:comared_fg}
    \lim_{\Delta t \to 0} \frac{\|\text{Drift}\|}{\|\text{Diffusion}\|} \propto \lim_{\Delta t \to 0} \frac{\Delta t}{\sqrt{\Delta t}} = \lim_{\Delta t \to 0} \sqrt{\Delta t} = 0.
\end{equation}
This confirms that the diffusion term is asymptotically dominant ($\mathcal{O}(\sqrt{\Delta t}) \gg \mathcal{O}(\Delta t)$). Consequently, the noise effectively masks the local curvature of the drift field, rendering the transition density Gaussian to leading order:
\begin{equation}
    p(\mathbf{x}_1|\mathbf{x}_t)
    =
    \mathcal{N}\!\left(
    \mathbf{x}_1;
    \boldsymbol{\mu}(\mathbf{x}_t,t),
    \bar{\sigma}_t^2 \mathbf{I}
    \right)
    \big(1 + \mathcal{O}(1-t)\big),
\end{equation}
where $\boldsymbol{\mu}(\mathbf{x}_t,t) = \mathbf{x}_t + \mathbf{f}(\mathbf{x}_t,t)(1-t)$ is the first-order mean expansion. 
The accumulated diffusion variance is defined as $\bar{\sigma}_t^2 = \int_t^1 g(s)^2 ds\sim g(1)^2(1-t)$. 
Note that higher-order corrections (arising from the Hessian of $\mathbf{f}$ and variations in $g$) are of order $\mathcal{O}((1-t)^2)$ or smaller, and thus vanish in the limit.

The gradient of the score function relies on the Jacobian of the mean $\boldsymbol{\mu}$ with respect to the state $\mathbf{x}_t$. We expand the gradient operator $\nabla_{\mathbf{x}_t}$ linearly over the definition of $\boldsymbol{\mu}$:

\begin{align}
    \mathbf{J}_{\boldsymbol{\mu}}(\mathbf{x}_t,t)
    &\triangleq
    \nabla_{\mathbf{x}_t} \boldsymbol{\mu}(\mathbf{x}_t,t) \nonumber \\
    &=
    \nabla_{\mathbf{x}_t} \left[ \mathbf{x}_t + \mathbf{f}(\mathbf{x}_t,t)(1-t) \right] \nonumber \\
    &=
    \nabla_{\mathbf{x}_t} \mathbf{x}_t + \nabla_{\mathbf{x}_t} \left[ \mathbf{f}(\mathbf{x}_t,t)(1-t) \right].
\end{align}
The first term is the derivative of a vector with respect to itself, which yields the identity matrix $\mathbf{I} \in \mathbb{R}^{d \times d}$. The second term involves the Jacobian of the drift function $\mathbf{f}$. Since $\mathbf{f}$ is locally Lipschitz, its partial derivatives are bounded. Thus:


\begin{equation}
    \mathbf{J}_{\boldsymbol{\mu}}(\mathbf{x}_t,t)
    =
    \mathbf{I} + \left( \nabla_{\mathbf{x}_t}\mathbf{f}(\mathbf{x}_t,t) \right) (1-t)
    =
    \mathbf{I} + \mathcal{O}(1-t).
    \label{eq:jacobian_asymptotic}
\end{equation}

This implies that for small $1-t$, the mapping $\mathbf{x}_t \mapsto \boldsymbol{\mu}(\mathbf{x}_t,t)$ is a near-identity transformation, preserving the geometric structure of the Euclidean distance in the score computation.

\subsection{Deterministic Terminal Constraint and Drift Singularity}

Consider the deterministic terminal constraint imposed by a Dirac delta distribution:

\begin{equation}
    p_{\text{target}}(\mathbf{x}_1)
    =
    \delta(\mathbf{x}_1 - \mathbf{x}_{lq}).
\end{equation}
In this setting, the harmonic function reduces directly to the transition density evaluated at $\mathbf{x}_{lq}$:
\begin{equation}
    h_{\text{strict}}(\mathbf{x}_t,t)
    =
    p(\mathbf{x}_{lq}|\mathbf{x}_t)
    \propto
    \exp\!\left(
    -\frac{\|\mathbf{x}_{lq}-\boldsymbol{\mu}(\mathbf{x}_t,t)\|^2}{2\bar{\sigma}_t^2}
    \right).
\end{equation}

To derive the score (the gradient of the log-harmonic function), we apply the chain rule using Eq.~\eqref{eq:jacobian_asymptotic}:
\begin{align}
    \nabla_{\mathbf{x}_t} \log h_{\text{strict}}(\mathbf{x}_t,t)
    &=
    -\frac{1}{2\bar{\sigma}_t^2} \nabla_{\mathbf{x}_t} \|\mathbf{x}_{lq}-\boldsymbol{\mu}(\mathbf{x}_t,t)\|^2 \nonumber \\
    &=
    -\frac{1}{\bar{\sigma}_t^2} \Big(\mathbf{J}_{\boldsymbol{\mu}}(\mathbf{x}_t,t)\Big)^\top \big(\boldsymbol{\mu}(\mathbf{x}_t,t) - \mathbf{x}_{lq}\big) \nonumber \\
    &=
    \frac{\mathbf{x}_{lq}-\mathbf{x}_t}{\bar{\sigma}_t^2}
    \big(1 + \mathcal{O}(1-t)\big).
\end{align}

Substituting the asymptotic variance $\bar{\sigma}_t^2 \sim g(1)^2(1-t)$, the correction term in the bridge drift (Eq.~\eqref{eq:bridge_sde}) simplifies asymptotically. Assuming the continuity of $g(t)$, we have $g(t)^2/g(1)^2 \to 1$, yielding the limiting behavior:
\begin{equation}
    g(t)^2 \nabla_{\mathbf{x}_t} \log h_{\text{strict}}(\mathbf{x}_t,t)
    \;\sim\;
    g(1)^2 \frac{\mathbf{x}_{lq}-\mathbf{x}_t}{g(1)^2(1-t)}
    =
    \frac{\mathbf{x}_{lq}-\mathbf{x}_t}{1-t},
    \quad \text{as } t \to 1.
\end{equation}

The singularity arises from a fundamental \emph{mismatch in convergence rates} between the numerator and the denominator. 
Although the state $\mathbf{x}_t$ converges to $\mathbf{x}_{lq}$, the diffusive nature of the underlying Brownian motion ensures that the expected deviation scales as $\|\mathbf{x}_{lq}-\mathbf{x}_t\| \sim \mathcal{O}(\sqrt{1-t})$ (Eq. \ref{eq:delta_diff}).
In contrast, the denominator vanishes linearly as $\mathcal{O}(1-t)$.
Since the stochastic deviation (numerator) decays more slowly than the remaining time (denominator), the ratio diverges:

\begin{equation}
    \lim_{t\to 1}
    \left\|
    \frac{\mathbf{x}_{lq}-\mathbf{x}_t}{1-t}
    \right\|
    \propto
    \lim_{t\to 1}
    \frac{\sqrt{1-t}}{1-t}
    =
    \lim_{t\to 1}
    \frac{1}{\sqrt{1-t}}
    =
    \infty.
\end{equation}
This confirms that the drift force becomes unbounded, causing numerical instability near the terminal time.

\subsection{Relaxed Terminal Distribution and Drift Regularization}
To mitigate this singularity, we introduce a relaxed terminal distribution modeled as a Gaussian centered at $\mathbf{x}_{lq}$ with a pixel-wise variance map $\boldsymbol{\sigma}_u^2 \in \mathbb{R}^{H \times W \times C}$:
\begin{equation}
    p_{\text{target}}(\mathbf{x}_1)
    =
    \mathcal{N}(\mathbf{x}_1;\mathbf{x}_{lq},\boldsymbol{\sigma}_u^2),
\end{equation}
The harmonic function is now the convolution of the transition density and the target Gaussian. Due to the element-wise independence assumption, this yields another Gaussian with summed variances:
\begin{equation}
    h_{\text{relaxed}}(\mathbf{x}_t,t)
    =
    \mathcal{N}\!\left(
    \mathbf{x}_{lq};
    \boldsymbol{\mu}(\mathbf{x}_t,t),
    \bar{\sigma}_t^2 + \boldsymbol{\sigma}_u^2
    \right).
\end{equation}

Following the same gradient derivation as before, the score function (computed element-wise) becomes:
\begin{equation}
    \nabla_{\mathbf{x}_t} \log h_{\text{relaxed}}(\mathbf{x}_t,t)
    =
    \frac{\mathbf{x}_{lq}-\mathbf{x}_t}{\bar{\sigma}_t^2 + \boldsymbol{\sigma}_u^2}
    \big(1 + \mathcal{O}(1-t)\big).
\end{equation}
In the limit $t \to 1$, the transition variance $\bar{\sigma}_t^2 \to 0$, but the regularization term $\boldsymbol{\sigma}_u^2$ prevents the denominator from vanishing. The bridge drift correction converges to a finite value:
\begin{equation}
    \lim_{t\to 1}
    g(t)^2 \nabla_{\mathbf{x}_t} \log h_{\text{relaxed}}(\mathbf{x}_t,t)
    =
    \frac{g(1)^2}{\boldsymbol{\sigma}_u^2}(\mathbf{x}_{lq}-\mathbf{x}_1).
\end{equation}

\section{Theoretical Justification from an Entropy-Regularized Optimal Transport Perspective}
\label{sec:appendix_eot}

In this section, we provide a rigorous theoretical justification for the proposed uncertainty-aware path scheduling strategy by establishing its connection to entropy-regularized optimal transport (EOT) and Schrödinger bridges.
We show that the non-linear, uncertainty-dependent interpolation adopted in our diffusion bridge can be interpreted as a spatially adaptive time reparameterization of an entropy-regularized transport geodesic.

\subsection{Background: Dynamics Optimal Transport}

Let $\rho_0$ and $\rho_1$ be two probability measures on $\mathcal{X}=\mathbb{R}^d$.
The classical quadratic-cost optimal transport problem admits the dynamics Benamou--Brenier formulation~\cite{benamou2000computational}:
\begin{equation}
\label{eq:bb_ot}
\min_{\rho_t, \mathbf{v}_t}
\int_0^1 \int_{\mathcal{X}}
\frac{1}{2} \rho_t(\mathbf{x}) \|\mathbf{v}_t(\mathbf{x})\|^2
\, d\mathbf{x} \, dt,
\quad
\text{s.t. }
\partial_t \rho_t + \nabla \cdot (\rho_t \mathbf{v}_t) = 0,
\end{equation}
with boundary conditions $\rho_{t=0}=\rho_0$ and $\rho_{t=1}=\rho_1$.
Under mild regularity conditions, the solution corresponds to a constant-speed geodesic in Wasserstein space.

\subsection{Entropy-Regularized Optimal Transport and Schrödinger Bridges}

Entropy-regularized optimal transport can be rigorously formulated through the Schrödinger bridge problem~\cite{leonard2013survey}, which introduces stochasticity by considering transport relative to a Brownian reference process.
In its dynamics formulation, the Schrödinger bridge problem is given by:
\begin{equation}
\label{eq:sb_dynamic}
\min_{\rho_t, \mathbf{v}_t}
\int_0^1 \int_{\mathcal{X}}
\frac{1}{2} \rho_t(\mathbf{x}) \|\mathbf{v}_t(\mathbf{x})\|^2
\, d\mathbf{x} \, dt,
\end{equation}
subject to the Fokker--Planck constraint with diffusion coefficient $\varepsilon$:
\begin{equation}
\label{eq:fp_constraint}
\partial_t \rho_t
+
\nabla \cdot (\rho_t \mathbf{v}_t)
=
\varepsilon \, \Delta \rho_t,
\end{equation}
where $\varepsilon>0$ denotes the entropy regularization strength (or noise temperature).

As $\varepsilon \to 0$, Eq.~\eqref{eq:sb_dynamic}--\ref{eq:fp_constraint} recovers the deterministic optimal transport problem, while larger $\varepsilon$ induces increasingly diffusive and stochastic transport paths.
This formulation is known to be equivalent to entropy-regularized optimal transport~\cite{leonard2013survey}.

\subsection{Detailed Derivation of the HJB--Fokker--Planck System}

We derive the optimality conditions using the method of Lagrange multipliers.
We introduce a scalar potential field $\phi_t(\mathbf{x})$ as the Lagrange multiplier for the constraint~\ref{eq:fp_constraint}.
The Lagrangian functional $\mathcal{L}(\rho, \mathbf{v}, \phi)$ is defined as:
\begin{equation}
\label{eq:sb_lagrangian_raw}
\mathcal{L}
=
\int_0^1 \int_{\mathcal{X}}
\left[
\frac{1}{2} \rho_t \|\mathbf{v}_t\|^2
+
\phi_t
\left(
\varepsilon \Delta \rho_t
-
\partial_t \rho_t
-
\nabla \cdot (\rho_t \mathbf{v}_t)
\right)
\right]
\, d\mathbf{x} \, dt.
\end{equation}

\subsubsection{Integration by Parts}
To facilitate variational differentiation with respect to $\rho_t$ and $\mathbf{v}_t$, we first transform the constraint terms using integration by parts.
Assuming terminal terms vanish at infinity (or on the domain terminal) and at the time endpoints for the variations, we have:

1. \textbf{Time derivative term:}
\begin{equation}
- \int_0^1 \int_{\mathcal{X}} \phi_t \, \partial_t \rho_t \, d\mathbf{x} \, dt
=
\int_0^1 \int_{\mathcal{X}} \rho_t \, \partial_t \phi_t \, d\mathbf{x} \, dt.
\end{equation}

2. \textbf{Advection term:}
\begin{equation}
- \int_0^1 \int_{\mathcal{X}} \phi_t \nabla \cdot (\rho_t \mathbf{v}_t) \, d\mathbf{x} \, dt
=
\int_0^1 \int_{\mathcal{X}} \rho_t \mathbf{v}_t \cdot \nabla \phi_t \, d\mathbf{x} \, dt.
\end{equation}

3. \textbf{Diffusion term:}
Applying Green's second identity (double integration by parts in space), the Laplacian moves to the multiplier:
\begin{equation}
\int_0^1 \int_{\mathcal{X}} \phi_t \, \varepsilon \Delta \rho_t \, d\mathbf{x} \, dt
=
\int_0^1 \int_{\mathcal{X}} \varepsilon \rho_t \Delta \phi_t \, d\mathbf{x} \, dt.
\end{equation}

Substituting these back into Eq.~\eqref{eq:sb_lagrangian_raw}, the transformed Lagrangian becomes:
\begin{equation}
\label{eq:sb_lagrangian_transformed}
\mathcal{L}
=
\int_0^1 \int_{\mathcal{X}}
\rho_t
\left[
\frac{1}{2} \|\mathbf{v}_t\|^2
+
\partial_t \phi_t
+
\mathbf{v}_t \cdot \nabla \phi_t
+
\varepsilon \Delta \phi_t
\right]
\, d\mathbf{x} \, dt.
\end{equation}

\subsubsection{Optimality Conditions}

We now take the first variations of the transformed Lagrangian.

\textbf{1. Optimal Velocity Field:}
Taking the variation with respect to $\mathbf{v}_t$ and setting $\delta \mathcal{L} / \delta \mathbf{v}_t = 0$:
\begin{equation}
\rho_t \mathbf{v}_t + \rho_t \nabla \phi_t = 0
\quad \implies \quad
\mathbf{v}_t(\mathbf{x}) = - \nabla \phi_t(\mathbf{x}).
\end{equation}

\textbf{2. Backward HJB Equation:}
Taking the variation with respect to the density $\rho_t$ yields the stationarity condition for the potential $\phi_t$.
Setting $\delta \mathcal{L} / \delta \rho_t = 0$, we extract the term inside the brackets of Eq.~\eqref{eq:sb_lagrangian_transformed}:

\begin{equation}
\frac{1}{2} \|\mathbf{v}_t\|^2
+
\partial_t \phi_t
+
\mathbf{v}_t \cdot \nabla \phi_t
+
\varepsilon \Delta \phi_t
= 0.
\end{equation}
Substituting the optimal velocity $\mathbf{v}_t = -\nabla \phi_t$ into this equation:
\begin{equation}
\frac{1}{2} \|-\nabla \phi_t\|^2
+
\partial_t \phi_t
+
(-\nabla \phi_t) \cdot \nabla \phi_t
+
\varepsilon \Delta \phi_t
= 0.
\end{equation}
Simplifying the terms:
\begin{equation}
\label{eq:simply_hjb}
\frac{1}{2} \|\nabla \phi_t\|^2
+
\partial_t \phi_t
-
\|\nabla \phi_t\|^2
+
\varepsilon \Delta \phi_t
= 0
\quad \implies \quad
\partial_t \phi_t
+
\varepsilon \Delta \phi_t
-
\frac{1}{2} \|\nabla \phi_t\|^2
= 0.
\end{equation}

\textbf{3. Coupled System:}
Combining the optimal control result (Eq. ~\ref{eq:fp_constraint})
with the primal constraint (Eq. ~\ref{eq:simply_hjb}), we obtain the fundamental system governing entropy-regularized transport:
\begin{equation}
\label{eq:hjb_fp_system}
\begin{cases}
\partial_t \rho_t - \nabla \cdot (\rho_t \nabla \phi_t)
= \varepsilon \Delta \rho_t,
& \text{(Forward Fokker--Planck)}
\\
\partial_t \phi_t
+
\varepsilon \Delta \phi_t
=
\frac{1}{2} \|\nabla \phi_t\|^2.
& \text{(Backward Viscous HJB)}
\end{cases}
\end{equation}
The backward equation is a Hamilton--Jacobi--Bellman (HJB) equation with a viscous regularization term $\varepsilon \Delta \phi_t$.
This term serves as a smoothing operator, mathematically justifying why entropy regularization leads to smooth, uncertainty-aware transport trajectories rather than the potentially non-smooth geodesics of classical optimal transport.
\subsection{Interpretation as Adaptive Time Reparameterization}

The viscous HJB equation derived above governs the evolution of the transport potential.
Crucially, the diffusion term $\varepsilon \Delta \phi_t$ slows down the transport in regions of high curvature in the potential landscape.
Our proposed uncertainty-aware path scheduling strategy can be viewed as an approximation of this mechanism: explicitly modulating the transport speed based on uncertainty mirrors the physical effect of the $\varepsilon \Delta \phi_t$ term, which naturally retards transport to allow for diffusive exploration in high-entropy regions.

\subsection{Implication for Mean Trajectories and Time Reparameterization}

We now analyze the implications of the derived viscous Hamilton--Jacobi--Bellman equation on the evolution of the transport process.
Specifically, we focus on the expected trajectory of a particle transported from a clean image $\mathbf{x}_{hq}$ to a degraded observation $\mathbf{x}_{lq}$.

Under the standard assumption of quadratic transport cost and Gaussian marginals adopted in diffusion bridges, the optimal transport process is a Gaussian bridge.
In this setting, the conditional expectation $\mathbb{E}[\mathbf{x}_t]$ is constrained to lie on the straight-line geodesic connecting the endpoints.
However, the \emph{rate} of progression along this geodesic depends on the regularization strength.
While classical optimal transport ($\varepsilon \to 0$) dictates a constant-speed progression, entropy regularization introduces a competition between deterministic drift and stochastic diffusion.

Formally, the mean trajectory can be expressed as:
\begin{equation}
\label{eq:mean_path}
\mathbb{E}[\mathbf{x}_t]
=
(1 - s(t)) \, \mathbf{x}_{hq}
+
s(t) \, \mathbf{x}_{lq},
\end{equation}
where $s(t) \in [0,1]$ is a monotonically increasing time reparameterization function satisfying $s(0)=0$ and $s(1)=1$.

The effect of the uncertainty-dependent regularization becomes explicit by examining the HJB equation derived in Eq.~\eqref{eq:hjb_fp_system}:
\begin{equation}
\mathbf{v}_t = -\nabla \phi_t, \quad \text{where} \quad \partial_t \phi_t + \varepsilon \Delta \phi_t = \frac{1}{2} \|\nabla \phi_t\|^2.
\end{equation}
In regions with high uncertainty (large $\varepsilon$ and intermediate state), the diffusion term $\varepsilon \Delta \phi_t$ dominates.
As a smoothing operator, the Laplacian $\Delta \phi_t$ tends to flatten the potential landscape $\phi_t$, thereby reducing the magnitude of the deterministic drift gradient $\|\nabla \phi_t\|$.
Physically, this implies that in high-uncertainty regimes, the transport relies more on stochastic diffusion (exploration) than on deterministic drift (advection).
To satisfy the endpoint constraints under reduced drift velocity, the effective progression $s(t)$ along the geodesic is retarded.

\section{Detailed Derivations of Unified Formulation}
\label{sec:unified_proof}

In this section, we provide rigorous step-by-step derivations to demonstrate that the forward processes of representative diffusion-based image restoration methods (DDBM~\cite{zhou2023denoising}, I$^2$SB~\cite{liu20232}, ResShift~\cite{yue2024efficient}, RDDM~\cite{liu2024residual}, and DiffUIR~\cite{zheng2024selective}) can be unified into our proposed formulation:
\begin{equation}
    \mathbf{x}_t
    =
    \underbrace{\alpha_t \mathbf{x}_{lq} + \gamma_t \mathbf{x}_{hq}}_{\text{Path Schedule}}
    +
    \underbrace{\beta_t \boldsymbol{\epsilon}}_{\text{Noise Schedule}},
    \quad
    \boldsymbol{\epsilon} \sim \mathcal{N}(\mathbf{0}, \mathbf{I}).
    \label{eq:unified_target_supp}
\end{equation}

To ensure clarity and facilitate rigorous verification, we explicitly present the original equations and variable definitions from the respective literature before deriving the mapping to our unified formulation.

\subsection{Denoising Diffusion Bridge Models (DDBM)}

\textbf{Notation~\cite{zhou2023denoising}:}
\begin{itemize}
    \item $\mathbf{x}_0$: Data distribution (Clean, $\mathbf{x}_{hq}$).
    \item $\mathbf{x}_1$: Prior distribution (Degraded, $\mathbf{x}_{lq}$).
\end{itemize}

\textbf{Original Formulation:}
DDBM defines the forward diffusion bridge process via the SDE. The marginal distribution $q(\mathbf{x}_t|\mathbf{x}_0, \mathbf{x}_1)$ at time $t$ is derived as:
\begin{equation}
    \mathbf{x}_t = \mu_t(\mathbf{x}_0, \mathbf{x}_1) + \sigma_t \boldsymbol{\epsilon},
\end{equation}
where the mean $\mu_t$ is defined as:
\begin{equation}
    \mu_t(\mathbf{x}_0, \mathbf{x}_1) = \frac{\bar{\sigma}_t^2 \bar{\alpha}_1}{\bar{\sigma}_1^2}\mathbf{x}_1 + \left( \bar{\alpha}_t - \frac{\bar{\sigma}_t^2 \bar{\alpha}_1^2}{\bar{\sigma}_1^2 \bar{\alpha}_t} \right) \mathbf{x}_0.
    \label{eq:ddbm_complex}
\end{equation}
Here, $\bar{\alpha}_t$ and $\bar{\sigma}_t$ correspond to the noise schedule of the underlying VP-SDE or VE-SDE.

\textbf{Derivation to Unified Form:}
While Eq.~\eqref{eq:ddbm_complex} appears complex, DDBM typically adopts the \textbf{Brownian Bridge} parameterization in practice, where the underlying drift $f=0$ and diffusion $g=1$. Under this setting, the coefficients can be simplified as follows:
\begin{equation}
    \bar{\alpha}_t = 1, \quad \bar{\sigma}_t^2 = t, \quad \bar{\alpha}_1 = 1, \quad \bar{\sigma}_1^2 = 1.
\end{equation}
Substituting these into Eq.~\eqref{eq:ddbm_complex}:
\begin{align}
    \text{Coeff of } \mathbf{x}_1 &: \frac{t \cdot 1}{1} = t. \\
    \text{Coeff of } \mathbf{x}_0 &: 1 - \frac{t \cdot 1}{1} = 1-t.
\end{align}
Thus, the state $\mathbf{x}_t$ becomes:
\begin{align}
    \mathbf{x}_t &= t \mathbf{x}_1 + (1-t) \mathbf{x}_0 + \sigma_{\text{bridge}}(t) \boldsymbol{\epsilon} \nonumber \\
    &= t \mathbf{x}_{lq} + (1-t) \mathbf{x}_{hq} + \beta_t \boldsymbol{\epsilon}.
\end{align}
This strictly matches our unified form with:
\begin{equation}
    \alpha_t = t, \quad \gamma_t = 1-t, \quad \beta_t = \sqrt{t(1-t)}.
\end{equation}

\subsection{I$^2$SB: Image-to-Image Schrödinger Bridge}

\textbf{Notation~\cite{liu20232}:}
\begin{itemize}
    \item $\mathbf{x}_0$: Degraded image ($\mathbf{x}_{lq}$).
    \item $\mathbf{x}_1$: Clean image ($\mathbf{x}_{hq}$).
\end{itemize}

\textbf{Original Formulation:}
I$^2$SB constructs the bridge $p(\mathbf{x}_t|\mathbf{x}_0, \mathbf{x}_1)$ by conditioning a reference diffusion process:
\begin{equation}
    q(\mathbf{x}_t|\mathbf{x}_0, \mathbf{x}_1) = \mathcal{N}(\mathbf{x}_t; \boldsymbol{\mu}_t, \Sigma_t \mathbf{I}).
\end{equation}
The mean $\boldsymbol{\mu}_t$ is derived from the product of the forward and backward marginals of the reference process:
\begin{equation}
    \boldsymbol{\mu}_t = \frac{\bar{\sigma}_{1|t}^2 \bar{\alpha}_t}{\bar{\sigma}_{1|t}^2 + \bar{\alpha}_{1|t}^2 \bar{\sigma}_t^2} \mathbf{x}_0 + \frac{\bar{\alpha}_{1|t} \bar{\sigma}_t^2}{\bar{\sigma}_{1|t}^2 + \bar{\alpha}_{1|t}^2 \bar{\sigma}_t^2} \mathbf{x}_1,
    \label{eq:i2sb_mean}
\end{equation}
where $\bar{\alpha}, \bar{\sigma}$ are coefficients from the reference SDE.

\textbf{Derivation to Unified Form:}
Eq.~\eqref{eq:i2sb_mean} is a linear combination of $\mathbf{x}_0$ and $\mathbf{x}_1$. We define the coefficients as:
\begin{equation}
    c_0(t) = \frac{\bar{\sigma}_{1|t}^2 \bar{\alpha}_t}{\bar{\sigma}_{1|t}^2 + \bar{\alpha}_{1|t}^2 \bar{\sigma}_t^2}, \quad
    c_1(t) = \frac{\bar{\alpha}_{1|t} \bar{\sigma}_t^2}{\bar{\sigma}_{1|t}^2 + \bar{\alpha}_{1|t}^2 \bar{\sigma}_t^2}.
\end{equation}
Substituting the notation $\mathbf{x}_0 = \mathbf{x}_{lq}$ and $\mathbf{x}_1 = \mathbf{x}_{hq}$:
\begin{equation}
    \mathbf{x}_t = c_0(t) \mathbf{x}_{lq} + c_1(t) \mathbf{x}_{hq} + \Sigma_t^{1/2} \boldsymbol{\epsilon}.
\end{equation}
The form matches our formulation with:
\begin{equation}
    \alpha_t = c_0(t), \quad \gamma_t = c_1(t), \quad \beta_t = \sqrt{\Sigma_t}.
\end{equation}

\subsection{ResShift: Residual Shifting}

\textbf{Notation~\cite{yue2024efficient}:}
\begin{itemize}
    \item $\mathbf{x}_0$: Clean image ($\mathbf{x}_{hq}$).
    \item $\mathbf{y}$: Degraded image ($\mathbf{x}_{lq}$).
\end{itemize}

\textbf{Original Formulation:}
ResShift defines the transition kernel explicitly as shifting the residual $(\mathbf{y} - \mathbf{x}_0)$:
\begin{equation}
    q(\mathbf{x}_t|\mathbf{x}_0, \mathbf{y}) = \mathcal{N}(\mathbf{x}_t; \mathbf{x}_0 + \eta_t(\mathbf{y} - \mathbf{x}_0), \sigma_t^2 \mathbf{I}).
\end{equation}

\textbf{Derivation to Unified Form:}
Expanding the mean term:
\begin{align}
    \mathbf{x}_t &= \mathbf{x}_0 + \eta_t \mathbf{y} - \eta_t \mathbf{x}_0 + \sigma_t \boldsymbol{\epsilon} \nonumber \\
    &= \eta_t \mathbf{y} + (1 - \eta_t) \mathbf{x}_0 + \sigma_t \boldsymbol{\epsilon}.
\end{align}
Substituting $\mathbf{y}=\mathbf{x}_{lq}$ and $\mathbf{x}_0=\mathbf{x}_{hq}$:
\begin{equation}
    \mathbf{x}_t = \eta_t \mathbf{x}_{lq} + (1 - \eta_t) \mathbf{x}_{hq} + \sigma_t \boldsymbol{\epsilon}.
\end{equation}
Matching coefficients:
\begin{equation}
    \alpha_t = \eta_t, \quad \gamma_t = 1 - \eta_t, \quad \beta_t = \sigma_t.
\end{equation}

\subsection{RDDM: Residual Denoising Diffusion Models}

\textbf{Notation~\cite{liu2024residual}:}
\begin{itemize}
    \item $\mathbf{I}_{tar}$: Target/Clean image ($\mathbf{x}_{hq}$).
    \item $\mathbf{I}_{in}$: Input/Degraded image ($\mathbf{x}_{lq}$).
\end{itemize}

\textbf{Original Formulation:}
RDDM defines the forward process as:
\begin{equation}
    \mathbf{x}_t = \sqrt{\bar{\alpha}_t} \mathbf{I}_{tar} + (1 - \sqrt{\bar{\alpha}_t}) \mathbf{I}_{in} + \sqrt{1 - \bar{\alpha}_t} \boldsymbol{\epsilon},
\end{equation}
where $\bar{\alpha}_t$ follows the standard cosine or linear schedule.

\textbf{Derivation to Unified Form:}
Direct substitution yields:
\begin{equation}
    \mathbf{x}_t = (1 - \sqrt{\bar{\alpha}_t}) \mathbf{x}_{lq} + \sqrt{\bar{\alpha}_t} \mathbf{x}_{hq} + \sqrt{1 - \bar{\alpha}_t} \boldsymbol{\epsilon}.
\end{equation}
Matching coefficients:
\begin{equation}
    \alpha_t = 1 - \sqrt{\bar{\alpha}_t}, \quad \gamma_t = \sqrt{\bar{\alpha}_t}, \quad \beta_t = \sqrt{1 - \bar{\alpha}_t}.
\end{equation}

\subsection{DiffUIR: Selective Hourglass Mapping}

\textbf{Notation~\cite{zheng2024selective}:}
\begin{itemize}
    \item $\mathbf{x}_0$: Clean image ($\mathbf{x}_{hq}$).
    \item $\mathbf{I}_{in}$: Degraded image ($\mathbf{x}_{lq}$).
    \item $\mathbf{I}_{res}$: Residual ($\mathbf{I}_{in} - \mathbf{x}_0$).
\end{itemize}

\textbf{Original Formulation:}
DiffUIR defines the process as:
\begin{equation}
    \mathbf{x}_t = \mathbf{x}_0 + \bar{\alpha}_t \mathbf{I}_{res} + \bar{\beta}_t \boldsymbol{\epsilon} - \bar{\delta}_t \mathbf{I}_{in}.
\end{equation}

\textbf{Derivation to Unified Form:}
Substituting $\mathbf{I}_{res} = \mathbf{x}_{lq} - \mathbf{x}_{hq}$ and $\mathbf{I}_{in} = \mathbf{x}_{lq}$:
\begin{align}
    \mathbf{x}_t &= \mathbf{x}_{hq} + \bar{\alpha}_t (\mathbf{x}_{lq} - \mathbf{x}_{hq}) - \bar{\delta}_t \mathbf{x}_{lq} + \bar{\beta}_t \boldsymbol{\epsilon} \nonumber \\
    &= \mathbf{x}_{hq} + \bar{\alpha}_t \mathbf{x}_{lq} - \bar{\alpha}_t \mathbf{x}_{hq} - \bar{\delta}_t \mathbf{x}_{lq} + \bar{\beta}_t \boldsymbol{\epsilon} \nonumber \\
    &= (\bar{\alpha}_t - \bar{\delta}_t) \mathbf{x}_{lq} + (1 - \bar{\alpha}_t) \mathbf{x}_{hq} + \bar{\beta}_t \boldsymbol{\epsilon}.
\end{align}
Matching coefficients:
\begin{equation}
    \alpha_t = \bar{\alpha}_t - \bar{\delta}_t, \quad \gamma_t = 1 - \bar{\alpha}_t, \quad \beta_t = \bar{\beta}_t.
\end{equation}

\subsection{Summary of Unified Coefficients}

Table~\ref{tab:coeff_summary} summarizes the exact mapping of coefficients for each method. 

\begin{table*}[h]
\centering
\caption{Summary of unified coefficients for representative diffusion-based methods. }
\label{tab:coeff_summary}
\renewcommand{\arraystretch}{2.0} 
\resizebox{0.75\textwidth}{!}{%
\begin{tabular}{l|c|c|c}
\toprule
\textbf{Method} &  \textbf{Path Schedule ($\alpha_t$) for $\mathbf{x}_{lq}$} & \textbf{Path Schedule ($\gamma_t$) for $\mathbf{x}_{hq}$} & \textbf{Noise Schedule ($\beta_t$)} \\
\midrule
DDBM & $\frac{\bar{\sigma}_t^2 \bar{\alpha}_1}{\bar{\sigma}_1^2}$ & $\bar{\alpha}_t - \frac{\bar{\sigma}_t^2 \bar{\alpha}_1^2}{\bar{\sigma}_1^2 \bar{\alpha}_t}$ & $\sqrt{\bar{\sigma}_t^2 - \frac{\bar{\sigma}_t^4 \bar{\alpha}_1^2}{\bar{\sigma}_1^2}}$ \\
I$^2$SB & $\frac{\bar{\sigma}_{1|t}^2 \bar{\alpha}_t}{\bar{\sigma}_{1|t}^2 + \bar{\alpha}_{1|t}^2 \bar{\sigma}_t^2}$ & $ \frac{\bar{\alpha}_{1|t} \bar{\sigma}_t^2}{\bar{\sigma}_{1|t}^2 + \bar{\alpha}_{1|t}^2 \bar{\sigma}_t^2}$ & $\sqrt{\Sigma_t}$ \\
ResShift & $\eta_t$ & $1 - \eta_t$ & $\sigma_t$ \\
RDDM &  $1 - \sqrt{\bar{\alpha}_t}$ & $\sqrt{\bar{\alpha}_t}$ & $\sqrt{1 - \bar{\alpha}_t}$ \\
DiffUIR &  $\bar{\alpha}_t - \bar{\delta}_t$ & $1 - \bar{\alpha}_t$ & $\bar{\beta}_t$ \\
\bottomrule
\end{tabular}%
}
\end{table*}

\section{Detailed Derivation of the Reverse Process}
\label{sec:derivation_detailed_sample}

In this section, we present the step-by-step derivation of the reverse distribution $q(\mathbf{x}_{s}|\mathbf{x}_t, \mathbf{x}_{lq}, \mathbf{x}_{hq})$, where  $s$ and $t$ denote discrete time steps where $0 \le s < t \le 1$ in the reverse process schedule. We utilize Bayes' theorem and the probability density function (PDF) of the normal distribution.

\subsection{Bayesian Formulation}
We aim to represent the distribution $q(\mathbf{x}_{s}|\mathbf{x}_t, \mathbf{x}_{lq}, \mathbf{x}_{hq})$ by its mean and variance. Based on Bayes' theorem, we express the posterior as the product of the forward transition likelihood and the prior at step $s$:
\begin{equation}
    q(\mathbf{x}_{s}|\mathbf{x}_t, \mathbf{x}_{lq}, \mathbf{x}_{hq}) = \frac{q(\mathbf{x}_t|\mathbf{x}_{s}, \mathbf{x}_{lq}, \mathbf{x}_{hq}) \cdot q(\mathbf{x}_{s}|\mathbf{x}_{lq}, \mathbf{x}_{hq})}{q(\mathbf{x}_t|\mathbf{x}_{lq}, \mathbf{x}_{hq})}.
    \label{eq:bayes_raw}
\end{equation}
Since the denominator $q(\mathbf{x}_t|\mathbf{x}_{lq}, \mathbf{x}_{hq})$ is a constant with respect to $\mathbf{x}_{s}$, we focus on the numerator $q(\mathbf{x}_t|\mathbf{x}_{s}, \mathbf{x}_{lq}, \mathbf{x}_{hq}) \cdot q(\mathbf{x}_{s}|\mathbf{x}_{lq}, \mathbf{x}_{hq})$. According to $ \mathbf{x}_{s} = \alpha_{s} \mathbf{x}_{lq} + \gamma_{s} \hat{\mathbf{x}}_{0|t} + \beta_{s} \boldsymbol{\epsilon}_t$, the distributions are defined as:
\begin{align}
    q(\mathbf{x}_{s}|\mathbf{x}_{lq}, \mathbf{x}_{hq}) &= \mathcal{N}(\mathbf{x}_{s}; \alpha_{s}\mathbf{x}_{lq} + \gamma_{s}\mathbf{x}_{hq}, \beta_{s}^2 \mathbf{I}), \\
    q(\mathbf{x}_t|\mathbf{x}_{s}, \mathbf{x}_{lq}, \mathbf{x}_{hq}) &= \mathcal{N}(\mathbf{x}_t; \mathbf{x}_{s} + (\alpha_t - \alpha_{s})\mathbf{x}_{lq} + (\gamma_t - \gamma_{s})\mathbf{x}_{hq}, (\beta_t^2 - \beta_{s}^2) \mathbf{I}).
\end{align}
For clarity in the derivation, we define $\Delta\alpha = \alpha_t - \alpha_{s}$, $\Delta\gamma = \gamma_t - \gamma_{s}$, and $\beta_{t|s}^2 = \beta_t^2 - \beta_{s}^2$.

\subsection{Exponential Expansion}
We expand the exponential terms of the PDFs to identify the quadratic form of $\mathbf{x}_{s}$. The joint probability is proportional to:
\begin{equation}
\begin{aligned}
    & q(\mathbf{x}_t|\mathbf{x}_{s}, \mathbf{x}_{lq}, \mathbf{x}_{hq}) \cdot q(\mathbf{x}_{s}|\mathbf{x}_{lq}, \mathbf{x}_{hq}) \\
    \propto & \exp \left[ -\frac{1}{2} \left( \frac{||\mathbf{x}_t - \mathbf{x}_{s} - \Delta\alpha \mathbf{x}_{lq} - \Delta\gamma \mathbf{x}_{hq}||^2}{\beta_{t|s}^2} + \frac{||\mathbf{x}_{s} - \alpha_{s}\mathbf{x}_{lq} - \gamma_{s}\mathbf{x}_{hq}||^2}{\beta_{s}^2} \right) \right].
\end{aligned}
\end{equation}
To isolate $\mathbf{x}_{s}$, we expand the squared norms. Notice that inside the first norm, we can group terms as $(\mathbf{x}_t - \Delta\alpha \mathbf{x}_{lq} - \Delta\gamma \mathbf{x}_{hq}) - \mathbf{x}_{s}$.
\begin{equation}
\begin{aligned}
      & q(\mathbf{x}_t|\mathbf{x}_{s}, \mathbf{x}_{lq}, \mathbf{x}_{hq}) \cdot q(\mathbf{x}_{s}|\mathbf{x}_{lq}, \mathbf{x}_{hq})  \\
    &\propto exp \left[-\frac{1}{2} \left( \frac{\mathbf{x}_{s}^2 - 2\mathbf{x}_{s}(\mathbf{x}_t - \Delta\alpha \mathbf{x}_{lq} - \Delta\gamma \mathbf{x}_{hq}) + C_1}{\beta_{t|s}^2} 
 \quad + \frac{\mathbf{x}_{s}^2 - 2\mathbf{x}_{s}(\alpha_{s}\mathbf{x}_{lq} + \gamma_{s}\mathbf{x}_{hq}) + C_2}{\beta_{s}^2} \right)\right],
\end{aligned}
\end{equation}
where $C_1, C_2$ are terms independent of $\mathbf{x}_{s}$. Now, we group the coefficients of $\mathbf{x}_{s}^2$ and $\mathbf{x}_{s}$:
\begin{equation}
\begin{aligned}
    & q(\mathbf{x}_t|\mathbf{x}_{s}, \mathbf{x}_{lq}, \mathbf{x}_{hq}) \cdot q(\mathbf{x}_{s}|\mathbf{x}_{lq}, \mathbf{x}_{hq})  \\
    &\propto \exp \Bigg[ -\frac{1}{2} \Bigg(  \underbrace{\left( \frac{1}{\beta_{t|s}^2} + \frac{1}{\beta_{s}^2} \right)}_{\text{Term A}} \mathbf{x}_{s}^2  - 2\mathbf{x}_{s} \underbrace{\left( \frac{\mathbf{x}_t - \Delta\alpha \mathbf{x}_{lq} - \Delta\gamma \mathbf{x}_{hq}}{\beta_{t|s}^2} + \frac{\alpha_{s}\mathbf{x}_{lq} + \gamma_{s}\mathbf{x}_{hq}}{\beta_{s}^2} \right)}_{\text{Term B}} \Bigg) \Bigg].
\end{aligned}
    \label{eq:grouped_exponent}
\end{equation}

\subsection{Calculation of Mean and Variance}
We match Eq.~\eqref{eq:grouped_exponent} with the standard form ($\exp[-\frac{1}{2\tilde{\sigma}_t^2}(\mathbf{x}_{s}^2 - 2\mathbf{x}_{s}\tilde{\boldsymbol{\mu}}_t)]$).

\noindent\textbf{1. Variance $\tilde{\sigma}_t^2$:}
\begin{equation}
\begin{aligned}
    \frac{1}{\tilde{\sigma}_t^2} = \frac{1}{\beta_t^2 - \beta_{s}^2} + \frac{1}{\beta_{s}^2} = \frac{\beta_{s}^2 + (\beta_t^2 - \beta_{s}^2)}{(\beta_t^2 - \beta_{s}^2)\beta_{s}^2} = \frac{\beta_t^2}{\beta_{s}^2 (\beta_t^2 - \beta_{s}^2)}.
\end{aligned}
\end{equation}
Taking the inverse gives the posterior variance:
\begin{equation}
    \tilde{\sigma}_t^2 = \frac{\beta_{s}^2 (\beta_t^2 - \beta_{s}^2)}{\beta_t^2}.
    \label{eq:derived_variance}
\end{equation}

\noindent\textbf{2. Mean $\tilde{\boldsymbol{\mu}}_t$:}
The mean is given by:

\begin{equation}
\begin{aligned}
 \tilde{\boldsymbol{\mu}}_t &= \frac{\beta_{s}^2 (\beta_t^2 - \beta_{s}^2)}{\beta_t^2} \left( \frac{\mathbf{x}_t - \Delta\alpha \mathbf{x}_{lq} - \Delta\gamma \mathbf{x}_{hq}}{\beta_t^2 - \beta_{s}^2} + \frac{\alpha_{s}\mathbf{x}_{lq} + \gamma_{s}\mathbf{x}_{hq}}{\beta_{s}^2} \right)\\
 &= \frac{\beta_{s}^2}{\beta_t^2}(\mathbf{x}_t - (\alpha_t - \alpha_{s})\mathbf{x}_{lq} - (\gamma_t - \gamma_{s})\mathbf{x}_{hq}) \quad + \frac{\beta_t^2 - \beta_{s}^2}{\beta_t^2}(\alpha_{s}\mathbf{x}_{lq} + \gamma_{s}\mathbf{x}_{hq}).
\end{aligned}
\end{equation}
To simplify, we use the forward process relation $\gamma_t \mathbf{x}_{hq} = \mathbf{x}_t - \alpha_t \mathbf{x}_{lq} - \beta_t \boldsymbol{\epsilon}$. We group terms by $\mathbf{x}_{lq}$ and $\mathbf{x}_{hq}$:
\begin{equation}
\begin{aligned}
    \label{eq:ut2}
    \tilde{\boldsymbol{\mu}}_t &= \frac{\beta_{s}^2}{\beta_t^2}\mathbf{x}_t + \left( \frac{\beta_t^2 - \beta_{s}^2}{\beta_t^2}\alpha_{s} - \frac{\beta_{s}^2}{\beta_t^2}(\alpha_t - \alpha_{s}) \right)\mathbf{x}_{lq} + \left( \frac{\beta_t^2 - \beta_{s}^2}{\beta_t^2}\gamma_{s} - \frac{\beta_{s}^2}{\beta_t^2}(\gamma_t - \gamma_{s}) \right)\mathbf{x}_{hq}.
\end{aligned}
\end{equation}

Substituting $\mathbf{x}_t$ in Eq.~\eqref{eq:ut2} yields:
\begin{equation}
    \tilde{\boldsymbol{\mu}}_t = \alpha_{s}\mathbf{x}_{lq} + \gamma_{s}\mathbf{x}_{hq} + \frac{\beta_{s}^2}{\beta_t^2} \cdot \beta_t \boldsymbol{\epsilon}.
\end{equation}

Replacing $\boldsymbol{\epsilon}$ with the predicted noise $\boldsymbol{\epsilon}_{\text{pred}} = \frac{\mathbf{x}_t - \alpha_t \mathbf{x}_{lq} - \gamma_t \hat{\mathbf{x}}_{0|t}}{\beta_t}$ gives the final computable mean:
\begin{equation}
    \tilde{\boldsymbol{\mu}}_t = \alpha_{s}\mathbf{x}_{lq} + \gamma_{s}\hat{\mathbf{x}}_{0|t} + \frac{\beta_{s}^2}{\beta_t^2}(\mathbf{x}_t - \alpha_t \mathbf{x}_{lq} - \gamma_t \hat{\mathbf{x}}_{0|t}).
    \label{eq:final_mean}
\end{equation}

\subsection{Sampling Strategies}
Having derived the posterior mean $\tilde{\boldsymbol{\mu}}_t$ and variance $\tilde{\sigma}_t^2$, we can now instantiate the generative process. The general update rule at step $t$ samples $\mathbf{x}_{s}$ from the Gaussian distribution $\mathcal{N}(\tilde{\boldsymbol{\mu}}_t, \sigma_t^2 \mathbf{I})$:
\begin{equation}
    \mathbf{x}_{s} = \tilde{\boldsymbol{\mu}}_t + \sigma_t \mathbf{z}, \quad \mathbf{z} \sim \mathcal{N}(\mathbf{0}, \mathbf{I}).
\end{equation}
Depending on the choice of the schedule $\sigma_t$, we recover different sampling strategies.

\noindent\textbf{1. DDPM-like Sampling (Stochastic)}
To strictly follow the reverse Markov chain derived from the Bayesian posterior, we set the sampling standard deviation $\sigma_t$ equal to the posterior standard deviation $\tilde{\sigma}_t$ derived in Eq.~\eqref{eq:derived_variance}.
Substituting the mean $\tilde{\boldsymbol{\mu}}_t$ from Eq.~\eqref{eq:final_mean} and utilizing the predicted noise $\boldsymbol{\epsilon}_{\text{pred}}$, the update rule is:
\begin{equation}
    \mathbf{x}_{s} = \underbrace{\alpha_{s}\mathbf{x}_{lq} + \gamma_{s}\hat{\mathbf{x}}_{0|t} + \frac{\beta_{s}^2}{\beta_t} \boldsymbol{\epsilon}_{\text{pred}}}_{\text{Posterior Mean } \tilde{\boldsymbol{\mu}}_t} + \underbrace{\frac{\beta_{s}}{\beta_t}\sqrt{\beta_t^2 - \beta_{s}^2} \cdot \mathbf{z}}_{\text{Posterior Std Dev } \tilde{\sigma}_t \cdot \mathbf{z}}.
    \label{eq:ddpm_update}
\end{equation}

\noindent\textbf{2. DDIM-like Sampling (Deterministic)}
For the deterministic sampling strategy (DDIM), we set $\sigma_t = 0$. The generalized update rule for arbitrary $\sigma_t$ is:
\begin{equation}
    \mathbf{x}_{s} = \alpha_{s}\mathbf{x}_{lq} + \gamma_{s}\hat{\mathbf{x}}_{0|t} + \sqrt{\beta_{s}^2 - \sigma_t^2} \cdot \boldsymbol{\epsilon}_{\text{pred}} + \sigma_t \mathbf{z}.
    \label{eq:general_update}
\end{equation}
Setting $\sigma_t = 0$ in Eq.~\eqref{eq:general_update}:
\begin{equation}
    \mathbf{x}_{s}^{\text{DDIM}} = \alpha_{s}\mathbf{x}_{lq} + \gamma_{s}\hat{\mathbf{x}}_{0|t} + \beta_{s} \boldsymbol{\epsilon}_{\text{pred}}.
    \label{eq:ddim_update}
\end{equation}

We can substitute $\boldsymbol{\epsilon}_{\text{pred}} = (\mathbf{x}_t - \alpha_t \mathbf{x}_{lq} - \gamma_t \hat{\mathbf{x}}_{0|t}) / \beta_t$ back into Eq.~\eqref{eq:ddim_update}:

\begin{equation}
\label{eq:ddim_linear}
   \begin{aligned}
    \mathbf{x}_{s}^{\text{DDIM}} = &\left( \alpha_{s} - \frac{\beta_{s}}{\beta_t}\alpha_t \right)\mathbf{x}_{lq} + \left( \gamma_{s} - \frac{\beta_{s}}{\beta_t}\gamma_t \right)\hat{\mathbf{x}}_{0|t} + \frac{\beta_{s}}{\beta_t}\mathbf{x}_t.\\
    =&\underbrace{\alpha_{s} \mathbf{x}_{lq} + \gamma_{s} \hat{\mathbf{x}}_{0|t}}_{\text{Target Mean}}+\underbrace{\frac{\beta_{s}}{\beta_t} \left( \mathbf{x}_t - \alpha_t \mathbf{x}_{lq} - \gamma_t \hat{\mathbf{x}}_{0|t} \right)}_{\text{Deterministic Direction}}
\end{aligned}
\end{equation}

\section{Kinetic Analysis of Uncertainty-Adaptive Path Schedule}
\label{sec:kinetic_analysis}

In this section, we provide a derivation to demonstrate that our proposed Path Schedule (Eq.~\eqref{eq:path_sigmoid_full}) mathematically satisfies the kinetic constraints identified in Remark~\ref{rmk:kinetic_suppression}. Specifically, we prove that high uncertainty induces a deceleration of the transport velocity within the intermediate entropic barrier ($t \approx 0.5$) and a compensatory acceleration near the boundaries.

\subsection{Definition of Transport Velocity}

Recall the definition of our mean trajectory $\boldsymbol{\mu}_t = \mathbb{E}[\mathbf{x}_t]$:
\begin{equation}
    \boldsymbol{\mu}_t = (1-\alpha_t(\mathbf{u})) \mathbf{x}_{hq} + \alpha_t(\mathbf{u}) \mathbf{x}_{lq}.
\end{equation}
The instantaneous velocity of the mean transport is given by the time derivative of the trajectory:
\begin{equation}
    \mathbf{v}_t^{\mu} \triangleq \frac{\partial \boldsymbol{\mu}_t}{\partial t} = \frac{\partial \alpha_t(\mathbf{u})}{\partial t} (\mathbf{x}_{lq} - \mathbf{x}_{hq}).
\end{equation}
Since $(\mathbf{x}_{lq} - \mathbf{x}_{hq})$ is a constant vector for a given pair, the kinetic magnitude of the transport is governed entirely by the scalar velocity of the schedule coefficient, denoted as $\mathcal{V}_t(\mathbf{u})$:
\begin{equation}
    \mathcal{V}_t(\mathbf{u}) \triangleq \frac{\partial \alpha_t(\mathbf{u})}{\partial t}.
\end{equation}
Our proposed schedule is defined as:
\begin{equation}
    \alpha_t(\mathbf{u}) = \frac{t^{\pi}}{t^{\pi} + (1-t)^{\pi}}, \quad \text{where } \pi \equiv \pi(\mathbf{u}) \in [0.5, 1.0].
\end{equation}
Here, $\pi(\mathbf{u})$ is a pixel-wise decreasing function of uncertainty $\mathbf{u}$. Specifically, $\pi=1.0$ corresponds to low uncertainty (OT limit), and $\pi=0.5$ corresponds to high uncertainty (EOT limit).

\subsection{Derivation of Instantaneous Velocity}

We derive the explicit form of $\mathcal{V}_t(\mathbf{u})$ using the quotient rule. Let $N(t) = t^{\pi}$ and $D(t) = t^{\pi} + (1-t)^{\pi}$. The derivative is:
\begin{equation}
    \mathcal{V}_t(\mathbf{u}) = \frac{N'(t)D(t) - N(t)D'(t)}{[D(t)]^2}.
\end{equation}
Calculating the components:
\begin{align}
    N'(t) &= \pi t^{\pi-1}, \\
    D'(t) &= \pi t^{\pi-1} - \pi (1-t)^{\pi-1}.
\end{align}
Substituting these into the quotient rule:
\begin{equation}
    \begin{aligned}
    \mathcal{V}_t(\mathbf{u}) &= \frac{\pi t^{\pi-1} \left(t^{\pi} + (1-t)^{\pi}\right) - t^{\pi} \left(\pi t^{\pi-1} - \pi (1-t)^{\pi-1}\right)}{\left(t^{\pi} + (1-t)^{\pi}\right)^2} \\
    &= \frac{\pi \left[ t^{2\pi-1} + t^{\pi-1}(1-t)^{\pi} - t^{2\pi-1} + t^{\pi}(1-t)^{\pi-1} \right]}{\left(t^{\pi} + (1-t)^{\pi}\right)^2} \\
    &= \frac{\pi \left[ t^{\pi-1}(1-t)^{\pi} + t^{\pi}(1-t)^{\pi-1} \right]}{\left(t^{\pi} + (1-t)^{\pi}\right)^2}.
    \end{aligned}
\end{equation}
Factoring out common terms in the numerator:
\begin{equation}
    \begin{aligned}
    \text{Numerator} &= \pi t^{\pi-1}(1-t)^{\pi-1} \underbrace{\left[ (1-t) + t \right]}_{=1} \\
    &= \pi \left[ t(1-t) \right]^{\pi-1}.
    \end{aligned}
\end{equation}
Thus, we obtain the analytical expression for the element-wise velocity:
\begin{equation}
    \label{eq:scalar_velocity_final}
    \mathcal{V}_t(\mathbf{u}) = \pi(\mathbf{u}) \cdot \frac{\left( t(1-t) \right)^{\pi(\mathbf{u})-1}}{\left( t^{\pi(\mathbf{u})} + (1-t)^{\pi(\mathbf{u})} \right)^2}.
\end{equation}

\subsection{Kinetic Analysis in Distinct Regimes}

We now analyze Eq.~\eqref{eq:scalar_velocity_final} to verify the claims in Remark~\ref{rmk:kinetic_suppression}.

\subsubsection{1. Deceleration in the entropic barrier ($t=0.5$)}
The maximum entropic barrier is located at the trajectory midpoint $t=0.5$, where the distributions are maximally mixed. Evaluating $\mathcal{V}_t$ at $t=0.5$:
\begin{equation}
    \begin{aligned}
    \mathcal{V}_{0.5}(\mathbf{u}) &= \pi \cdot \frac{(0.25)^{\pi-1}}{\left( (0.5)^{\pi} + (0.5)^{\pi} \right)^2} \\
    &= \pi \cdot \frac{(0.5^2)^{\pi-1}}{\left( 2 \cdot 0.5^{\pi} \right)^2} \\
    &= \pi \cdot \frac{0.5^{2\pi-2}}{4 \cdot 0.5^{2\pi}} \\
    &= \frac{\pi}{4} \cdot 0.5^{-2} = \frac{\pi}{4} \cdot 4 = \pi(\mathbf{u}).
    \end{aligned}
\end{equation}
\textbf{Conclusion:} The minimum velocity at the midpoint is exactly $\pi(\mathbf{u})$.
\begin{itemize}
    \item For \textbf{Low Uncertainty} ($\mathbf{u} \to 0$), $\pi \to 1$. The velocity $\mathcal{V}_{0.5} \to 1$ (Constant speed).
    \item For \textbf{High Uncertainty} ($\mathbf{u} \to 1$), $\pi \to 0.5$. The velocity $\mathcal{V}_{0.5} \to 0.5$ (Slowed down by 50\%).
\end{itemize}
This proves that high uncertainty explicitly suppresses the drift velocity in the entropic barrier.

This behavior aligns with the viscous HJB dynamics, where the particle moves slowly through the high-entropy region to resolve ambiguity and accelerates near deterministic boundaries to satisfy terminal constraints.

\section{Feasibility of Single-Step Inference}
\label{sec:proof_single_step}

In this section, we derive the Probability Flow ODE (PF-ODE) for UDBM and demonstrate that the Relaxed Diffusion Bridge formulation ensures the boundedness of the velocity field at the terminal $t=1$. This boundedness, combined with the geometric linearity of the path, guarantees the stability and high fidelity of single-step inference.

\subsection{Derivation of the Probability Flow ODE}

The ``deterministic assumption'' in diffusion models implies that the normalized noise variable $\boldsymbol{\epsilon}$ associated with a trajectory remains invariant over time (i.e., $d\boldsymbol{\epsilon}/dt = 0$).
We start with the unified forward marginal distribution defined in UDBM:
\begin{equation}
    \mathbf{x}_t = \boldsymbol{\mu}_t + \beta_t \boldsymbol{\epsilon},
    \label{eq:marginal_def}
\end{equation}
where the mean trajectory is $\boldsymbol{\mu}_t = \alpha_t \mathbf{x}_{lq} + \gamma_t \hat{\mathbf{x}}_{0|t}$. 
Differentiating Eq.~\eqref{eq:marginal_def} with respect to $t$ under the deterministic assumption yields:
\begin{equation}
    \frac{d\mathbf{x}_t}{dt} = \dot{\boldsymbol{\mu}}_t + \dot{\beta}_t \boldsymbol{\epsilon}.
\end{equation}
To obtain a velocity field $\mathbf{v}(\mathbf{x}_t, t)$ that depends only on the state $\mathbf{x}_t$ and the restoration target $\hat{\mathbf{x}}_{0|t}$, we substitute $\boldsymbol{\epsilon} = (\mathbf{x}_t - \boldsymbol{\mu}_t)/\beta_t$ from Eq.~\eqref{eq:marginal_def}:
\begin{equation}
    \frac{d\mathbf{x}_t}{dt} = \dot{\boldsymbol{\mu}}_t + \dot{\beta}_t \left( \frac{\mathbf{x}_t - \boldsymbol{\mu}_t}{\beta_t} \right).
\end{equation}
Rearranging the terms, we obtain the general form of the Probability Flow ODE:
\begin{equation}
    \label{eq:exact_ode}
    \frac{d\mathbf{x}_t}{dt} = \underbrace{\frac{\dot{\beta}_t}{\beta_t}\mathbf{x}_t + \left( \dot{\boldsymbol{\mu}}_t - \frac{\dot{\beta}_t}{\beta_t}\boldsymbol{\mu}_t \right)}_{\text{Effective Velocity } \mathbf{v}(\mathbf{x}_t, t)}.
\end{equation}
Substituting $\boldsymbol{\mu}_t = \alpha_t \mathbf{x}_{lq} + \gamma_t \hat{\mathbf{x}}_{0|t}$, the velocity explicitly involves the restoration term:
\begin{equation}
    \mathbf{v}(\mathbf{x}_t, t) = \frac{\dot{\beta}_t}{\beta_t}(\mathbf{x}_t - \alpha_t \mathbf{x}_{lq} - \gamma_t \hat{\mathbf{x}}_{0|t}) + (\dot{\alpha}_t \mathbf{x}_{lq} + \dot{\gamma}_t \hat{\mathbf{x}}_{0|t}).
\end{equation}
The feasibility of single-step inference relies on two properties: (1) \textbf{Boundedness} of the velocity $\mathbf{v}$ at $t=1$, and (2) \textbf{Geometric Linearity} of the trajectory.

\subsection{Boundedness at the Terminal $t=1$}

Single-step inference $\mathbf{x}_0 \approx \mathbf{x}_1 - \mathbf{v}(\mathbf{x}_1, 1)$ requires the velocity $\mathbf{v}(\mathbf{x}_1, 1)$ to be finite. We analyze the behavior of the coefficients in Eq.~\eqref{eq:exact_ode} as $t \to 1$.

In \textbf{standard diffusion bridges} (with strict constraints), the noise level often behaves as $\beta_t \propto \sqrt{1-t}$ near $t=1$. Consequently, the term $\frac{\dot{\beta}_t}{\beta_t} \sim \frac{1}{2(1-t)}$ diverges to infinity, causing numerical instability and large discretization errors.

In contrast, \textbf{UDBM} employs a Relaxed Diffusion Bridge where the terminal noise variance is regularized by the uncertainty $\boldsymbol{\sigma}^2_u$.
According to our noise schedule (Eq.~\eqref{eq:noise_schedule}), at $t=1$, we have:
\begin{equation}
    \beta_1 = 1 + \mathbf{u} > 0.
\end{equation}
Since $\beta_1$ is a strictly positive constant, the logarithmic derivative $\frac{\dot{\beta}_t}{\beta_t}$ remains finite at $t=1$. Furthermore, the mean terms $\alpha_t, \gamma_t$ and their derivatives are bounded by design (Eq.~\eqref{eq:path_sigmoid_full}).
Therefore, the total velocity $\|\mathbf{v}(\mathbf{x}_1, 1)\|$ is bounded, ensuring that the single-step Euler jump is numerically stable.

\subsection{Condition 2: Error Minimization via Geometric Linearity}
Our path schedule (Eq.~\eqref{eq:path_sigmoid_full1}) strictly enforces the convexity constraint $\gamma_t = 1 - \alpha_t$. This implies that the expected trajectory lies entirely on the linear geodesic connecting the degraded observation $\mathbf{x}_{lq}$ and the clean prior $\mathbf{x}_{hq}$:
\begin{equation}
    \mathbb{E}[\mathbf{x}_t] = (1-\alpha_t(\mathbf{u})) \mathbf{x}_{hq} + \alpha_t(\mathbf{u}) \mathbf{x}_{lq}.
\end{equation}
Geometrically, this represents a straight line segment in the high-dimensional pixel space. 
For a rectilinear trajectory, the direction of the velocity vector $\mathbf{v}_t$ remains constant throughout the process (pointing directly from $\mathbf{x}_{lq}$ to $\mathbf{x}_{hq}$).
While the non-linear schedule $\pi(\mathbf{u})$ introduces a variable scalar speed (temporal curvature), the {geometric curvature is zero}.
Consequently, the single-step inference projects $\mathbf{x}_1$ along the theoretically correct direction. By eliminating the error component associated with geometric curvature, UDBM significantly reduces the discretization error compared to curved stochastic paths, enabling high-quality restoration in a single step.


We analyze the differential geometry of the transport trajectory.

\begin{theorem}[Zero Geometric Curvature]
\label{thm:zero_curvature}
Under the UDBM formulation with the convexity constraint $\gamma_t(\mathbf{u}) = \mathbf{I} - \alpha_t(\mathbf{u})$, the trajectory of the deterministic PF-ODE lies strictly on a one-dimensional linear manifold connecting the degraded observation $\mathbf{x}_{lq}$ and the predicted clean prior $\hat{\mathbf{x}}_{0|t}$. Consequently, under the assumption of perfect score matching, the geometric curvature of the transport path is identically zero for all $t \in [0, 1]$.
\end{theorem}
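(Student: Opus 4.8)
The plan is to reduce the statement to an elementary fact about affine curves: a path whose image lies on a straight line has vanishing geometric curvature regardless of how it is parametrized in time. First I would invoke the \emph{perfect score matching} hypothesis, which fixes the network's prediction to the ground truth, $\hat{\mathbf{x}}_{0|t} \equiv \mathbf{x}_{hq}$ for all $t$. Substituting this into the conditional mean $\boldsymbol{\mu}_t = \gamma_t(\mathbf{u}) \mathbf{x}_{hq} + \alpha_t(\mathbf{u}) \mathbf{x}_{lq}$ and applying the convexity constraint $\gamma_t(\mathbf{u}) = \mathbf{I} - \alpha_t(\mathbf{u})$ yields $\boldsymbol{\mu}_t = \mathbf{x}_{hq} + \alpha_t(\mathbf{u}) \odot \mathbf{d}$, where $\mathbf{d} \triangleq \mathbf{x}_{lq} - \mathbf{x}_{hq}$ is a fixed (time-independent) vector. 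This exhibits $\boldsymbol{\mu}_t$ as an affine image of the single scalar field $\alpha_t$, so its trajectory is confined to the one-dimensional affine manifold $L = \{\mathbf{x}_{hq} + \lambda \mathbf{d} : \lambda \in \mathbb{R}\}$ spanned by $\mathbf{x}_{hq}$ and $\mathbf{x}_{lq}$; since $\alpha_t$ increases monotonically from $0$ to $1$, the image is precisely the segment connecting the clean prior and the degraded observation.

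Next I would compute the derivatives and feed them into a reparametrization-invariant curvature functional. Differentiating gives $\dot{\boldsymbol{\mu}}_t = \dot{\alpha}_t \odot \mathbf{d}$ and $\ddot{\boldsymbol{\mu}}_t = \ddot{\alpha}_t \odot \mathbf{d}$, so both the velocity and the acceleration are (coordinate-wise) collinear with the constant direction $\mathbf{d}$. Using the standard form $\kappa(t) = \|\dot{\boldsymbol{\mu}}_t \wedge \ddot{\boldsymbol{\mu}}_t\| / \|\dot{\boldsymbol{\mu}}_t\|^3$ (equivalently, the norm of the normal component of acceleration $\mathbf{a}_\perp = \ddot{\boldsymbol{\mu}}_t - \langle \ddot{\boldsymbol{\mu}}_t, \mathbf{T}\rangle \mathbf{T}$ with unit tangent $\mathbf{T} = \dot{\boldsymbol{\mu}}_t/\|\dot{\boldsymbol{\mu}}_t\|$), collinearity forces $\dot{\boldsymbol{\mu}}_t \wedge \ddot{\boldsymbol{\mu}}_t = \mathbf{0}$ and hence $\kappa(t) \equiv 0$. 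The conceptual payoff I would stress here is the separation between \emph{temporal} and \emph{geometric} curvature: the variable speed $\mathcal{V}_t = \dot{\alpha}_t$ induced by $\pi(\mathbf{u})$ enters solely through the tangential acceleration $\langle \ddot{\boldsymbol{\mu}}_t, \mathbf{T}\rangle \mathbf{T}$ and is annihilated by arc-length reparametrization, so it cannot contribute any normal bending.

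The hard part will be reconciling the clean one-dimensional picture with two features of the actual UDBM flow. First, the full deterministic PF-ODE solution carries the stochastic offset $\beta_t \boldsymbol{\epsilon}$, and because $\beta_t$ (Eq.~\eqref{eq:noise_schedule}) is not an affine function of $\alpha_t$, the noisy sample path generically bends within the plane $\mathrm{span}\{\mathbf{d}, \boldsymbol{\epsilon}\}$; I would address this by making explicit that the curvature claim concerns the \emph{conditional mean} (geodesic) trajectory $\boldsymbol{\mu}_t = \mathbb{E}[\mathbf{x}_t \mid \mathbf{x}_{lq}, \mathbf{x}_{hq}]$, which is exactly the object the single-step regression target tracks, and that $\mathbb{E}[\boldsymbol{\epsilon}] = \mathbf{0}$ removes the offending term. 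Second, since $\alpha_t(\mathbf{u})$ is a pixel-wise tensor, distinct coordinates evolve under distinct exponents $\pi(\mathbf{u}_i)$, so in the joint space $\mathbb{R}^{H\times W\times C}$ the direction $\dot{\boldsymbol{\mu}}_t$ can rotate; to handle this rigorously I would decompose the transport into its separable per-coordinate components---each a one-dimensional affine motion along $[x_{hq,i}, x_{lq,i}]$ with identically zero curvature---and argue zero geometric curvature in this separable (Hadamard) sense, consistent with the element-wise convention fixed in the Mathematical Notation Note. Establishing the collinearity cleanly under these two caveats, rather than the one-line curvature computation itself, is where the real care is needed.
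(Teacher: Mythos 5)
Your proposal is correct, and its core is the same as the paper's proof: both reduce the claim to the observation that, under the convexity constraint, the mean trajectory is $\boldsymbol{\mu}_t = \mathbf{x}_{hq} + \alpha_t(\mathbf{u})\,(\mathbf{x}_{lq}-\mathbf{x}_{hq})$, so the velocity stays (element-wise) proportional to the fixed displacement $\mathbf{d} = \mathbf{x}_{lq}-\mathbf{x}_{hq}$ and the curvature vanishes; the paper phrases this via constancy of the unit tangent $\mathbf{T}_t = \mathbf{v}_t^{\mu}/\|\mathbf{v}_t^{\mu}\|$, while you use the equivalent wedge-product / normal-acceleration criterion $\kappa = \|\dot{\boldsymbol{\mu}}_t \wedge \ddot{\boldsymbol{\mu}}_t\|/\|\dot{\boldsymbol{\mu}}_t\|^3$. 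Where you genuinely go beyond the paper is in your two caveats, and the second one matters. The paper's proof, like yours, silently restricts attention to the conditional mean (you dispose of the $\beta_t\boldsymbol{\epsilon}$ offset explicitly, which is the right reading since a PF-ODE trajectory carries a fixed nonzero $\boldsymbol{\epsilon}$ and is genuinely curved when $\beta_t$ is not affine in $\alpha_t$). More importantly, the paper factorizes $\mathbf{v}_t^{\mu} = \mathcal{V}_t(\mathbf{u})\,\mathbf{d}$ and calls $\mathcal{V}_t(\mathbf{u})$ a \emph{scalar} speed schedule, even though by the paper's own notation convention $\alpha_t(\mathbf{u})$, and hence $\mathcal{V}_t(\mathbf{u})$, are spatially varying tensors acting by Hadamard product. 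When $\pi(\mathbf{u})$ differs across pixels, the direction of $\mathbf{v}_t^{\mu}$ in the joint space $\mathbb{R}^{H\times W\times C}$ does rotate (two coordinates progressing with exponents $1$ and $0.5$ trace a curved arc), so the literal one-dimensional-manifold claim and the constancy of $\mathbf{T}_t$ hold only per coordinate, or globally only when the uncertainty is spatially constant. Your separable per-coordinate decomposition is exactly the repair needed to make the theorem rigorous under the paper's element-wise conventions; the paper's proof, as written, elides this gap rather than closing it.
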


\begin{proof}
Consider the mean trajectory $\boldsymbol{\mu}_t$ defined in Eq.~\eqref{eq:path_sigmoid_full1}:
\begin{equation}
    \boldsymbol{\mu}_t = (\mathbf{I} - \alpha_t(\mathbf{u})) \mathbf{x}_{hq} + \alpha_t(\mathbf{u}) \mathbf{x}_{lq}.
\end{equation}
The velocity vector field of the mean transport is given by the time derivative:
\begin{equation}
    \mathbf{v}_t^{\mu} \triangleq \frac{d\boldsymbol{\mu}_t}{dt} = \frac{\partial \alpha_t}{\partial t} (\mathbf{x}_{lq} - \mathbf{x}_{hq}),
\end{equation}
 Let $\mathbf{d} = \mathbf{x}_{lq} - \mathbf{x}_{hq}$ be the constant displacement vector. The velocity can be factorized as:
\begin{equation}
    \mathbf{v}_t^{\mu} = \mathcal{V}_t(\mathbf{u}) \mathbf{d},
\end{equation}
where $\mathcal{V}_t(\mathbf{u})$ is the scalar speed schedule derived in Appendix~\ref{sec:kinetic_analysis}. 

Since the direction of the velocity vector $\mathbf{v}_t^{\mu}$ is time-invariant (always parallel to the constant vector $\mathbf{d}$), the unit tangent vector $\mathbf{T}_t = \mathbf{v}_t^{\mu} / \|\mathbf{v}_t^{\mu}\|$ remains constant throughout the integration time $t \in [0, 1]$.
The geometric curvature $\kappa$, defined as the magnitude of the rate of change of the unit tangent vector with respect to arc length $s$, vanishes:
\begin{equation}
    \kappa = \left\| \frac{d\mathbf{T}_t}{ds} \right\| = \frac{1}{\|\mathbf{v}_t^{\mu}\|} \left\| \frac{d\mathbf{T}_t}{dt} \right\| \equiv 0.
\end{equation}
\textbf{Conclusion:} For a trajectory with $\kappa \equiv 0$, the local truncation error of the first-order Euler method (typically $\mathcal{O}(h^2)$) becomes exact regarding the geometric path. Thus, a single integration step $\mathbf{x}_0 \approx \mathbf{x}_1 - \mathbf{v}_1 \cdot 1$ moves the state strictly along the geodesic, theoretically justifying the high fidelity of our single-step inference.
\end{proof}

\subsection{Geometric Validation of Transport Trajectory}
\label{sec:geometric_validation}
%

we conduct a geometric validation to verify the linearity of the learned transport trajectory.

\subsubsection{Methodology: End-Point Alignment Test}

Our validation methodology is inspired by the framework of {Rectified Flow}~\cite{liu2022flow}, which aims to rectify the stochastic transport path into a deterministic straight-line trajectory.
While standard Rectified Flow defines a straight path as having constant velocity (i.e., $Z_t = t Z_1 + (1-t) Z_0$), the fundamental prerequisite for accurate single-step inference is the {geometric linearity} of the path. This requires that the velocity field direction remains aligned with the global displacement vector throughout the process, regardless of the scalar speed magnitude.

To empirically verify this geometric linearity (direction consistency), we propose an {End-Point Alignment Test}. Specifically, we define two vectors:
\begin{enumerate}
    \item {Ideal Transport Vector ($\vec{V}_{\text{ideal}}$):} The ground-truth displacement from the degraded input $\mathbf{x}_{lq}$ to the clean target $\mathbf{x}_{hq}$:
    \begin{equation}
        \vec{V}_{\text{ideal}} = \mathbf{x}_{hq} - \mathbf{x}_{lq}.
    \end{equation}
    \item {Predicted Restoration Vector ($\vec{V}_{\text{pred}}$):} The actual update direction predicted by the model with $N$ inference steps:
    \begin{equation}
        \vec{V}_{\text{pred}}^{(N)} = \hat{\mathbf{x}}_{0}^{(N)} - \mathbf{x}_{lq},
    \end{equation}
    where $\hat{\mathbf{x}}_{0}^{(N)}$ denotes the restored image obtained after $N$ steps.
\end{enumerate}

We calculate the {Cosine Similarity} between these two vectors:
\begin{equation}
    \mathcal{S}_{\text{cos}}(N) = \frac{\vec{V}_{\text{ideal}} \cdot \vec{V}_{\text{pred}}^{(N)}}{\|\vec{V}_{\text{ideal}}\| \|\vec{V}_{\text{pred}}^{(N)}\|}.
\end{equation}
Since the path is geometrically linear, the direction measured at step $N=1$ (initial velocity) should be identical to the global direction, validating the feasibility of single-step inference.

\subsubsection{Experimental Results and Comparative Analysis on LOL Dataset}

To empirically validate the theoretical zero-curvature property, we evaluate the geometric properties on the LOL dataset in the AiOIR task. Furthermore, we compare the convergence behavior of UDBM against DiffUIR~\cite{zheng2024selective} to demonstrate the advantage of our rectified transport dynamics.

\textbf{1. Geometric Linearity Validation.}
We varied the total number of inference steps $N \in \{ 5, 10, 15, 20\}$ and computed the End-Point Alignment (Cosine Similarity between predicted update vector and ground-truth vector).
As shown in Figure~\ref{fig:geometric_proof} (Top), the similarity curve for UDBM reaches approximately \textbf{0.987} at $N=1$ and remains almost constant as $N$ increases. This empirical evidence corroborates Theorem~\ref{thm:zero_curvature}, confirming that the learned transport trajectory is already geometrically straight and requires no further rectification steps.

\begin{figure}[h]
    \centering
    %
    \includegraphics[width=\linewidth]{./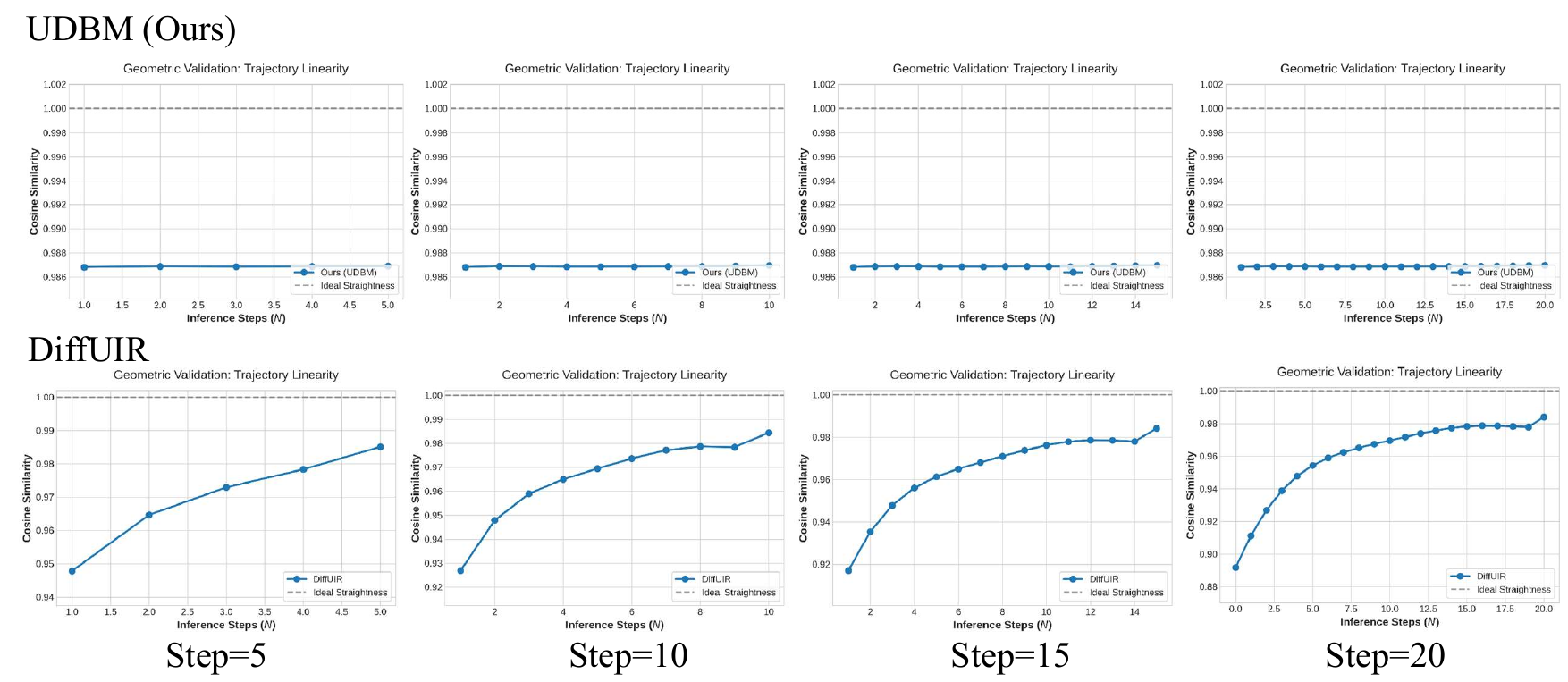} 
    \caption{Geometric Comparison on LOL Dataset. (Top) {End-Point Alignment Test:} UDBM achieves optimal direction alignment at Step 1, indicating a straight trajectory. (Down) {Convergence Dynamics:} UDBM achieves peak performance immediately, whereas DiffUIR exhibits a curved trajectory that necessitates multi-step integration.}
    \label{fig:geometric_proof}
\end{figure}

\textbf{2. Comparison with DiffUIR: Curved vs. Linear Flows.}
We further analyze the Cosine Similarity as a function of inference steps for both UDBM and DiffUIR. As illustrated in Figure~\ref{fig:geometric_proof} (Right), distinct transport behaviors are observed:

\begin{itemize}
    \item \textbf{DiffUIR (Curved Trajectory):} DiffUIR exhibits a logarithmic growth in performance, where Cosine Similarity gradually increases with more steps. This pattern indicates that DiffUIR learns a \emph{curved} stochastic differential equation (SDE) path. The Euler solver requires dense discretization to approximate the curve's tangent, resulting in high latency. The pre-similarity (feature consistency) also slowly accumulates, confirming the misalignment of manifolds in the intermediate states.
    
    \item \textbf{UDBM (Rectified Linear Trajectory):} In contrast, UDBM achieves its peak performance at a single step ($N=1$). Increasing $N$ yields negligible gains. This confirms that our {Shared Bridge Term} effectively aligns the heterogeneous degradation manifolds into a unified latent space, and the {Relaxed Diffusion Bridge} rectifies the flow into a straight geodesic. For in-distribution data, the learned transport dynamics are nearly geometrically exact, rendering multi-step integration redundant. However, for out-of-distribution samples (as shown in Table~\ref{tab:ood_steps_analysis}), slight curvature may emerge due to manifold mismatch, where multi-step integration can further refine the restoration.
\end{itemize}

\section{ Uncertainty Estimation Strategies}
\label{sec:supp_uncertainty}

In this section, we provide detailed formulations for the three uncertainty estimation strategies in our paper: Residual-based Estimation (Ours), BayesCap, and Heteroscedastic Regression. Consistent with the main text, we denote the low-quality input as $\mathbf{x}_{lq}$, the high-quality target as $\mathbf{x}_{hq}$, and the network as $\psi(\cdot)$. The architecture of $\psi(\cdot)$ is shown in Appendix~\ref{app:network}. The pixel-wise uncertainty map is generally represented as $\mathbf{u} = \mathcal{U}(\psi(\mathbf{x}_{lq}))$. All the methods are trained on the specific datasets.

\subsection{BayesCap}
\label{ssec:supp_bayescap}
This method is based on BayesCap\cite{upadhyay2022bayescap}, designed to estimate uncertainty for frozen pre-trained models.

\textbf{Formulation.}
Here, $\psi(\cdot)$ represents a frozen deterministic restoration backbone. We introduce an additional Bayesian identity mapping network $\Omega(\cdot)$, which takes the output of $\psi(\cdot)$ as input. $\psi(\cdot)$ is performed using Adam ($\text{lr}=8\times10^{-5}$) minimizing the $L_1$ Loss with 600k iterations, while $\Omega(\cdot)$ is trained with 100k iterations. The implementation details of $\Omega(\cdot)$ are the same as BayesCap. The composite uncertainty estimation process is:
\begin{equation}
    \{\tilde{\mathbf{x}}, \tilde{\alpha}, \tilde{\beta}\} = \Omega(\psi(\mathbf{x}_{lq})).
\end{equation}
Instead of a point estimate, $\Omega$ predicts the parameters of a Heteroscedastic Generalized Gaussian distribution to model the target $\mathbf{x}_{hq}$. $\tilde{\alpha}$ represents the scale, and $\tilde{\beta}$ represents the shape. The uncertainty $\mathbf{u}$ is derived analytically:
\begin{equation}
    \mathbf{u} = \mathcal{U}_{\text{bayes}}(\psi(\mathbf{x}_{lq})) = Sigmoid(\frac{\tilde{\alpha}^2 \Gamma(3/\tilde{\beta})}{\Gamma(1/\tilde{\beta})}),
\end{equation}
where $\Gamma(\cdot)$ is the Gamma function.

\subsection{Heteroscedastic Regression}
\label{ssec:supp_nll}
This method follows the framework proposed by previous work\cite{kendall2017uncertainties} for learning uncertainty. 

\textbf{Formulation.}
In this approach, the network $\psi(\cdot)$ is modified to have a dual-head architecture. The network is performed using Adam ($\text{lr}=8\times10^{-5}$) with 600k iterations. For a given input $\mathbf{x}_{lq}$, it simultaneously predicts the restored image mean $\hat{\mathbf{x}}$ and a log-variance map $\mathbf{s}$:
\begin{equation}
    [\hat{\mathbf{x}}, \mathbf{s}] = \psi(\mathbf{x}_{lq}).
\end{equation}
The model is trained by minimizing the Negative Log-Likelihood (NLL) of a Gaussian distribution:
\begin{equation}
    \mathcal{L}_{\text{NLL}} = \frac{1}{N} \sum_{i} \left( \frac{1}{2} \exp(-\mathbf{s}_i) \|\mathbf{x}_{hq, i} - \hat{\mathbf{x}}_i\|^2 + \frac{1}{2} \mathbf{s}_i \right).
\end{equation}
The uncertainty map is derived as the exponential of the learned log-variance:
\begin{equation}
    \mathbf{u} = \mathcal{U}_{\text{nll}}(\psi(\mathbf{x}_{lq})) = \exp(\mathbf{s}).
\end{equation}
\subsection{Residual-based Estimation}
\label{ssec:supp_ours}
This strategy is adopted from \cite{zhang2025uncertainty} and serves as the uncertainty estimator in our UDBM framework. 

\textbf{Formulation.}
In this setting, $\psi(\cdot)$ serves as a lightweight auxiliary restoration network. It produces a preliminary restoration $\hat{\mathbf{x}}_{hq} = \psi(\mathbf{x}_{lq})$. $\psi(\cdot)$ is performed using Adam ($\text{lr}=8\times10^{-5}$) minimizing the $L_1$ Loss with 600k iterations. The uncertainty map $\mathbf{u}$ is explicitly defined as the absolute residual between this preliminary estimate and the degraded input:
\begin{equation}
    \mathbf{u} = \mathcal{U}_{\text{res}}(\psi(\mathbf{x}_{lq})) = \frac{1}{2} |\psi(\mathbf{x}_{lq}) - \mathbf{x}_{lq}|,
\end{equation}
where $|\cdot|$ denotes the element-wise absolute difference.

The rationale for using the residual as an uncertainty proxy is straightforward: the magnitude of the modification reflects the difficulty of restoration.
\begin{itemize}
    \item \textbf{Easy Regions (Flat):} In smooth areas (e.g., sky or walls), the low-quality input retains most structural information and is already close to the ground truth. The restoration network makes minimal changes, resulting in a small residual, which correctly signals low uncertainty.
    \item \textbf{Hard Regions (Texture/Edge):} Conversely, complex details (e.g., hair or edges) suffer severe information loss during degradation. The network must significantly "hallucinate" or reconstruct these missing high-frequency details, causing its output to deviate substantially from the blurry input. This large residual effectively captures the high uncertainty inherent in these difficult regions.
\end{itemize}
While this approach is heuristic, its validity is supported by the findings in BayesCap \cite{upadhyay2022bayescap}. 
\begin{itemize}
    \item \textbf{Correlation with Error:} BayesCap explicitly demonstrates that aleatoric uncertainty is often estimated by approximating the per-pixel residuals. Furthermore, their experiments show that a well-calibrated uncertainty map must be highly correlated with the reconstruction error.
    \item \textbf{Justification for Proxy:} BayesCap validates that regions with high error are synonymous with regions of high uncertainty. Since the auxiliary network $\psi(\cdot)$ is trained to restore the image, the residual $|\psi(\mathbf{x}_{lq}) - \mathbf{x}_{lq}|$ is a direct approximation of the restoration error (and thus the information loss). By leveraging the correlation proven in BayesCap, we can effectively use this residual as a computationally efficient proxy for pixel-wise uncertainty.
\end{itemize}

\section{Algorithm Details}
\label{sec:algorithm_details}

We provide the compact pseudo-code for UDBM. Algorithm~\ref{alg:training} details the training, while Algorithm~\ref{alg:inference} presents the unified sampling (supporting DDPM, DDIM, and single-step mapping).

\begin{algorithm}[tb]
   \caption{Training of UDBM ($\mathbf{x}_0$-prediction)}
   \label{alg:training}
\begin{algorithmic}
   \STATE {\bfseries Input:} $\mathcal{D}_{hq}, \mathcal{D}_{lq}$, Estimator $\psi$, Network $\mathcal{N}_\theta$. \textbf{Params:} $\lambda_b, \pi_{\text{OT}}, \pi_{\text{EOT}}$.
   \STATE Freeze $\psi$.
   \REPEAT
   \STATE $\mathbf{x}_{hq}, \mathbf{x}_{lq} \sim \mathcal{D}; \ t \sim \mathcal{U}(0, 1); \ \boldsymbol{\epsilon} \sim \mathcal{N}(\mathbf{0}, \mathbf{I})$ \COMMENT{Sample data and noise}
   \STATE $\mathbf{u} \leftarrow \mathcal{U}_{\text{res}}(\psi(\mathbf{x}_{lq}))$ \COMMENT{Estimate pixel-wise uncertainty}
   
   \STATE \emph{// Compute schedule: path ($\alpha, \gamma$) and noise ($\beta$) coeff.}
   \STATE $\pi \leftarrow (1\!-\!\mathbf{u})\pi_{\mathrm{OT}} + \mathbf{u} \pi_{\mathrm{EOT}}; \quad \beta_t \leftarrow {\lambda_{b}(\mathbf{I}\!+\!\mathbf{u}) t(1\!-\!t) + (\mathbf{I}\!+\!\mathbf{u}) t^2}$
   \STATE $\alpha_t \leftarrow t^{\pi} / (t^{\pi} + (1\!-\!t)^{\pi}); \quad \gamma_t \leftarrow 1 - \alpha_t$ \COMMENT{Convexity constraint}
   
   \STATE $\mathbf{x}_t \leftarrow \alpha_t \mathbf{x}_{lq} + \gamma_t \mathbf{x}_{hq} + \beta_t  \boldsymbol{\epsilon}$ \COMMENT{Construct relaxed bridge state}
   \STATE $\hat{\mathbf{x}}_{0} \leftarrow \mathcal{N}_\theta(\mathbf{x}_t, t, \mathbf{u})$ \COMMENT{Network prediction}
   \STATE $\theta \leftarrow \theta - \eta \nabla_\theta \| \hat{\mathbf{x}}_{0} - \mathbf{x}_{hq} \|_1$ \COMMENT{Optimize reconstruction loss}
   \UNTIL{converged}
\end{algorithmic}
\end{algorithm}

\begin{algorithm}[tb]
   \caption{Unified Inference (DDPM/DDIM Compatible)}
   \label{alg:inference}
\begin{algorithmic}
   \STATE {\bfseries Input:} $\mathbf{x}_{lq}$, $\mathcal{N}_\theta$, $\psi$. \textbf{Params:} Steps $N$ (default 1), $\eta \in [0, 1]$.
   \STATE $\mathbf{u} \leftarrow \mathcal{U}_{\text{res}}(\psi(\mathbf{x}_{lq})); \quad \tau \leftarrow [1, \dots, 0]$ \COMMENT{Get uncertainty \& time steps}
   
   \STATE \emph{// 1. Relaxed Terminal Initialization ($t=1$)}
   \STATE $\beta_1 \leftarrow{\mathbf{I} + \mathbf{u}}; \quad \mathbf{x}_{\tau_N} \leftarrow \mathbf{x}_{lq} + \beta_1  \mathcal{N}(\mathbf{0}, \mathbf{I})$ \COMMENT{Stochastic sampling}

   \STATE \emph{// 2. Reverse Process}
   \FOR{$i = N$ \textbf{to} $1$}
      \STATE $t \leftarrow \tau_i; \ s \leftarrow \tau_{i-1}$
      \STATE $\hat{\mathbf{x}}_{0} \leftarrow \mathcal{N}_\theta(\mathbf{x}_t, t, \mathbf{u})$ \COMMENT{Predict $\mathbf{x}_0$ from current state}
      \STATE $\boldsymbol{\epsilon}_{\text{pred}} \leftarrow (\mathbf{x}_t - \alpha_t \mathbf{x}_{lq} - \gamma_t \hat{\mathbf{x}}_{0}) \oslash \beta_t$ \COMMENT{Derive noise from prediction}
      
      \STATE \emph{// Compute posterior $\tilde{\sigma}_t$ and update direction}
      \STATE $\tilde{\sigma}_t \leftarrow \eta \sqrt{\beta_s^2 (\beta_t^2 - \beta_s^2) / \beta_t^2}$ \COMMENT{$\eta=0$ for DDIM, $\eta=1$ for DDPM}
      \STATE $\mathbf{x}_{\text{mean}} \leftarrow \alpha_s \mathbf{x}_{lq} + \gamma_s \hat{\mathbf{x}}_{0} + \sqrt{\beta_s^2 - \tilde{\sigma}_t^2}  \boldsymbol{\epsilon}_{\text{pred}}$
      
      \IF{$s > 0$}
         \STATE $\mathbf{x}_s \leftarrow \mathbf{x}_{\text{mean}} + \tilde{\sigma}_t  \mathcal{N}(\mathbf{0}, \mathbf{I})$ \COMMENT{Stochastic update}
      \ELSE
         \STATE $\mathbf{x}_0 \leftarrow \hat{\mathbf{x}}_{0}$ \COMMENT{At $s=0$, $\beta_0=0$, directly output prediction}
      \ENDIF
   \ENDFOR
   \STATE \textbf{Return} $\text{clip}(\mathbf{x}_0, 0, 1)$
\end{algorithmic}
\end{algorithm}

\section{Experimental Details}

\subsection{Network Architecture}
\label{app:network}

In this section, we provide a detailed specification of the network architectures employed in our proposed method. To achieve a balance between computational efficiency and restoration performance, we provide various versions of the networks. Our USBM consists of two networks: $\psi(\cdot)$ for uncertainty prediction, and $\mathcal{N}_{\text{den}}(\cdot)$ for denoising.

\subsection*{Backbone Architecture: Modified NAFNet}

Following the design of Refusion \citep{luo2023refusion}, both $\psi(\cdot)$ and $\mathcal{N}_{\text{den}}(\cdot)$ utilize a U-shaped encoder-decoder structure built upon Modified Nonlinear Activation Free Blocks (NAFBlocks).

\textbf{The NAFBlock.} As detailed in Refusion, the standard NAFBlock replaces computationally heavy nonlinear activation functions with a \textbf{SimpleGate} mechanism. Given feature map $X$ split into $X_1, X_2$, it computes:
\begin{equation}
    \text{SimpleGate}(X_1, X_2) = X_1 X_2
\end{equation}

\textbf{Time-Conditioning.} Since our method operates within a probabilistic diffusion framework, we inject time-step information $t$ into the network. The time embedding is processed by a Multi-Layer Perceptron (MLP) to generate channel-wise scale ($\gamma$) and shift ($\beta$) parameters, which modulate the feature maps within the NAFBlock, ensuring adaptivity to the noise level at each diffusion step.

\subsection*{Scalable Network Configurations}

To thoroughly evaluate our method across different computational budgets, we designed a continuous family of network variants, denoted as $v1$ through $v4$. These variants are derived by scaling two primary hyperparameters:
\begin{enumerate}
    \item \textbf{Base Channel Width ($C$):} Denoted as dimensions of the first layer in the network.
    \item \textbf{Bottleneck Depth:} The number of NAFBlocks in the different layers in the network.
\end{enumerate}

Table~\ref{tab:variants} details the specific hyperparameters, parameter counts, and computational costs (FLOPs) for these variants.

\begin{table}[h]
    \centering
    \caption{\textbf{Specifications of Network Variants.} The Depth Configuration refers to the number of blocks in the bottleneck of the $[1, 1, 1, N]$ structure. FLOPs are calculated on a $256 \times 256$ input patch.}
    \label{tab:variants}

    \begin{tabular}{lcccc}
        \toprule
        \textbf{Variant ID} & \textbf{Base Width ($C$)} & \textbf{Bottleneck Depth ($N$)} & \textbf{Params (M)} & \textbf{FLOPs (G)} \\
        \midrule
        \textbf{v1} & 48 & $[1, 1, 1, 28]$ & 43.14 & 35.73 \\
        \textbf{v2} & 32 & $[1, 1, 1, 16]$ & 12.88 & 11.14 \\
        \textbf{v3} & 24 & $[1, 1, 1, 28]$ & 10.88 & 9.10 \\
        \textbf{v4} & 16 & $[1, 1, 1, 28]$ & 3.26  & 2.88 \\
        \textbf{v5} & 16 & $[1, 1, 1, 10]$ & 2.45  & 2.26 \\
        \bottomrule
    \end{tabular}
\end{table}
\begin{table}[h]
    \centering
    \small
    \caption{\textbf{Configuration of Proposed Model Versions (S/M/L).} The composition column denotes $\psi(\cdot) + \mathcal{N}_{\text{den}}(\cdot)$. Total Params and FLOPs are the sum of both networks.}
    \label{tab:ensembles}

    \begin{tabular}{lccccc}
        
        \toprule
        \textbf{Model Version} & \textbf{Uncertainty $\psi(\cdot)$} & \textbf{Denoising $\mathcal{N}_{\text{den}}(\cdot)$} & \textbf{Composition} & \textbf{Total Params (M)} & \textbf{Total FLOPs (G)} \\
        \midrule
        \textbf{UDBM-S (Small)}  & \textbf{v4} (16, 28) & \textbf{v5} (16, 10) & $v3 + v4$ & 5.71 & 5.14 \\
        \textbf{UDBM-M (Medium)} & \textbf{v3} (24, 28) & \textbf{v4} (16, 28) & $v2 + v3$ & 14.14 & 11.98 \\
        \textbf{UDBM-L (Large)}  & \textbf{v1} (48, 28) & \textbf{v2} (24, 28) & $v1 + v2$ & 56.02 & 46.85 \\
        \bottomrule
    \end{tabular}
\end{table}

\subsection*{Final Model Ensembles (S, M, L)}

We instantiate the $\psi(\cdot)$ and $\mathcal{N}_{\text{den}}(\cdot)$ using combinations of the variants defined above. 

We propose three model versions: Small (S), Medium (M), and Large (L). The detailed composition is presented in Table~\ref{tab:ensembles}.


\subsection{Training Details}
\label{sec:train_detail}

To ensure a rigorous and fair comparison, we strictly standardized the training methods for all competing methods across both {Task-Specific} and {All-in-One} settings. Specifically, with the exception noted below, all comparison methods were trained from scratch using the identical training and testing splits. For each method, we utilized the officially released open-source codes. We adhered strictly to the original training strategies, including optimization schedules, augmentations, and hyperparameters.

\textbf{Note:} Following previous work~\cite{zheng2024selective}, the results of Task-Specific settings in Tab.~\ref{tab:aio5} are obtained from {separate models trained independently on each corresponding dataset}.

\subsection{Dataset Details}
\label{sec:detail_dataset}
We evaluate our model on five image restoration tasks. The specific configurations for training and testing are detailed below:

\begin{itemize}
    \item \textbf{Image Deraining:} We utilize the merged datasets~\cite{jiang2020multi,wang2022restoreformer} which cover diverse rain streaks and densities. The dataset consists of 13,712 pairs for training and 4,298 pairs for testing. For zero-shot generalization, we use the real-world \emph{Practical}~\cite{yang2017deep} dataset.
    
    \item \textbf{Low-light Enhancement:} The LOL~\cite{wei2018deep} dataset is used as the primary benchmark, comprising 485 training pairs and 15 testing pairs. We further evaluate generalization on the real-world MEF~\cite{ma2015perceptual}, NPE~\cite{wang2013naturalness}, and DICM~\cite{lee2013contrast} datasets.
    
    \item \textbf{Image Desnowing:} We employ the Snow100K~\cite{liu2018desnownet} dataset, containing 50,000 training pairs and 50,000 testing pairs. Evaluations are performed on the Snow100K-S and Snow100K-L subsets, as well as real-world snow images.
    
    \item \textbf{Image Dehazing:} We adopt the Outdoor Training Set (OTS) from the RESIDE~\cite{li2018benchmarking} dataset, which contains 313,950 training images and 500 testing images collected in real-world outdoor environments.
    
    \item \textbf{Image Deblurring:} The GoPro~\cite{nah2017deep} dataset serves as the benchmark with 2,103 training pairs and 1,111 testing pairs. Zero-shot generalization is conducted on the HIDE~\cite{shen2019human}, RealBlur-J~\cite{rim2020real}, and RealBlur-R~\cite{rim2020real} datasets.

\end{itemize}

\section{More Experiments Results}
\label{sec:exp_result}

\subsection{Impact of Inference Steps: In-Distribution vs. Out-of-Distribution}
\label{sec:supp_ablation_steps}

In the main paper, we demonstrated that UDBM achieves state-of-the-art performance using a single inference step ($N=1$) for the All-in-One benchmark. Here, we provide a deeper analysis of the impact of sampling steps, distinguishing between \textbf{In-Distribution (ID)} and \textbf{Out-of-Distribution (OOD)} scenarios. This distinction reveals the geometric properties of our learned transport trajectory under different domains.

\paragraph{In-Distribution (ID) Saturation.}
Table~\ref{tab:supp_steps} summarizes the results on the five tasks seen during training (ID).
As discussed in Appendix~\ref{sec:geometric_validation}, our proposed path schedule (Eq.~\eqref{eq:path_sigmoid_full1}) explicitly rectifies the transport mean into a linear geodesic ($\gamma_t = \mathbf{I} - \alpha_t$) during training. Theoretically, for a perfectly straight trajectory, the truncation error of the first-order Euler solver is zero.
Consequently, increasing $N$ from 1 to 5 does not yield performance gains and may even cause marginal degradation due to the accumulation of neural network approximation errors at each step.

\paragraph{Out-of-Distribution (OOD) Refinement.}
However, the behavior changes when applying the model to unseen complex degradations. We conducted an ablation on the \textbf{CDD11} dataset (unseen composite degradations) with inference steps $N \in \{1, 2, 4, 8, 10\}$.
As shown in Table~\ref{tab:ood_steps_analysis}, unlike the ID scenario, increasing the inference steps on OOD data may lead to a improvement in perceptual quality.

\textbf{Analysis:} This phenomenon can be attributed to \emph{Manifold Shift}. While the transport trajectory is perfectly rectified for the training distribution, unseen degradations may lie slightly off the learned linear manifold, introducing non-zero geometric curvature to the optimal transport path. In this context, a single Euler step ($N=1$) might result in a slight overshoot or undershoot. Increasing the sampling steps ($N>1$) allows the ODE solver to perform fine-grained corrections along the curved trajectory, thereby recovering better structural details and reducing artifacts in challenging unseen scenarios.

\begin{table*}[h]
\centering
\caption{Impact of inference steps on \textbf{In-Distribution (ID)} tasks. Increasing steps leads to saturation, confirming the geometric linearity for known domains.}
\label{tab:supp_steps}
\renewcommand{\arraystretch}{1.2}
\setlength{\tabcolsep}{8pt}
\resizebox{0.95\textwidth}{!}{%
\begin{tabular}{c|cc|cc|cc|cc|cc|cc}
\toprule
\multirow{2}{*}{\textbf{Steps}} & \multicolumn{2}{c|}{\textbf{Rain}} & \multicolumn{2}{c|}{\textbf{Low-light}} & \multicolumn{2}{c|}{\textbf{Snow}} & \multicolumn{2}{c|}{\textbf{Haze}} & \multicolumn{2}{c|}{\textbf{Blur}} & \multicolumn{2}{c}{\textbf{Average}} \\
 & \textbf{P}$\uparrow$ & \textbf{S}$\uparrow$ & \textbf{P}$\uparrow$ & \textbf{S}$\uparrow$ & \textbf{P}$\uparrow$ & \textbf{S}$\uparrow$ & \textbf{P}$\uparrow$ & \textbf{S}$\uparrow$ & \textbf{P}$\uparrow$ & \textbf{S}$\uparrow$ & \textbf{P}$\uparrow$ & \textbf{S}$\uparrow$ \\
\midrule
\textbf{1} & \textbf{32.06} & 0.917 & 26.55 & \textbf{0.915} & 34.00 & \textbf{0.943} & \textbf{39.88} & \textbf{0.996} & 30.58 & \textbf{0.895} & \textbf{32.61} & \textbf{0.933} \\
2 & 32.17 & \textbf{0.918} & 26.58 & 0.914 & 34.01 & \textbf{0.943} & 39.17 & \textbf{0.996} & \textbf{30.61} & \textbf{0.895} & 32.51 & \textbf{0.933} \\
5 & 32.16 & \textbf{0.918} & \textbf{26.59} & 0.915 & 33.98 & \textbf{0.943} & 39.03 & \textbf{0.996} & 30.60 & \textbf{0.895} & 32.47 & \textbf{0.933} \\
\bottomrule
\end{tabular}%
}
\end{table*}

\begin{table}[h]
\centering
\caption{Impact of inference steps on \textbf{Out-of-Distribution (OOD)} data (CDD11 dataset). Unlike ID tasks, multi-step inference may provides perceptual gains for unseen degradations.}
\label{tab:ood_steps_analysis}
\renewcommand{\arraystretch}{1.2}
\setlength{\tabcolsep}{12pt}
\resizebox{0.6\textwidth}{!}{%
\begin{tabular}{l|ccccc}
\toprule
\textbf{Metric} & \textbf{Step=1} & \textbf{Step=2} & \textbf{Step=4} & \textbf{Step=8} & \textbf{Step=10} \\
\midrule
\textbf{LIQE} $\uparrow$ & 2.898 & 2.912 & 2.925 & \textbf{2.931} & 2.930 \\
\textbf{HyperIQA} $\uparrow$ & 0.539 & 0.544 & 0.551 & 0.553 & \textbf{0.554} \\
\bottomrule
\end{tabular}%
}
\end{table}

\subsection{Sensitivity and Robustness Analysis of Uncertainty Guidance}
\label{sec:uncertainty_analysis}

In the proposed UDBM framework, the pixel-wise uncertainty map $\mathbf{u}$ acts as a control signal steering the stochastic transport dynamics. To evaluate the behavior of UDBM with respect to $\mathbf{u}$, we conduct a comprehensive sensitivity and robustness analysis. Specifically, we investigate how the restoration quality responds to the variation of uncertainty.

We visualize the comparative results in Figure~\ref{fig:uncertainty_sensitivity}. The experiments are categorized into four groups:

\textbf{1. Robustness to Prediction Noise (Additive Noise on $\mathbf{u}$).} 
We inject Gaussian noise into the predicted uncertainty map: $\mathbf{u} = \text{clip}(\mathcal{U}(\psi(\mathbf{x}_{lq})) + k\boldsymbol{\epsilon})$, where $\boldsymbol{\epsilon} \sim \mathcal{N}(0, \mathbf{I})$ and $k \in \{1, 5, 10, 50\}$.  As illustrated in the first row of Figure~\ref{fig:uncertainty_sensitivity}, UDBM exhibits stability across varying noise levels.

\textbf{2. Robustness to Input Corruption.} 
We perturb the input to the uncertainty estimator $\psi(\cdot)$ to test its stability: $\mathbf{u} = \mathcal{U}(\psi(\mathbf{x}_{lq} + k\boldsymbol{\epsilon}))$, with $k \in \{0.5, 1.0, 2.0, 5.0\}$. As displayed in the second row, the restoration remains stable under mild input corruption ($k \le 2.0$). However, as the noise intensity increases ($k=5.0$), structural deterioration and mode collapse are observed. This performance drop is attributed to the fact that the lightweight estimator $\psi(\cdot)$ was not explicitly optimized for noise robustness during training, leading to erroneous guidance signals under severe corruption.

\textbf{3. Controllability via Uncertainty Scaling.} 
To verify the correlation between uncertainty magnitude and restoration strength, we globally scale the predicted map: $\mathbf{u} = \text{clip}(k \cdot \mathcal{U}(\psi(\mathbf{x}_{lq})))$, with $k \in \{0.5, 2.0, 5.0, 10.0\}$. The third row of Figure~\ref{fig:uncertainty_sensitivity} reveals a monotonic relationship between $k$ and the extent of degradation removal. When $k$ is attenuated ($k=0.5$), the process yields incomplete restoration with residual degradation artifacts. Conversely, amplifying $k$ enhances the removal of degradation. However, excessive scaling introduces over-processing artifacts, suggesting that $\mathbf{u}$ must fall within a reasonable range to ensure fidelity.

\textbf{4. Necessity of Spatial Adaptivity (Fixed Uncertainty).} 
We replace the adaptive, pixel-wise uncertainty map with a spatially invariant constant: $\mathbf{u} = k\mathbf{I}$, where $k \in \{0.0, 0.3, 0.5, 1.0\}$. As shown in the fourth row, a zero-uncertainty condition ($k=0$) results in a near-identity mapping, where the degradation remains virtually untouched. As $k$ increases, restoration effects emerge; however, excessive fixed values again lead to artifacts.

\textbf{Conclusion.} Collectively, these analyses corroborate that the uncertainty map $\mathbf{u}$ effectively regulates the transport dynamics of UDBM. Crucially, the framework proves not to be strictly reliant on precise uncertainty estimation, demonstrating robustness against reasonable perturbations in the guidance signal.

\begin{figure*}[t]
    \centering
    \includegraphics[width=0.8\textwidth]{./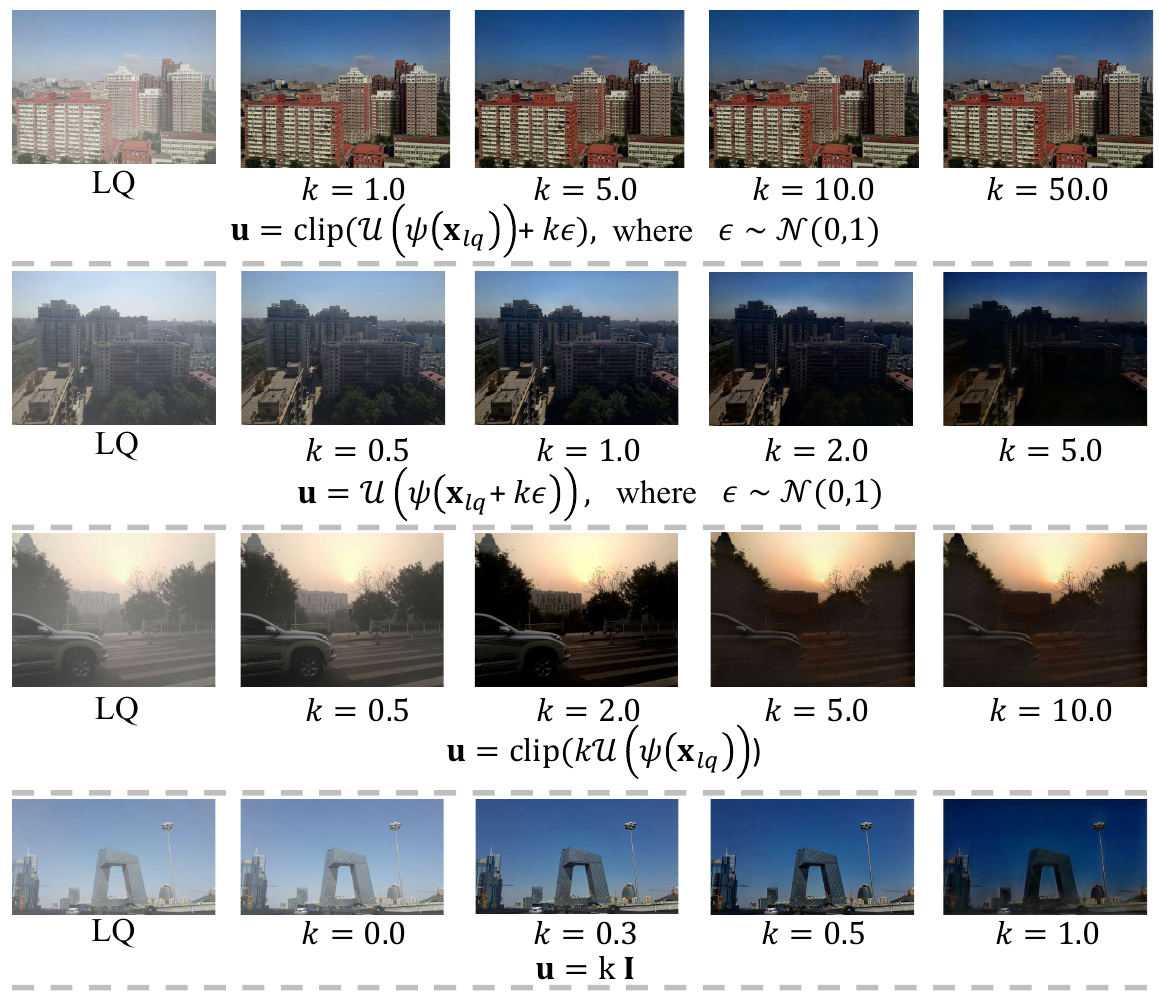}
    \caption{{Sensitivity and Robustness Analysis of UDBM.}
    {Row 1:} Additive noise on $\mathbf{u}$ ($k \in \{1, 5, 10, 50\}$).
    {Row 2:} Input noise to estimator ($k \in \{0.5, 1.0, 2.0, 5.0\}$). 
    {Row 3:} Scaling $\mathbf{u}$ ($k \in \{0.5, 2.0, 5.0, 10.0\}$). 
    {Row 4:} Fixed uncertainty $\mathbf{u}=k\mathbf{I}$ ($k \in \{0.0, 0.3, 0.5, 1.0\}$). }
    \label{fig:uncertainty_sensitivity}
\end{figure*}





\subsection{More Visualization Comparison}
\label{sec:supp_more_vis}

\begin{figure*}[h]
    \centering
    \includegraphics[width=0.9\textwidth]{./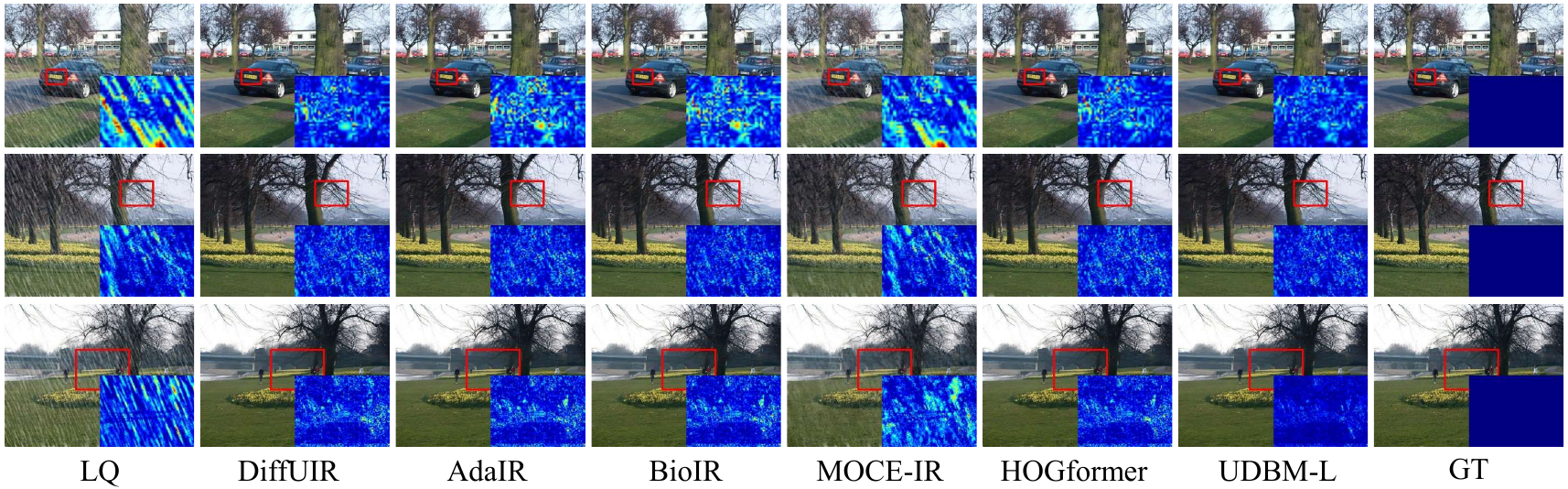}
    \caption{Visual comparison of rain degradation. }
    \label{fig:uncertainty_sensitivity}
\end{figure*}

\begin{figure*}[h]
    \centering
    \includegraphics[width=0.9\textwidth]{./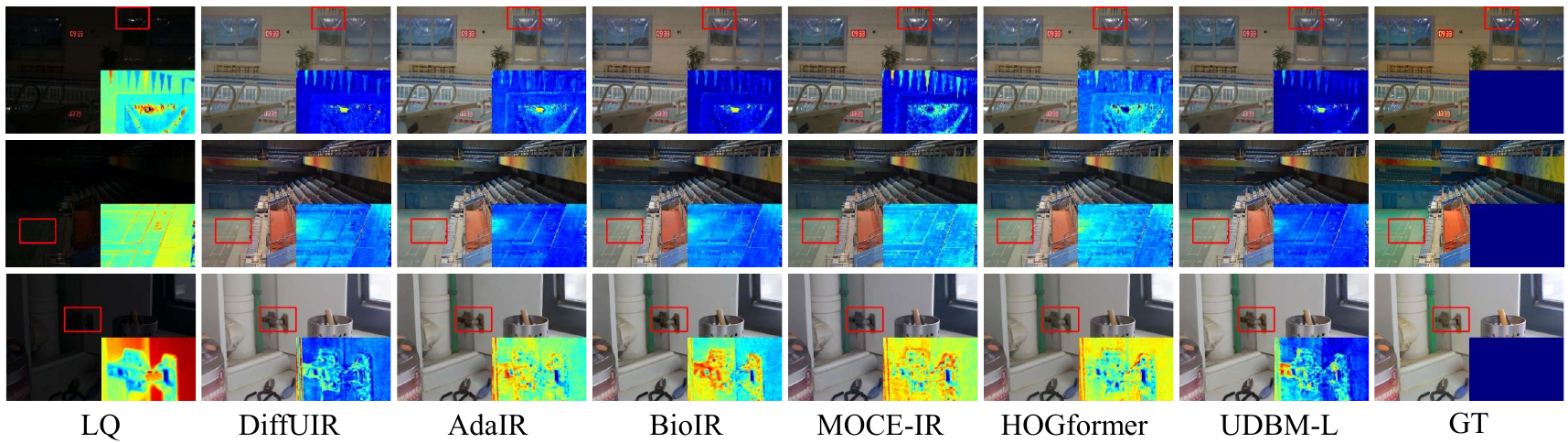}
    \caption{Visual comparison of low-light degradation. }
    \label{fig:uncertainty_sensitivity}
\end{figure*}

\begin{figure*}[h]
    \centering
    \includegraphics[width=0.9\textwidth]{./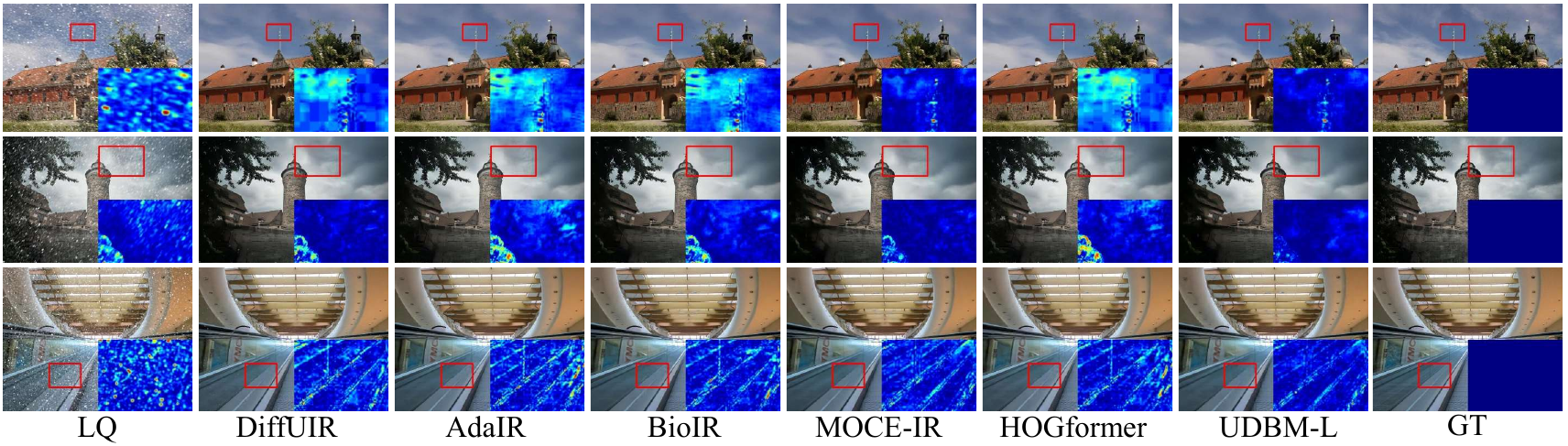}
    \caption{Visual comparison of snow degradation. }
    \label{fig:uncertainty_sensitivity}
\end{figure*}

\begin{figure*}[h]
    \centering
    \includegraphics[width=0.9\textwidth]{./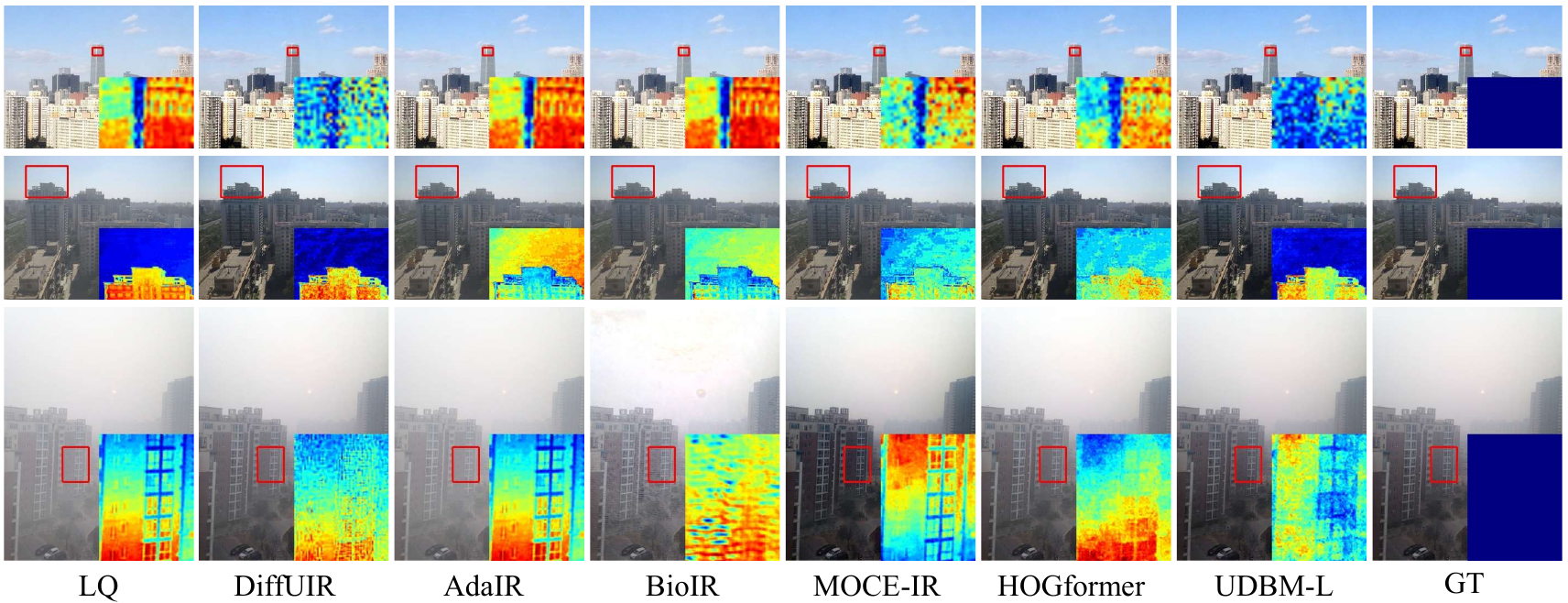}
    \caption{Visual comparison of haze degradation. }
    \label{fig:uncertainty_sensitivity}
\end{figure*}

\begin{figure*}[h]
    \centering
    \includegraphics[width=0.9\textwidth]{./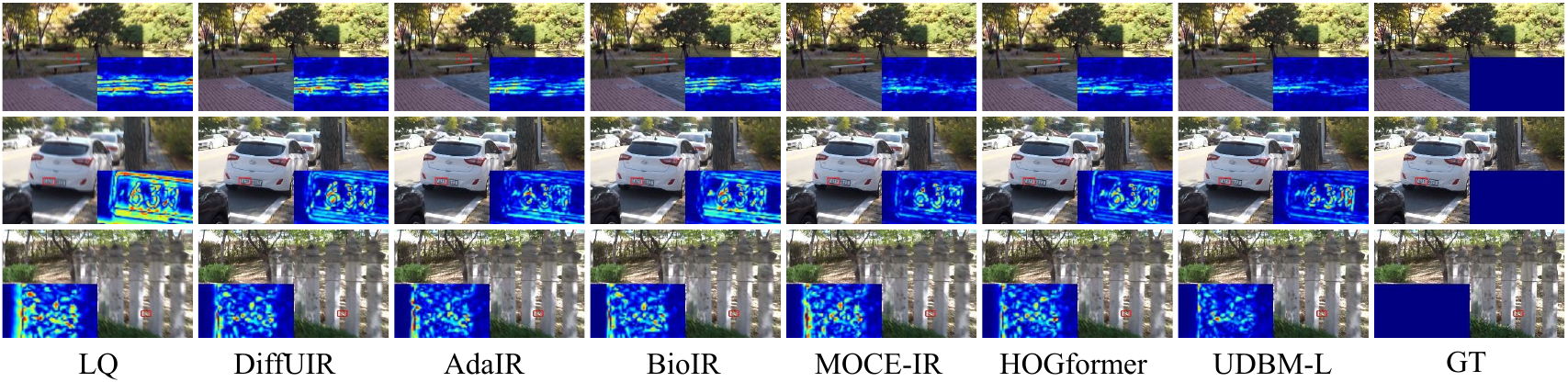}
    \caption{Visual comparison of blur degradation. }
    \label{fig:uncertainty_sensitivity}
\end{figure*}

\begin{figure*}[h]
    \centering
    \includegraphics[width=0.9\textwidth]{./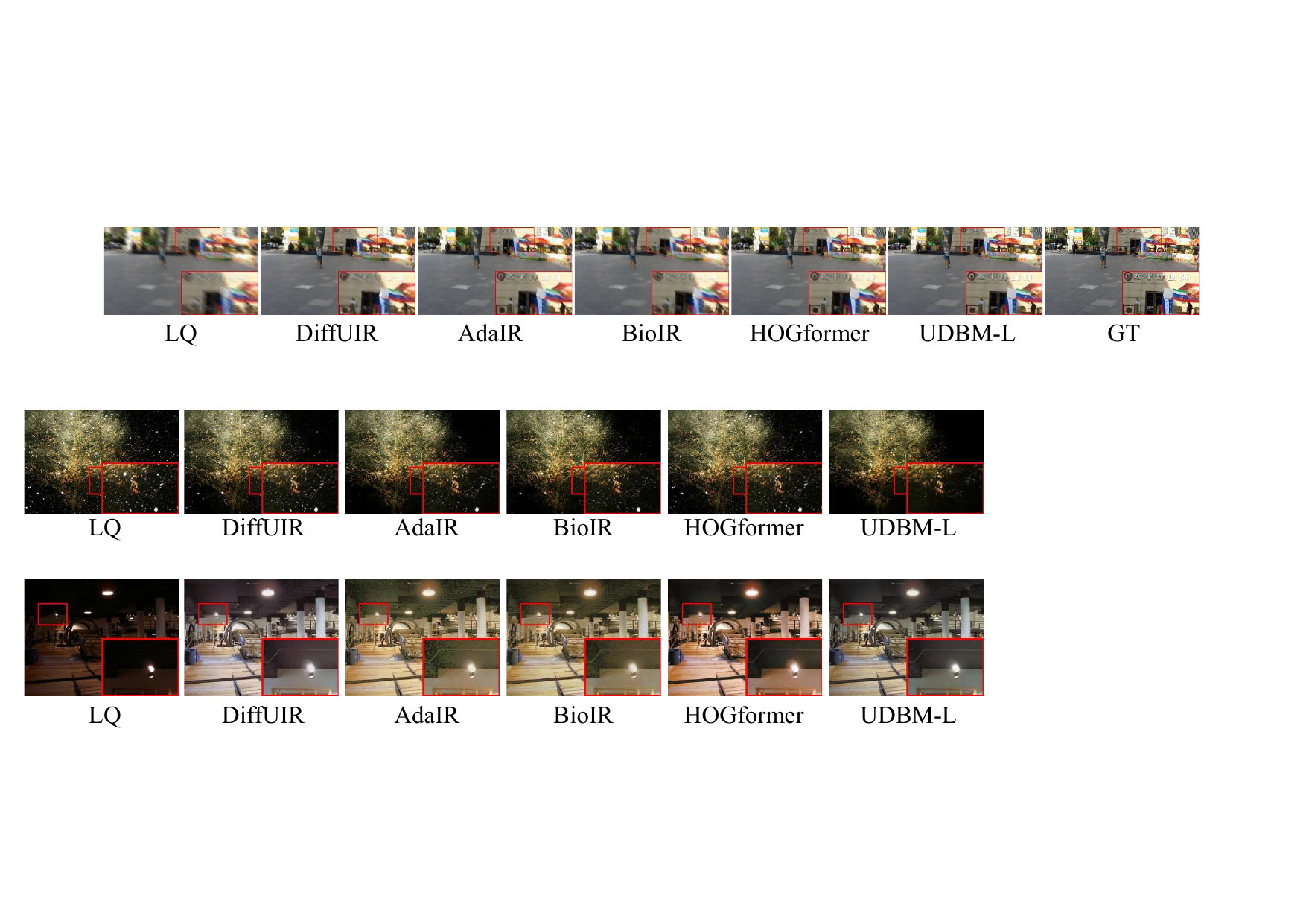}
    \caption{Visual comparison in the real blur scenario. }
    \label{fig:uncertainty_sensitivity}
\end{figure*}
\begin{figure*}[h]
    \centering
    \includegraphics[width=0.9\textwidth]{./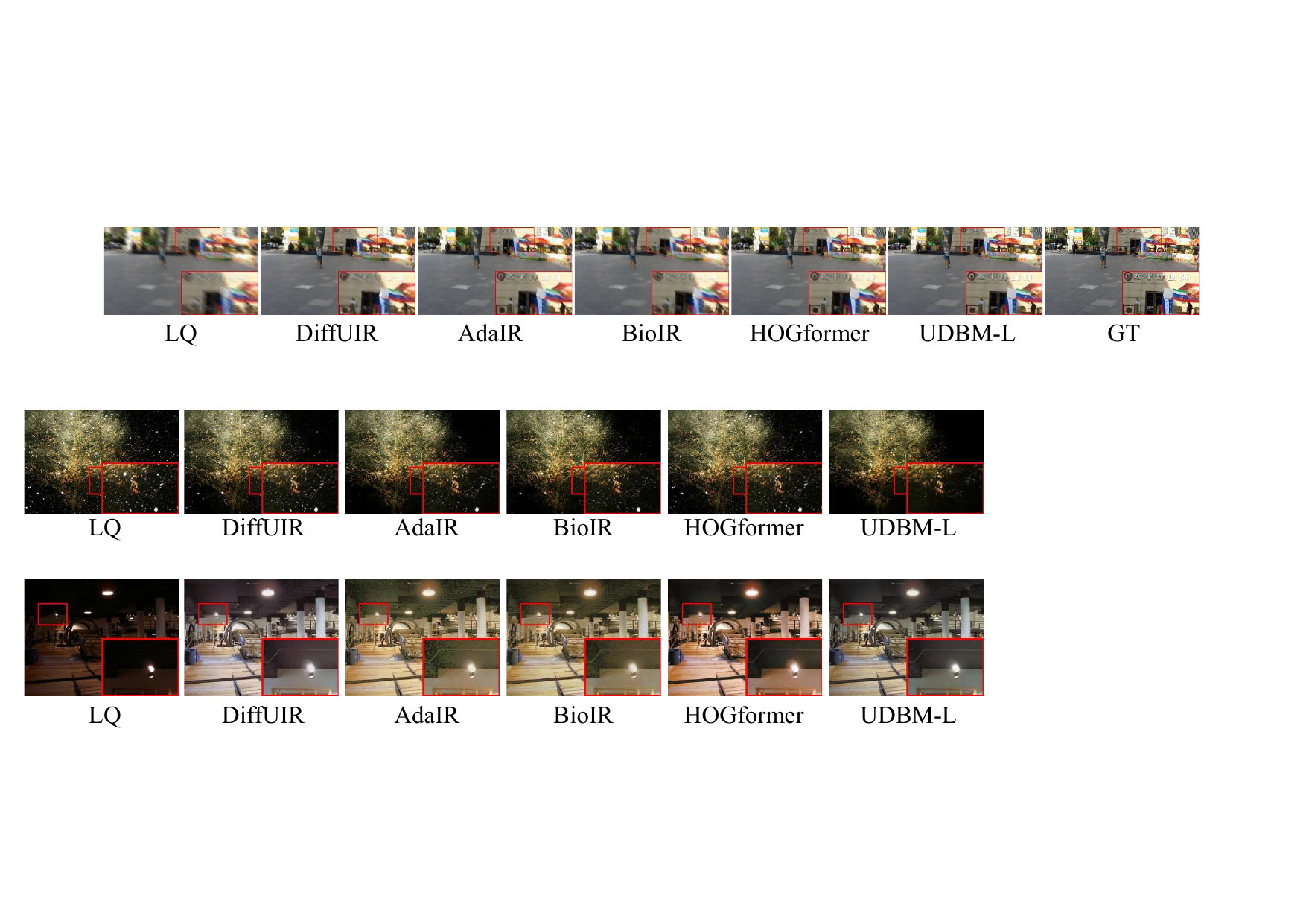}
    \caption{Visual comparison in the real snow scenario. }
    \label{fig:uncertainty_sensitivity}
\end{figure*}

\begin{figure*}[h]
    \centering
    \includegraphics[width=0.9\textwidth]{./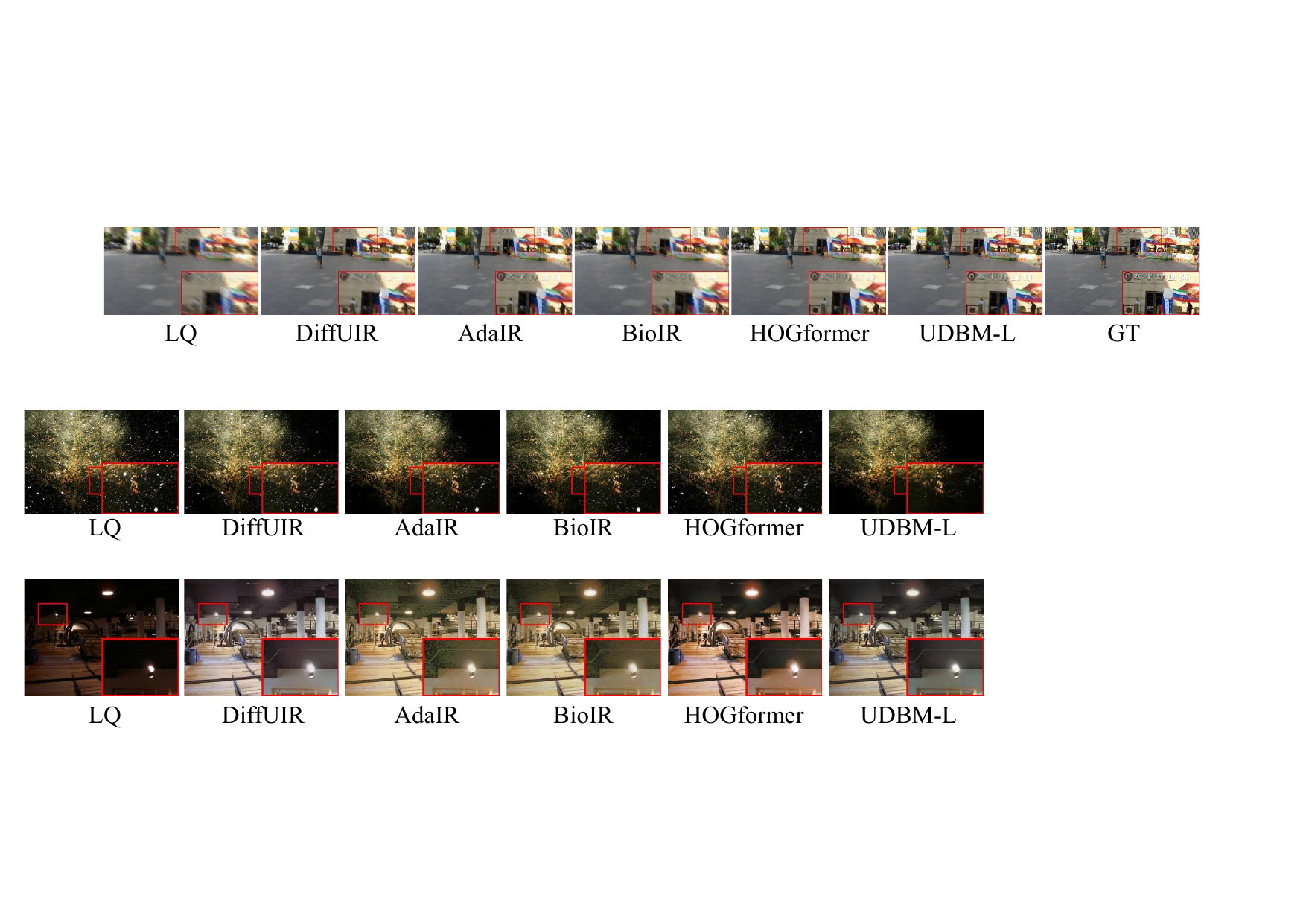}
    \caption{Visual comparison in the real low-light scenario. }
    \label{fig:uncertainty_sensitivity}
\end{figure*}

\begin{figure*}[h]
    \centering
    \includegraphics[width=0.9\textwidth]{./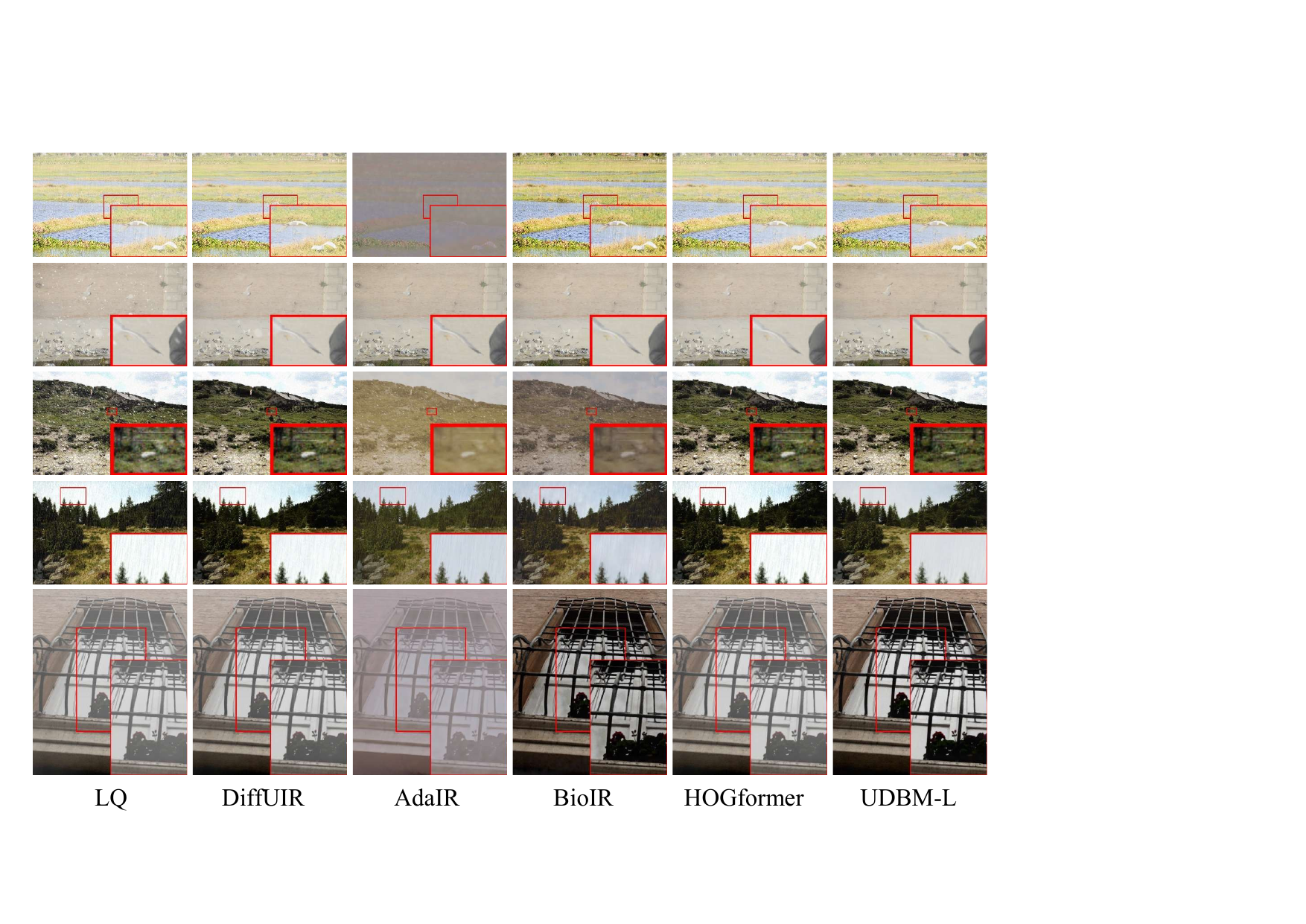}
    \caption{Visual comparison of the complex degradations in the CDD11 dataset. }
    \label{fig:uncertainty_sensitivity}
\end{figure*}

\end{document}